
\documentclass{article}

\usepackage{microtype}
\usepackage{graphicx}
\usepackage{dsfont}
\usepackage{subfigure}
\usepackage{booktabs} 
\usepackage{csquotes}

\usepackage{hyperref}


\usepackage[accepted]{icml2025}
\usepackage{amsmath}
\usepackage{amssymb}
\usepackage{mathtools}
\usepackage{amsthm}
\usepackage{bm}
\usepackage{csquotes}
\usepackage{ragged2e}

\usepackage[capitalize,noabbrev]{cleveref}
\usepackage[user,hyperref,savepos]{zref}

\theoremstyle{plain}
\newtheorem{theorem}{Theorem}[section]
\newtheorem{proposition}[theorem]{Proposition}
\newtheorem{lemma}[theorem]{Lemma}

\newtheorem{remark}[theorem]{Remark}
\theoremstyle{definition}


\usepackage[textsize=tiny]{todonotes}


\usepackage{multirow}       
\usepackage{xspace}         
\usepackage[inline,shortlabels]{enumitem}       

\DeclareMathOperator*{\argmin}{arg\,min}

\newcommand{\method}{\textbf{$\mathsf{FLOW}$}\xspace}
\newcommand{\methodbold}{{$\boldsymbol{\mathsf{FLOW}}$}\xspace}
\newcommand{\methoditalic}{\textit{\textsf{FLOW}}\xspace}


\icmltitlerunning{
Upweighting Easy Samples in Fine-Tuning Mitigates Forgetting
}

\begin{document}

\twocolumn[
\icmltitle{
Upweighting Easy Samples in Fine-Tuning Mitigates Forgetting
}



\icmlsetsymbol{equal}{*}

\begin{icmlauthorlist}
\icmlauthor{Sunny Sanyal}{equal,ut}
\icmlauthor{Hayden Prairie}{equal,ut}
\icmlauthor{Rudrajit Das}{equal,goo}
\icmlauthor{Ali Kavis}{equal,ut}
\icmlauthor{Sujay Sanghavi}{ut}
\end{icmlauthorlist}

\icmlaffiliation{ut}{University of Texas at Austin}
\icmlaffiliation{goo}{Google Research}

\icmlcorrespondingauthor{Sunny Sanyal}{sanyal.sunny@utexas.edu}
\icmlcorrespondingauthor{Hayden Prairie}{haydenprairie@utexas.edu}
\icmlcorrespondingauthor{Rudrajit Das}{dasrudrajit@google.com}
\icmlcorrespondingauthor{Ali Kavis}{kavis@austin.utexas.edu}
\icmlcorrespondingauthor{Sujay Sanghavi}{sanghavi@mail.utexas.edu}



\icmlkeywords{Catastrophic Forgetting, Sample Weighting, Fine-tuning, Pre-trained}

\vskip 0.3in
]



\printAffiliationsAndNotice{\icmlEqualContribution} 

\begin{abstract}
Fine-tuning a pre-trained model on a downstream task often degrades its original capabilities, a phenomenon known as \enquote{catastrophic forgetting}. This is especially an issue when one does not have access to the data and recipe used to develop the pre-trained model. Under this constraint, most existing methods for mitigating forgetting are inapplicable. To address this challenge, we propose a \textit{sample weighting scheme for the fine-tuning data} solely based on the pre-trained model's losses. Specifically, we upweight the easy samples on which the pre-trained model's loss is low and vice versa to limit the drift from the pre-trained model. Our approach is orthogonal and yet complementary to existing methods; while such methods mostly operate on parameter or gradient space, we concentrate on the sample space. We theoretically analyze the impact of fine-tuning with our method in a linear setting, showing that it stalls learning in a certain subspace, which inhibits overfitting to the target task. We empirically demonstrate the efficacy of our method on both language and vision tasks. As an example, when fine-tuning Gemma 2 2B on MetaMathQA, our method results in only a $0.8\%$ drop in accuracy on GSM8K (another math dataset) compared to standard fine-tuning, while preserving $5.4\%$ more accuracy on the pre-training datasets.

\end{abstract}



\section{Introduction}
\label{sec:introduction}

In the modern era of large-scale machine learning, one of the central goals is to design models capable of performing multiple tasks. 
Traditionally, this is achieved by training an appropriately large model over datasets of multiple tasks, ensuring that the model jointly learns multiple tasks at once. 
Unfortunately, it is not viable to repeat this process with every new additional task due to the scale of contemporary models, necessitating effective strategies that can essentially learn without full retraining.
A resource-efficient convention in machine learning is to take a \emph{pre-trained} model which is trained on some vast and diverse dataset, and 
\emph{fine-tune} it on a new dataset/task. 
Such pre-trained models are typically large and expensive to train from scratch but perform well on a variety of tasks while offering a versatile basis for learning a new task. 

Fine-tuning is a delicate process that should ideally serve multiple objectives simultaneously; we would like to use the base model and its capabilities to facilitate learning a strong model on the downstream task, and in the meantime, preserve the existing abilities of the pre-trained model. 
On this particular front, the major challenge in standard, unregulated fine-tuning is the \emph{catastrophic forgetting} phenomenon.
In broad terms, it describes the performance decline of the pre-trained model on previously observed data/tasks after fine-tuning on a new one. 
When the learning process for the downstream task interferes with the previously-learned representations beyond tolerable margins, the pre-trained model loses its prior capabilities and significantly under-performs on previously-learned tasks.

Mitigating catastrophic forgetting is an active area of research with many fundamental questions awaiting solutions. 
The key idea is to constrain the fine-tuning process to prevent the degeneration of the learned representations while guiding the learning of the new task to augment existing capabilities. 
The literature on the topic offers various approaches based on the available knowledge pertaining to the pre-training process. 
In fact, pre-training-specific data availability and how it is treated predominantly dictates the success of mitigating forgetting. 
In many real-life scenarios, however, the data and the training recipe used for generating the pre-trained model are not available \citep{radford2021learning, touvron2023llama, Touvron2023Llama2O, grattafiori2024llama3herdmodels, jiang2024mistral}. 
Naturally, one needs to approach the forgetting phenomenon accordingly to design realistic methods.

Therefore, we focus on the case in which we have \emph{\textbf{no access} to the pre-training-specific information} during the fine-tuning process; we call it the \textbf{\emph{data-oblivious}} setting.
The only piece of information available during fine-tuning is indeed the pre-trained model. 
Therefore, one needs to devise a strategy to regulate and guide the fine-tuning process to preserve the pre-trained model capabilities while learning the new task in the absence of prior knowledge.
Under this challenging setting, we present an answer to the question:
\begin{center}
    \textit{Can we design a principled method that mitigates forgetting during fine-tuning in the data-oblivious setting?}
\end{center}
In this paper, we propose \textbf{F}ine-tuning with Pre-trained \textbf{L}oss-\textbf{O}riented \textbf{W}eighting (\methodbold) to mitigate catastrophic forgetting in the data-oblivious setting. 
Our key insight is upweighting the \enquote{easy} samples on which the pre-trained model's loss is low and vice versa.
We believe that boosting the samples on which the pre-trained model performs well (i.e., has low loss)
will introduce supervised bias to the gradient updates in favor of the pre-trained model. Intuitively, this will prevent the parameters from deviating 
too much from the {initial pre-trained state}, thus mitigating forgetting. 


Some prior papers assign more importance to {samples} with \textit{larger losses} to accelerate the training process \citep{Loshchilov2015OnlineBS, shrivastava2016training, katharopoulos2017biased, kawaguchi2020ordered, dasunderstanding}. 
We follow the reciprocal reasoning; we tweak the fine-tuning process in favor of the pre-trained model by assigning \emph{larger} weights to samples with \emph{smaller} pre-trained loss values. 
We elaborate on this while stating our \textbf{contributions} next.
\begin{enumerate}[topsep=0.1cm]
    \item 
    To mitigate forgetting, we propose \methodbold, which fine-tunes the pre-trained model using a sample-wise weighted loss.
    Inspired by robust optimization ideas, we derive the $i^\text{th}$ sample's weight to be $\exp( -\ell_{i}/\tau)$, where $\ell_{i}$ is the $i^\text{th}$ sample's pre-trained loss and $\tau$ is a parameter which we set as median$(\ell_{i})$ in practice. 
    Thus, our method is essentially \textit{parameter-free}. 
    
    \item 
    We demonstrate the superiority of \method 
    over relevant baselines (model averaging, $\ell_2$ regularization, LoRA, etc.) 
    in both vision and language model experiments. For instance, ResNet-50 fine-tuned with \method on six image classification datasets achieves {$\sim \mathbf{17}$\% 
    higher average accuracy} (over pre-training and fine-tuning data) than standard fine-tuning, while also surpassing other relevant baselines (see \cref{table:main_vision_table}). When fine-tuning Gemma 2 2B on math datasets, the corresponding improvement of \method over standard fine-tuning is $\sim \mathbf{4}\%$ 
    (see \cref{tab:main_lang_table}).
    

    \item We also empirically show that combining \method with existing methods for mitigating forgetting \textit{improves the performance} of the base methods (see \cref{table:complementary_vision_table,tab:lang-ours-baseline}).


    \item We theoretically analyze the effect of fine-tuning with \method for linear models. 
    In particular, the covariance matrix of the fine-tuning data weighted by \method has a small eigenvalue and \textit{training is stalled} along the corresponding eigenvector, \textit{impeding overfitting to the fine-tuning task} (see \Cref{rmk-2}).


\end{enumerate}

We end this section with a preview of the comparison of our method \method with some relevant baselines (in the data-oblivious setting) in \Cref{fig:baseline_figure-1}.
\begin{figure}[t!]
    \centering
    \includegraphics[width=0.95\columnwidth]{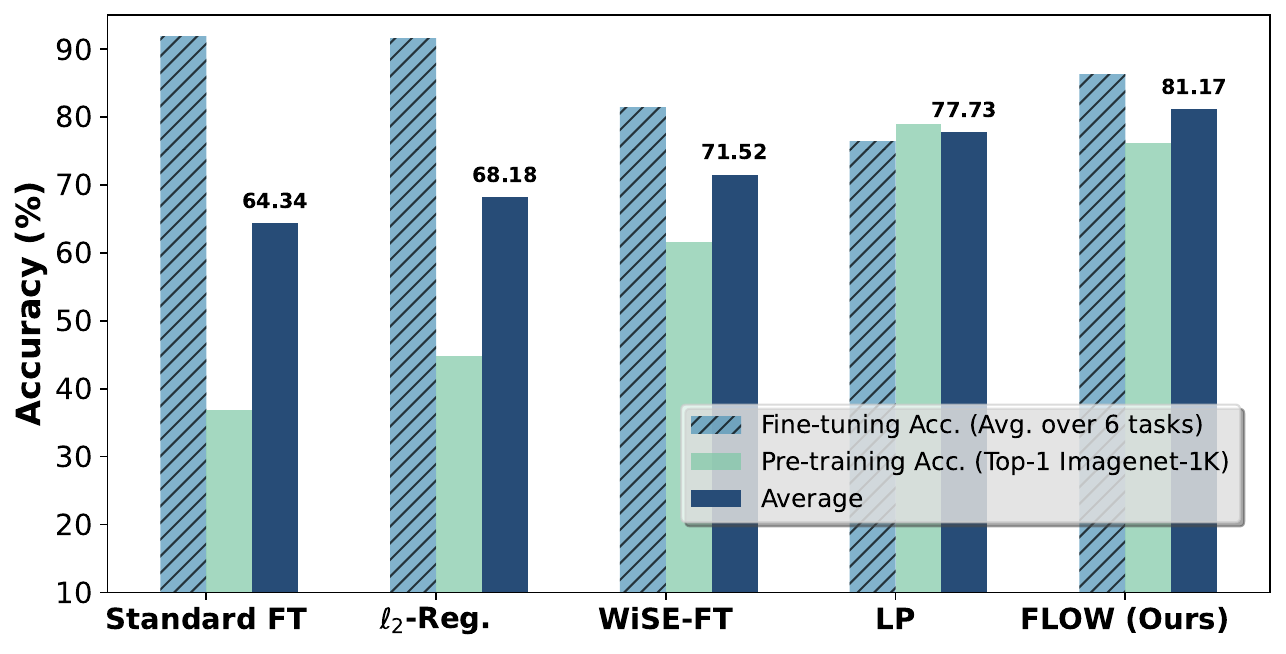}
     \caption{\method versus standard fine-tuning (FT) and relevant baselines for a ResNet-50 model pre-trained on ImageNet-1K (from \Cref{table:main_vision_table}). \methodbold \textbf{achieves the best average accuracy} (between pre-training and target fine-tuning accuracies).}
    \label{fig:baseline_figure-1}
\end{figure}

\section{Related Work}
\label{sec:related-work}
\subsection{Mitigating Catastrophic Forgetting}
We begin by summarizing the vast literature on catastrophic forgetting with a focus on prior works most relevant to our proposed setting.
For a streamlined presentation, we survey prior work in two settings -- data-aware and data-oblivious. 
Due to space limitations, we refer the readers to \cref{app:related-work} for a more detailed and explanatory review of the literature. 

\subsubsection{Data-aware approaches}
The majority of the approaches for mitigating 
forgetting assume task-specific knowledge access to different extents; either (a subset of) the pre-training dataset itself or some information/statistic computed from pre-training data. Below, we describe the data-aware approaches based on how they make use of task-specific knowledge.

\textbf{Regularization-based methods. } This line of work aims to preserve existing capabilities by keeping the parameters close to the pre-trained model. The key idea is to introduce task-specific regularization to penalize modifications along the ``important'' directions for the old tasks 
\citep{ahn2019uncertainty}. 
\citet{kirkpatrick2016overcoming} introduces the elastic weight consolidation (EWC) algorithm, which estimates the important directions by approximating the Fisher information matrix. 
Several variants of EWC have been proposed \citep{schwarz2018progress, ritter2018online, Lee2020ContinualLW, liu2018rotate}. \citet{zenke2017continual, aljundi2018memory} infer the importance of each parameter by their variational effect on the outputs. In a similar spirit, \citet{lee2017overcoming} aims to match the posteriors of the pre-trained and fine-tuned models. 

\textbf{Optimization-driven methods. } Another perspective to mitigating forgetting is guiding the optimization process by constraining the algorithms directly as opposed to manipulating the loss function. 
The core idea is to keep track of \enquote{important directions} for the old tasks, and train on the new task \enquote{orthogonally.} 
This could be done by storing prior data samples or gradients in a buffer \citep{lopezpaz2017gradient, farajtabar2020orthogonal, chaudhry2018efficient} or by incrementally expanding the subspace of important directions without storing task-specific information \citep{zeng2019continual, wang2021training, wang2023orthogonal}.

\textbf{Replay-based methods. } A more direct approach is to store old task samples in buffers and introduce them into the training process for the new task to refresh task-specific representations periodically.
There are several components to such methods. Some prior work focus on \emph{data selection} based on the nature of old data access \citep{rebuffi2017icarl, aljundi2019gradient, Bang2021RainbowMC, chaudhry2019continual, isle2018selective, delange2021continual, borsos2020coresets, tiwari2021gcr} (e.g., streaming versus on-demand). Another important perspective is the \emph{re-introduction strategy} of the stored information into the fine-tuning process \citep{silver2002task, li2016learning, Triki2017EncoderBL, Lee2019OvercomingCF, dhar2019learning, rebuffi2017icarl, riemer2019learning, chaudhry2019continual, delange2021continual, tiwari2021gcr}.

\textbf{Architecture-driven methods. } Another technique to limit interference between tasks is to allocate a separate trainable set of parameters per task. 
This could be done by initializing a sub-networks per new task \citep{Rusu2016ProgressiveNN, aljundi2017expertgate, patrick2020routing, rajasegaran2019random, Ramesh2021ModelZA, wang2023incorporating, wang2022coscl}, gradually expanding the parameters of a base network \citep{yoon2018lifelong, Ostapenko2019LearningTR, hung2019compacting}, or segregating a fixed model into task-specific subsets \citep{mallya2018piggyback, kang2022forgetfree, serra2018overcoming, worstman2020supermasks, Mallya2017PackNetAM, gurbuz2022nispa, jung2020continual}. 
The main downside with this line of work is that task identities must be known for inference to (de)activate relevant sub-networks \citep{aljundi2017expertgate}. 

\subsubsection{Data-oblivious approaches}
In the less-explored data-oblivious setting, it is particularly challenging to devise a principled approach, as there is no access to any data-specific information, except for the pre-trained model.
One line of work explores the simple idea of ``model averaging'' (MA) which essentially does a convex combination of the parameters of the pre-trained model and that of the fully fine-tuned model for the new task. 
MA and more sophisticated model merging variants have been studied in relevant context to forgetting \citep{lubana2021quadratic, wortsman2021robust, ilharco2023editing, lin2023mitigating, kleiman2025soupgomitigatingforgetting}. 
Some recent works \citet{chen2024mofo,panda2406lottery} introduce different strategies to selectively update a subset of parameters in a pre-training data-agnostic manner.
Finally, \citet{biderman2024lora} has shown that LoRA \cite{hu2022lora} could be effective for mitigating catastrophic forgetting in transformers. 
Unlike the methods discussed above which focus on the parameter or gradient space, ours focuses on the sample space.

\subsection{Sample Selection and Weighting}
Sample-wise importance selection/weighting has been studied in optimization papers \citep{needell2014stochastic, zhao2015stochastic, Alain2015VarianceRI, stich2017safe} and ML papers \citep{Loshchilov2015OnlineBS, shrivastava2016training, katharopoulos2017biased, katharopoulos2018not, kawaguchi2020ordered, dasunderstanding} to speed up the optimization/training process by reducing the variance of the gradient updates. 
Such papers advocate focusing on \enquote{hard} samples with high-gradient norms or losses. In contrast, we focus on \enquote{easy} samples to mitigate forgetting. Another line of work focuses on robust learning under uncertain data distributions. Distributionally robust optimization (DRO) proposes to minimize the worst-case weighted loss, where the sample weights are constrained or regularized \citep{ben2013robust,levy2020large,duchi2021learning,qi2021online}. 
Some recent works \citep{xie2024doremi,chen2024take,anonymous2025dynamic} propose dynamic sample-weighting strategies for LLM training based on the previously discussed ideas. 

\section{Notation and Definitions}
$\mathds{1}(.)$ denotes the indicator variable. For any $n \in \mathbb{N}$, the set $\{1,\ldots,n\}$ is denoted by $[n]$. 
Vectors and matrices are in lowercase and uppercase bold font, respectively. 
The $\ell_p$ norm of a vector $\mathbf{v}$ is denoted by $\|\mathbf{v}\|_p$. The inner product between two vectors $\mathbf{v}$ and $\mathbf{v}'$ is denoted as $\langle \mathbf{v}, \mathbf{v}' \rangle$. A set of $n$ linearly independent $n$-dimensional vectors $\{\mathbf{u}_1,\ldots,\mathbf{u}_n\}$ is said to be 
an orthonormal basis for $\mathbb{R}^n$ if $\langle \mathbf{u}_i, \mathbf{u}_j \rangle = \mathds{1}(i=j)$. A vector $\mathbf{v} = [\text{v}_1,\ldots,\text{v}_n]^\top$ is said to belong to the $n$-dimensional probability simplex $\bm{\Delta}_n$ if $\sum_{i=1}^n \text{v}_i = 1$ and $\text{v}_i \geq 0$ $\forall$ $i \in [n]$. For any $n \in \mathbb{N}$, $\mathbf{I}_n$ denotes the identity matrix of dimension $n$. 

\section{Proposed Algorithm}
\label{sec:method}
Our proposed algorithm consists of two main steps: 
\begin{enumerate*}
[(\itshape i\hspace*{1pt})]
    \item computing weights for the samples based on their respective pre-trained loss values; and
    \item fine-tuning with a weighted loss wherein the per-sample losses are scaled by their respective weights.
\end{enumerate*}
The sample-wise weights are computed once and used throughout the entire fine-tuning process. 
We formally state our proposed fine-tuning protocol in \cref{alg:method} and delve into its design details in the sequel.


\begin{algorithm}[ht] 
   \caption{\textbf{F}ine-tuning with Pre-trained \textbf{L}oss-\textbf{O}riented \textbf{W}eighting ({\methodbold})}
   \label{alg:method}
\begin{algorithmic}
    \STATE {\bfseries Input:} 
    Pre-trained model $\bm{\theta}^{*}$, dataset $\{(\mathbf{x}_i, \text{y}_i)\}_{i=1}^n$ for the 
    new task, and temperature parameter $\tau$. 
    \vspace{0.15 cm}
    \STATE $f_i(\bm{\theta}) \rightarrow$ $i^\text{th}$ sample's loss at $\bm{\theta}$, with a non-negative loss function (e.g., cross-entropy loss). 
    \vspace{0.15 cm}
    \STATE 1. Compute sample weights: $w_i = \exp \left(- \frac{f_i(\bm{\theta}^{*})}{\tau} \right)$.
    \vspace{0.15 cm}
    \STATE 2. Weighted loss: $\mathcal L(\bm{\theta}) = \sum_{i=1}^n w_i f_i(\bm{\theta}).$
    \vspace{0.15 cm}
    \STATE 3. Fine-tune with weighted loss: $\widehat{\bm{\theta}}^{*} := \argmin\limits_{\bm{\theta}} \mathcal L(\bm{\theta}).$
    \STATE \textbf{Output:} 
    Fine-tuned model $\widehat{\bm{\theta}}^{*}$.
    \end{algorithmic}
\end{algorithm}

\begin{remark}
    Depending on the setting, our model might have task-specific components, such as per-task prediction heads (e.g., in vision). 
    \cref{alg:method} can be slightly modified in the presence of task-specific components to enhance performance. {Refer to \Cref{alg:multi-head} for these modifications.} 
\end{remark}

\begin{remark}
\label{rmk-tau}
As a heuristic prescription, we set $\tau = \textup{median}\left(f_i(\bm{\theta}^{*})\right)$ in all our experiments (unless otherwise stated), which leads to consistently good 
performance. \textbf{Thus, our algorithm is essentially parameter-free in practice}.
\end{remark}

\textbf{Algorithm design.} Our main intuition is that we can control forgetting by not drifting away too much from the pre-trained model (i.e., $\bm{\theta}^{*}$) during fine-tuning. In the presence of pre-training data, this is done by introducing data-dependent constraints on the parameter space or gradient space. Since we have \emph{no access} to pre-training data, we 
redirect our focus towards \textit{strategies on the \textbf{sample space} depending only on the \textbf{pre-trained model}}.

To that end, we propose to infer the easiness 
of each sample of the fine-tuning dataset with respect to the pre-trained model, based on the per-sample losses $f_i(\bm{\theta}^{*})$'s (see Alg. \ref{alg:method}). We say that the $i^\text{th}$ sample is \enquote{easy} if $f_i(\bm{\theta}^{*})$ is \enquote{small}.\footnote{This is not a formal definition and so \enquote{small} is not quantified.} Intuitively, prioritizing the \enquote{easy} samples during fine-tuning would limit the drift from $\bm{\theta}^*$. On the other hand, over-focusing on the \enquote{easy} samples would probably lead to poor performance on the fine-tuning task. Thus, it is important to strike a balance. 

Let us formalize these ideas mathematically. For fine-tuning on the new task, let us consider the objective function $\mathcal{L}_{\bm{\pi}}(\bm{\theta}) = \sum_{i=1}^n \pi_i f_i(\bm{\theta})$, where $\bm{\pi} = [\pi_1,\ldots,\pi_n]^\top$ is a \textbf{static} design-choice 
$\in \bm{\Delta}_n$ (i.e., $\sum_{i=1}^n \pi_i = 1$ and $\pi_i \geq 0$ $\forall$ $i \in [n]$) which we allow to only depend on the pre-trained model's losses $\{f_i(\bm{\theta}^{*})\}_{i=1}^n$ (and \textit{not} the current model's losses $\{f_i(\bm{\theta})\}_{i=1}^n$). We would like to design $\bm{\pi}$ so that: 
\begin{enumerate} [topsep=0.1cm]
    \item for all $i \neq j$ such that $f_i(\bm{\theta}^{*}) \leq f_j(\bm{\theta}^{*})$, $\pi_i \geq \pi_j$,
    \item $\bm{\pi}$ does not concentrate around one or a few samples but rather spreads uniformly over the samples.
\end{enumerate}
These two requirements can be enforced by minimizing the following function (w.r.t. $\bm{\pi}$) involving negative \emph{entropic regularization}:
\begin{equation}
    g(\bm{\pi}) = \sum_{i=1}^n \pi_i f_i(\bm{\theta}^{*}) 
    + \tau \sum_{i=1}^n \pi_i \log \pi_i.
\end{equation}
Here $\tau > 0$ is a parameter controlling the extent of the second requirement which is facilitated by the entropy term. 
We now state the minimizer of $g(\bm{\pi})$ (proof is in \Cref{pf-prop-pi}). 
\begin{proposition} 
\label{prop-pi}
Let $\bm{\pi}^{*} = [\pi_1^{*}, \ldots, \pi_n^{*}]^\top = \argmin\limits_{\bm{\pi} \in \bm{\Delta}_n} g(\bm{\pi})$. Then we have  ${\pi}_i^{*} = \frac{1}{Z} \exp \left(- \frac{f_i(\bm{\theta}^{*})}{\tau} \right)$, where $Z$ is the normalizing factor. 
\end{proposition}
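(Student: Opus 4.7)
The plan is to solve this constrained optimization problem via Lagrange multipliers, which is the standard route for entropy-regularized linear objectives on the simplex. First I would note that the objective $g(\bm{\pi})$ is strictly convex on $\bm{\Delta}_n$ (the linear term is trivially convex and the negative-entropy term $\tau \sum_i \pi_i \log \pi_i$ is strictly convex for $\tau > 0$, using the convention $0 \log 0 = 0$), so the minimizer is unique. Moreover, since $\pi_i \log \pi_i \to 0$ but its derivative $\tau(\log \pi_i + 1) \to -\infty$ as $\pi_i \to 0^+$, any minimizer must lie in the strict interior of $\bm{\Delta}_n$, i.e., $\pi_i^* > 0$ for all $i$. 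This lets me drop the inequality constraints $\pi_i \geq 0$ and focus only on the equality constraint $\sum_i \pi_i = 1$.

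Next I would write the Lagrangian
\begin{equation*}
    \mathcal{L}(\bm{\pi}, \lambda) = \sum_{i=1}^n \pi_i f_i(\bm{\theta}^{*}) + \tau \sum_{i=1}^n \pi_i \log \pi_i + \lambda \Bigl( \sum_{i=1}^n \pi_i - 1 \Bigr),
\end{equation*}
and set $\partial \mathcal{L}/\partial \pi_i = 0$, which gives $f_i(\bm{\theta}^{*}) + \tau(\log \pi_i + 1) + \lambda = 0$. Solving for $\pi_i$ yields
\begin{equation*}
    \pi_i = \exp\!\Bigl( - \frac{f_i(\bm{\theta}^{*})}{\tau} \Bigr) \cdot \exp\!\Bigl( -\frac{\lambda + \tau}{\tau} \Bigr),
\end{equation*}
showing that every coordinate of the optimizer is proportional to $\exp(-f_i(\bm{\theta}^{*})/\tau)$.

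Finally, I would use the simplex constraint $\sum_i \pi_i = 1$ to pin down the multiplier: the common prefactor $\exp(-(\lambda + \tau)/\tau)$ must equal $1/Z$ where $Z = \sum_{j=1}^n \exp(-f_j(\bm{\theta}^{*})/\tau)$. This gives exactly the claimed formula $\pi_i^* = Z^{-1} \exp(-f_i(\bm{\theta}^{*})/\tau)$. Because the problem is strictly convex and the Slater condition is satisfied (the relative interior of $\bm{\Delta}_n$ is nonempty), the KKT point obtained is the unique global minimizer.

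The only mildly delicate issue, and thus the main obstacle, is justifying that I may drop the inequality constraints; once the boundary behavior of the entropy is handled (as sketched above), the rest is a direct first-order calculation with no surprises.
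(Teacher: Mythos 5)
Your proposal is correct and follows essentially the same route as the paper's proof: a Lagrangian for the equality constraint, first-order conditions giving $\pi_i \propto \exp(-f_i(\bm{\theta}^{*})/\tau)$, and normalization via $\sum_i \pi_i = 1$, with nonnegativity observed to hold automatically. Your added justifications (strict convexity, the boundary argument for interiority, Slater) merely make explicit what the paper leaves implicit.
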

Modulo the normalizing factor $Z$ (it does not matter when optimizing w.r.t. $\bm{\theta}$), note that $w_i$ and $\mathcal L(\bm{\theta})$ in  \Cref{alg:method} are equivalent to ${\pi}_i^{*}$ and $\mathcal{L}_{\bm{\pi}^{*}}(\bm{\theta})$, respectively. 

\textbf{Distributionally robust optimization (DRO) perspective.}
Our formulation above is motivated by prior work on DRO \citep{qi2021online}, but it is  \textbf{exactly the opposite} of DRO in spirit. Specifically, in our setting, \citet{qi2021online} consider the following min-max problem: 
\begin{equation}
    \label{eq:DRO}
    \min_{\bm{\theta}} \max_{\bm{\pi} \in \bm{\Delta}_n} \sum_{i=1}^n \pi_i f_i(\bm{\theta}) - \tau \sum_{i=1}^n \pi_i \log \pi_i.
\end{equation}
The first term in Eq. (\ref{eq:DRO}) is the \textit{worst-case} weighted loss at $\bm{\theta}$, while the second term (i.e., entropic regularization) promotes uniform weights. The optimal solution to the inner max function w.r.t. $\bm{\pi}$ turns out to be $\pi_i^{*} \propto \exp \left(\frac{f_i({\mathbf{\bm{\theta}}})}{\tau} \right)$. Note that this is essentially the \textit{inverse of our weighting function} (modulo the normalizing factor) because it assigns a higher weight to samples with larger losses (i.e., the \enquote{hard} samples). The weighting function of DRO would be very conducive to forgetting because it focuses more on the \enquote{hard} samples. Further, our weighting function is static (or one-shot) as it depends only on the losses at ${\mathbf{{\bm{\theta}}}}^{*}$. On the other hand, the weighting function of DRO is dynamic (i.e., it depends on the current point ${\mathbf{{\bm{\theta}}}}$). In fact, after plugging in the optimal value of $\pi$ into Eq. (\ref{eq:DRO}) and simplifying, the DRO objective reduces to $\min_{\bm{\theta}} \sum_{i=1}^n \exp\left(\frac{f_i(\bm{\theta})}{\tau}\right)$; this is noticeably different from our objective $\mathcal L(\bm{\theta})$ in \Cref{alg:method}.

\section{Experimental Setup}
\label{sec: experiments}

We empirically evaluate the performance of \method (\Cref{alg:method}) on vision and language tasks, showcasing its effectiveness across different model architectures and modalities.\footnote{Our code is publicly available \href{https://github.com/sanyalsunny111/FLOW_finetuning}{\texttt{here}}.} Here, we explain details of our experiments: baselines, model architectures, datasets, and evaluation metrics.

\textbf{Baselines.} In our language and vision experiments, we compare \method against relevant baselines in the \textit{data-oblivious setting}, namely, standard fine-tuning (fine-tuning with vanilla unweighted loss), $\ell_2$-regularization [following  \citet{kirkpatrick2016overcoming}], and WiSE-FT \cite{wortsman2021robust} (model averaging of pre-trained and standard fine-tuned models). Additionally, we compare against linear probing (fine-tuning only the classification head, keeping the body frozen)  
and low-rank adaptation (LoRA) \cite{hu2022lora} in language experiments. More details on the  baselines can be found in \cref{app:add-baseline-detials}.

\subsection{Vision Experiments} 
\label{exp-vis}
We compare the performance of \method and associated baselines in a transfer learning setup. 

\textbf{Models.} {We experimented with 
ResNet-18 and ResNet-50 \cite{wightman2021resnet} pre-trained on Imagenet-1K (IN-1K).   
} 

\textbf{Datasets.} We used seven widely-used image classification datasets: CIFAR-10 \cite{Krizhevsky09learningmultiple}, CIFAR-100 \cite{Krizhevsky09learningmultiple}, Flowers102 \cite{nilsback2008automated}, Caltech101
\cite{Li2022}, Cars \cite{krause20133d}, and Dogs \cite{parkhi2012dog}.


\textbf{Evaluation metrics.} 
Vision models are trained with task-specific parts, such as classification head (head) and batch-norm (BN); see \Cref{alg:multi-head} for how \method works with with task-specific parts. 
Forgetting is measured by how much the model's top-1 validation accuracy on ImageNet-1K (subsequently referred to as IN-1K accuracy) reduces after fine-tuning. 
We report the fine-tuning performance in terms of \emph{average fine-tuning accuracy} over all the related datasets following \cite{goyal2022finetune, ilharco2023editing}. For IN-1K evaluation \emph{after} fine-tuning, we replace the task-specific components of the {fine-tuned} model with their pre-trained counterparts. 
An extended discussion on experimental details, evaluation, and hyper-parameters are in \cref{app:vision-hyperparameters}.
We also report the \textbf{average} of IN-1K accuracy and averaged fine-tuning accuracy for each method; this is a reasonable \textit{unified metric} to evaluate the performance of a method jointly on the pre-training and fine-tuning data.



\subsection{Language Model Experiments}
\label{sec:llm-experiments-setup}

We follow a similar setup to \citet{biderman2024lora, chen2024mofo}, where a language model's general capabilities are evaluated before and after fine-tuning on a mathematical reasoning dataset. All training for language experiments is done with HuggingFace \texttt{peft} \cite{peft}, \texttt{transformers} \cite{wolf-etal-2020-transformers}, \texttt{datasets} \cite{lhoest-etal-2021-datasets}, and \texttt{accelerate} \cite{accelerate}.

\textbf{Models.} We use Gemma 2 2B \cite{gemmateam2024gemma2improvingopen} and Llama 3.2 3B \cite{grattafiori2024llama3herdmodels} as our base language models. Further details on training hyper-parameters can be found in \cref{app:language-hyperparameters}.

\textbf{Datasets.} Following previous work \cite{biderman2024lora, chen2024mofo}, we fine-tune on MetaMathQA \cite{yu2023metamath}, a mathematical reasoning dataset that is bootstrapped from the training set of GSM8K \cite{cobbe2021training} and MATH \cite{hendrycksmath2021} using a LLM. We train with all 395K samples in MetaMathQA.

\textbf{Evaluation metrics.} To evaluate the validity of \method, we break down our metrics into \emph{general capability} and \emph{target fine-tuning} evaluations. To evaluate general capabilities, we again follow a similar setup to \citet{chen2024mofo}, where we use commonsense reasoning, 5-shot MMLU \cite{hendryckstest2021}, and 3-shot MBPP \cite{austin2021program} metrics. To evaluate the target domain, we use 5-shot GSM8K \cite{cobbe2021training}. 
All evaluations are performed with \texttt{lm-evaluation-harness} \cite{eval-harness}. More details on evaluation and the commonsense metric can be found in \cref{app:further-language-evaluation-details}. Similar to vision, we also report the \textbf{average} of general capabilities and the target fine-tuning accuracies as a \textit{unified metric}.

\section{Experimental Results}
\label{sec:results}


\begin{table}[!t]
\centering
 \caption{{{
 \textbf{Performance of \methodbold with ResNet vision models.}
 \textbf{Bolded} and \underline{underlined} values indicate the \textbf{best} and \underline{second-best} 
 accuracies within each column (and for each model). 
 Deltas (in color) for IN-1K and target performance are computed w.r.t. the pre-trained and standard fine-tuned models. 
 \methodbold \textbf{attains the best average accuracy} and is better than the second-best method (linear probing) by 2.94\% and 3.44\% for ResNet-18 and ResNet-50, respectively.
 }
 }}
 \vspace{0.1 cm}
\small
 \resizebox{0.97\columnwidth}{!}{%
\begin{tabular}{ll|l|l|c}
\toprule
 & \multicolumn{1}{c|}{{\textbf{Method}}} & \multicolumn{1}{c|}{{\textbf{IN-1K Acc.}}} & \multicolumn{1}{c|}{{\textbf{Target Acc.}}} & \textbf{Average} \\
\midrule
\multirow{6}{*}{\rotatebox{90}{\textbf{ResNet-18}}} 
 & Pre-trained   & \textbf{69.76} \hfill {\tiny \textcolor{blue}{(+0.00)}} & --  & -- \\
 & Standard FT   & 19.58 \hfill {\tiny \textcolor{red}{(-50.18)}} & \textbf{89.07} \hfill {\tiny \textcolor{blue}{(+0.00)}} & 54.60 \\
 & Linear Probe       & \textbf{69.76} \hfill {\tiny \textcolor{blue}{(+0.00)}} & 73.57 \hfill {\tiny \textcolor{red}{(-15.50)}} & \underline{71.63} \\
 & $\ell_2$-Reg.         & 34.78 \hfill {\tiny \textcolor{red}{(-34.98)}} & \underline{88.12} \hfill {\tiny \textcolor{red}{(-0.95)}} & 61.45 \\
 & WiSE-FT       & 54.15 \hfill {\tiny \textcolor{red}{(-15.61)}} & 80.23 \hfill {\tiny \textcolor{red}{(-8.84)}} & 67.19 \\
 & \methodbold (Ours) & \underline{65.21} \hfill {\tiny \textcolor{red}{(-4.55)}} & 83.93 \hfill {\tiny \textcolor{red}{(-5.14)}} & \textbf{74.57} \\
\midrule
\multirow{6}{*}{\rotatebox{90}{\textbf{ResNet-50}}} 
 & Pre-trained   & \textbf{79.02} \hfill {\tiny \textcolor{blue}{(+0.00)}} & --  & -- \\
 & Standard FT   & 36.91 \hfill {\tiny \textcolor{red}{(-42.11)}} & \textbf{91.78} \hfill {\tiny \textcolor{blue}{(+0.00)}} & 64.34 \\
 & Linear Probe        & \textbf{79.02} \hfill {\tiny \textcolor{blue}{(+0.00)}} & 76.45 \hfill {\tiny \textcolor{red}{(-15.33)}} & \underline{77.73} \\
 & $\ell_2$-Reg.         & 44.78 \hfill {\tiny \textcolor{red}{(-34.24)}} & \underline{91.58} \hfill {\tiny \textcolor{red}{(-0.20)}} & 68.18 \\
 & WiSE-FT       & 61.65 \hfill {\tiny \textcolor{red}{(-17.37)}} & 81.38 \hfill {\tiny \textcolor{red}{(-10.40)}} & 71.52 \\
 & \methodbold (Ours) & \underline{76.09} \hfill {\tiny \textcolor{red}{(-2.93)}} & 86.25 \hfill {\tiny \textcolor{red}{(-5.53)}} & \textbf{81.17} \\
\bottomrule
\end{tabular}
}
\label{table:main_vision_table}
\end{table}

\begin{table*}[t!]
    \centering
    \caption{\textbf{Performance of \methodbold with LLMs.} After fine-tuning Gemma 2 2B and Llama 3.2 3B on MetaMathQA, we compare the target fine-tuning performance (GSM8K) with general capability performance. \textbf{Bolded} and \underline{underlined} values indicate the \textbf{best} and \underline{second-best} results within each column (and for each model). 
    Deltas (in color) for general capability metrics and fine-tuning metrics are computed w.r.t. the pre-trained and standard fine-tuned model's accuracy, respectively.  
    We see that \textbf{\methodbold, on average, has the best performance} on general capabilities and target domain, achieving within $\sim0.8\%$ (Gemma 2 2B) and $\sim1.4\%$ (Llama 3.2 3B) of standard fine-tuning's target performance, while significantly mitigating the degradation of general pre-training capabilities in comparison to other baselines. 
    }
    \vspace{0.1 cm}
    \small
    \begin{tabular}{ll|ccc|c|c}
         \toprule
        & \textbf{} &  \multicolumn{3}{c}{\textbf{General Capability Acc.}} & \textbf{Target Acc.} & \\ 
        \cmidrule(lr){3-5}
        \cmidrule(lr){6-6}
         &\multicolumn{1}{c|}{{\textbf{Method}}} &  \textbf{Commonsense} & \textbf{MMLU} & \textbf{MBPP} & \textbf{GSM8K} & \textbf{Average}\\
        \midrule
        \multirow{6}{*}{\rotatebox{90}{\textbf{Gemma 2 2B}}} 
         & Pre-trained & \underline{57.23} \relax {\tiny\textcolor{blue}{(+0.00)}} & \underline{49.59} \relax {\tiny\textcolor{blue}{(+0.00)}} & \textbf{28.40} \relax {\tiny\textcolor{blue}{(+0.00)}} & 24.49 \relax {\tiny\textcolor{red}{(-38.89)}}& 40.79 \\
         & Standard Fine-tuning & 55.07 {\tiny\relax\textcolor{red}{(-2.16})} & 45.59 \relax{\tiny\textcolor{red}{(-4.00)}}& 16.80 \relax{\tiny\textcolor{red}{(-11.60)}}& \textbf{63.38} \relax{\tiny\textcolor{blue}{(+0.00)}}& 46.31 \\
         & WiSE-FT ($\alpha=0.5$) & 57.28 \relax{\tiny\textcolor{blue}{(+0.05)}} & \textbf{50.13} \relax{\tiny\textcolor{blue}{(+0.54)}} & 25.60 \relax{\tiny\textcolor{red}{(-2.80)}} & 53.30 \relax{\tiny\textcolor{red}{(-10.08)}}&  47.60\\
         & LoRA ($r=64$) &  55.67 \relax{\tiny\textcolor{red}{(-1.56)}} & 44.28 \relax{\tiny\textcolor{red}{(-5.31)}}& 25.80 \relax{\tiny\textcolor{red}{(-2.60)}}& 60.43 \relax{\tiny\textcolor{red}{(-2.95)}}& 47.05\\
         & $\ell_2$-Regularization & 57.01 \relax{\tiny\textcolor{red}{(-0.22)}} & 48.43 \relax{\tiny\textcolor{red}{(-1.16)}}& 24.80 \relax{\tiny\textcolor{red}{(-3.60)}}& \underline{62.85} \relax{\tiny\textcolor{red}{(-0.53)}} &  \underline{49.19} \\
         & \methodbold (Ours) & \textbf{57.59} \relax{\tiny\textcolor{blue}{(+0.36)}} & 49.31 \relax{\tiny\textcolor{red}{(-0.28)}}& \underline{26.80} \relax{\tiny\textcolor{red}{(-1.60)}}& 62.55 \relax{\tiny\textcolor{red}{(-0.83)}}& \textbf{49.98} \\
        \midrule
        \multirow{6}{*}{\rotatebox{90}{\textbf{Llama 3.2 3B}}} 
         & Pre-trained & \underline{54.48} \relax{\tiny\textcolor{blue}{(+0.00)}} & \textbf{54.34} \relax{\tiny\textcolor{blue}{(+0.00)}}& \textbf{38.00} \relax{\tiny\textcolor{blue}{(+0.00)}}& 26.01 \relax{\tiny\textcolor{red}{(-40.94)}} & 44.28 \\
         & Standard Fine-tuning & 50.68 \relax{\tiny\textcolor{red}{(-3.80)}}& 45.29 \relax{\tiny\textcolor{red}{(-9.05)}}& 17.80 \relax{\tiny\textcolor{red}{(-20.20)}} & \textbf{66.95} \relax{\tiny\textcolor{blue}{(+0.00)}}& 46.10 \\
         & WiSE-FT ($\alpha=0.5$) & \textbf{54.54} \relax{\tiny\textcolor{blue}{(+0.04)}}& 53.33 \relax{\tiny\textcolor{red}{(-1.01)}}& 34.60 \relax{\tiny\textcolor{red}{(-3.40)}}& 57.01 \relax{\tiny\textcolor{red}{(-9.94)}}&  50.75\\
         & LoRA ($r=64$) & 53.10 \relax{\tiny\textcolor{red}{(-1.38)}}& 50.95 \relax{\tiny\textcolor{red}{(-3.39)}}& 34.00 \relax{\tiny\textcolor{red}{(-4.00)}}& 63.84 \relax{\tiny\textcolor{red}{(-3.15)}}&  51.66 \\
         & $\ell_2$-Regularization & 53.60 \relax{\tiny\textcolor{red}{(-0.88)}}& 51.28 \relax{\tiny\textcolor{red}{(-3.06)}}& 33.60 \relax{\tiny\textcolor{red}{(-4.40)}}& \underline{66.87} \relax{\tiny\textcolor{red}{(-0.08)}}& \underline{52.30}\\
         & \methodbold (Ours) & 54.30 \relax{\tiny\textcolor{red}{(-0.18)}}& 51.86 \relax{\tiny\textcolor{red}{(-2.48)}}& \underline{36.00} \relax{\tiny\textcolor{red}{(-2.00)}}& 65.58 \relax{\tiny\textcolor{red}{(-1.37)}}& \textbf{52.87} \\
         \bottomrule
    \end{tabular}
    \label{tab:main_lang_table}
\end{table*}



\subsection{Comparing \methodbold and Related Baselines}
For our \textbf{vision} experiments on ResNets, Table \ref{table:main_vision_table} lists the accuracies 
of all the baselines and \method. 
We observe similar trends for both ResNet models, so we discuss the results for the larger ResNet-50 model here.  
The pre-trained ResNet-50 model achieves a top-1 accuracy of 79.02\% on ImageNet-1K's validation set. Standard fine-tuning experiences a significant 42.11\% drop in IN-1K accuracy, while achieving an average fine-tuning accuracy of 91.78\% across the target datasets. In contrast, \method suffers only a 2.93\% drop in IN-1K accuracy and exhibits a reasonable 86.25\% average accuracy on target fine-tuning datasets, demonstrating a significant improvement over standard fine-tuning. Overall, \method's \textbf{average} on IN-1K and target domain accuracy 
is \textbf{16.83\% higher} than standard fine-tuning. 

\begin{table}[htb!]
\centering
\tiny
 \caption{\textbf{WiSE-FT with \methodbold vs. WiSE-FT in vision.}
 \enquote{WiSE-FT+} denotes WiSE-FT with \method in the table. Comparison here is in the same setting as \Cref{table:main_vision_table}. Note that \textbf{{WiSE-FT+} is significantly better than WiSE-FT}. 
 }
 \vspace{0.9 cm}
\small
\begin{tabular}{ll|c|c|c}
\toprule
 & \textbf{Method} & \textbf{IN-1K} & \textbf{Target} & \textbf{Average} \\
\midrule
\multirow{2}{*}{\textbf{ResNet-18}} 
& WiSE-FT & 54.15 & 80.23 & 67.19 \\
& WiSE-FT+ & 68.71 & 74.03 & \textbf{71.37} \\ 
\midrule
\multirow{2}{*}{\textbf{ResNet-50}} 

& WiSE-FT      & 61.65  & 81.38 & 71.52 \\
& WiSE-FT+ & 78.29 & 73.80 & \textbf{76.04} \\
\bottomrule
\end{tabular}
\label{table:complementary_vision_table}
\end{table}

Going beyond standard fine-tuning, our results in Table \ref{table:main_vision_table} show that \method comprehensively outperforms other baselines. Interestingly, despite its simplicity, linear probing is the second-best method. Overall, \method outperforms other baselines, when \textbf{averaging} IN-1K and target fine-tuning accuracies, by \textbf{a 3.44\% advantage} over the closest competitor, linear probing.
Although linear probing completely prevents forgetting, it learns significantly less during fine-tuning compared to \method. 
{The accuracies on individual fine-tuning datasets and corresponding accuracies for IN-1K} 
can be found in \cref{add-vis-results}. 


{Additionally, in \Cref{app:dist}, we compare \method with a distillation-based approach for mitigating forgetting called \enquote{learning without forgetting} (LwF) \cite{li2016learning} on ViT-B/16, which is a larger model than ResNets and on the Food101 \cite{bossard14} dataset, which is relatively larger than each of the six datasets here. 
In summary, \method outperforms LwF in terms of average accuracy; while LwF has the edge on fine-tuning performance, \method does better on the forgetting front. 
}

Our \textbf{language model} results are in \Cref{tab:main_lang_table}. Results for Gemma 2 2B 
show that \method helps preserve (and even somewhat enhance) the general capabilities of the pre-trained model. {Specifically, compared to standard fine-tuning,   \method improves general capability accuracy by 2.52\% in commonsense reasoning, 3.73\% in MMLU, and 10.00\% in MBPP, with a minor degradation of 0.83\% in GSM8K. We see a similar trend in our Llama 3.2 3B experiments.} Furthermore, while alternative baselines show specific strengths (such as WiSE-FT's general capability performance and $\ell_2$-regularization's target fine-tuning performance), \textbf{\methodbold outperforms all baselines, on average,} for both models, striking the best balance between preserving general capabilities and achieving good target fine-tuning performance. Additional details on commonsense reasoning results are in \cref{app:extended-commonsense-results} and an ablation for our choice of sample weighting in LLMs is in \cref{app:token-wise-ablation}.


In summary, \textit{\methoditalic strikes a good balance between learning a new task and retaining knowledge from pre-training.}


\subsection{Combining \methodbold with Baselines}
To complement our results in \cref{table:main_vision_table,tab:main_lang_table}, we investigate the performance of baselines when \textbf{combined with \methodbold}. 
In the vision setting, we consider uniform model averaging with WiSE-FT (with $\alpha=0.5$) and report its performance with and without \method in \cref{table:complementary_vision_table}. 
Interestingly, averaging the pre-trained IN-1K model and the fine-tuned model obtained with \method \textbf{improves} over standard WiSE-FT (i.e., averaging the pre-trained IN-1K model and the standard fine-tuned model) by \textbf{4.18}\% and \textbf{4.52}\% for ResNet-18 and ResNet-50, respectively, in average performance. 

Further, as seen in \cref{tab:lang-ours-baseline}, \method 
\textbf{boosts the performance} of other baselines in language modeling. When combined with $\ell_2$-regularization, we observe improvements in general capability between $0.5\%$ and $1.80\%$, with only a $0.83\%$ reduction in GSM8K performance. Furthermore, the integration of \method with LoRA yields even stronger results, enhancing general capability performance by $1.07\%$ to $3.40\%$, while simultaneously improving GSM8K performance by $1.06\%$. Further details and discussion combining \method with $\ell_2$-regularization and LoRA are in \cref{app:extended-commonsense-results}.

\section{Theoretical Analysis}
\label{sec: theory}
Here we consider linear pre-training and fine-tuning  tasks\footnote{Our insights carry over to neural networks following the dynamics of linear models under gradient descent \cite{lee2019wide}. 
} and theoretically analyze the effect of fine-tuning with our proposed method \method (Alg. \ref{alg:method}). 
Specifically, we compare the non-asymptotic trajectories of \method and vanilla fine-tuning. A key insight of our analysis is that \method stalls training in a certain direction, 
impeding overfitting to the fine-tuning task (see \Cref{rmk-2}). We also demonstrate that \method goes beyond the simple idea of model averaging (see \Cref{rmk-3}).

We begin by describing the {problem setting.} 

\textbf{Pre-training task:} The label $\text{y} \in \mathbb{R}$ for a $d$-dimensional data point $\mathbf{x} \sim \mathcal{P}$ is given by $\text{y} = \langle \bm{\theta}_{*}, \mathbf{x} \rangle$, where $\bm{\theta}_{*} \in \mathbb{R}^{d}$ is the ground-truth model. 
Let $\mathcal{D}$ denote the joint distribution of $(\mathbf{x}, \text{y})$, where $\mathbf{x} \sim \mathcal{P}$. Let $\bm{\Sigma} = \mathbb{E}_{\mathbf{x} \sim \mathcal{P}}\big[\mathbf{x} \mathbf{x}^\top\big]$ be the data covariance matrix. 
Without loss of generality, let $\bm{\Sigma} \succeq \mathbf{I}_d$. 

\textbf{Fine-tuning task:} The label $\widetilde{\text{y}} \in \mathbb{R}$ for a $d$-dimensional data point $\widetilde{\mathbf{x}} \sim \widetilde{\mathcal{P}}$ is given by $\widetilde{\text{y}} = \big\langle \widetilde{\bm{\theta}}_{*}, \widetilde{\mathbf{x}} \big\rangle$, where $\widetilde{\bm{\theta}}_{*} \in \mathbb{R}^{d}$ is the ground-truth model. Let $\widetilde{\mathcal{D}}$ denote the joint distribution of $(\widetilde{\mathbf{x}}, \widetilde{\text{y}})$, where $\widetilde{\mathbf{x}} \sim \widetilde{\mathcal{P}}$. Also, let $$\mathbf{e} := \bm{\theta}_{\ast} - \widetilde{\bm{\theta}}_{*}, \overline{\mathbf{e}} := {\mathbf{e}}/{\|\mathbf{e}\|_2},$$
and $\overline{\mathbf{{e}}}_{\perp}$ be a unit vector orthogonal to $\overline{\mathbf{e}}$. We consider the case of $\widetilde{\mathcal{P}} = \mathcal{N}(\vec{\bm{0}}_d, \widetilde{\bm{\Sigma}})$, where 
\begin{flalign}
    \label{eq:1-jan15}
    \widetilde{\bm{\Sigma}} = \mathbf{{I}}_d + \rho  \big(\overline{\mathbf{{e}}} \overline{\mathbf{{e}}}_{\perp}^\top + \overline{\mathbf{{e}}}_{\perp} \overline{\mathbf{{e}}}^\top\big),
\end{flalign}
where $\rho \in [0,1)$ is a constant. Note that $\widetilde{\bm{\Sigma}}$ is the data covariance matrix here.

\begin{table}[htb!]
    \centering
    \caption{\textbf{$\mathbf{\ell_2}$-Reg./LoRA + \methodbold vs. $\mathbf{\ell_2}$-Reg./LoRA in language.} \enquote{$\ell_2$+} and \enquote{LoRA+} denote $\ell_2$-Reg. with \method and LoRA with \method, respectively. 
    The results below are for Gemma 2 2B in the same setup as \Cref{tab:main_lang_table}.  
    We let A1, A2, A3, and B1 represent {Commonsense}, {MMLU}, {MBPP}, and {GSM8K}, respectively. 
    We see that \textbf{{$\mathbf{\ell_2}$+} and {LoRA+} are better than {$\mathbf{\ell_2}$} and {LoRA}}, respectively.
    }
    \vspace{0.1 cm}
    \small
    \begin{tabular}{c|ccc|c|c}
        \toprule
        \textbf{Method} & \textbf{A1} & \textbf{A2} & \textbf{A3} & \textbf{B1} & \textbf{Avg.}\\
        \midrule
        $\ell_2$ &  57.01 & 48.43 & 24.80 & \textbf{62.85} & 49.19 \\
        $\ell_2$+ & \textbf{57.53} & \textbf{49.38} & \textbf{26.60} & 62.02 & \textbf{49.79} \\
        \midrule
        LoRA & 55.67 & 44.28 & 25.80 & 60.43 & 47.05 \\
        $\text{LoRA}$+ & \textbf{56.74} & \textbf{47.68} & \textbf{28.80} & \textbf{61.49} &  \textbf{49.31} \\ 
        \bottomrule
    \end{tabular}
   \label{tab:lang-ours-baseline}
\end{table}

\begin{remark}[\textbf{Regarding $\widetilde{\bm{\Sigma}}$}]
    \label{rmk-just-jan30}
    We study the case of $\widetilde{\bm{\Sigma}}$ as given in Eq. (\ref{eq:1-jan15}) because it is the minimal analytically tractable case where we can show that \method goes beyond model averaging (MA) (see \Cref{rmk-3}). Specifically, if $\rho = 0$ and $\widetilde{\bm{\Sigma}} = \mathbf{{I}}_d$, then \method reduces to MA. Moreover, for an arbitrary $\widetilde{\bm{\Sigma}}$, characterizing the eigen-spectrum of the matrix dictating the trajectory of \method becomes intractable. For the analysis to be tractable, we need some relationship between $\widetilde{\bm{\Sigma}}$ and $\mathbf{e}$ (i.e., the difference between the optima of the pre-training and fine-tuning tasks).\footnote{See \Cref{sec:gen_sigma} for more details.}
\end{remark}

For a model parameterized by $\bm{\theta} \in \mathbb{R}^d$, let 
\begin{equation*}
    \text{err}_1(\bm{\theta}) := \mathbb{E}_{\mathcal{D}} \Big[\big(\text{y} - \langle \bm{\theta}, \mathbf{x}\rangle \big)^2\Big] = \big(\bm{\theta} - \bm{\theta}_{\ast}\big)^\top \bm{\Sigma} \big(\bm{\theta} - \bm{\theta}_{\ast}\big), 
\end{equation*}
\begin{equation}
    \label{eq:4-jan17}
    \text{err}_2(\bm{\theta}) := \mathbb{E}_{\widetilde{\mathcal{D}}} \Big[\big(\widetilde{\text{y}} - \langle \bm{\theta}, \widetilde{\mathbf{x}} \rangle\big)^2\Big] = \big(\bm{\theta} - \widetilde{\bm{\theta}}_{*}\big)^\top \widetilde{\bm{\Sigma}} \big(\bm{\theta} - \widetilde{\bm{\theta}}_{*}\big)
\end{equation}
be the population errors on the pre-training and fine-tuning tasks, respectively. Also, the total error with $\bm{\theta}$ on the two tasks is denoted by $\text{err}_\text{tot}(\bm{\theta}) = \text{err}_1(\bm{\theta}) + \text{err}_2(\bm{\theta})$. 

We assume that initially, we learn $\bm{\theta}_{*}$ with the pre-training data; so $\bm{\theta}_{*}$ is our pre-trained model. Note that 
\begin{equation}
    \label{eq:4-jan18}
    \text{err}_\text{tot}(\bm{\theta}_{\ast}) = \text{err}_2(\bm{\theta}_{\ast}) = \mathbf{e}^\top \widetilde{\bm{\Sigma}} \mathbf{e} = \|\mathbf{e}\|_2^2,
\end{equation}
where the last step follows by using 
$\widetilde{\bm{\Sigma}}$ from \Cref{eq:1-jan15}.

We start fine-tuning 
starting from $\bm{\theta}_{*}$. 
Specifically, we assume access to the population
 $(\widetilde{\mathbf{x}}, \widetilde{\text{y}}) \sim \widetilde{\mathcal{D}}$ of the fine-tuning task, but we lose access to the pre-training data. 


\textbf{Vanilla fine-tuning (FT):} We minimize $\text{err}_2(\bm{\theta})$ (Eq. (\ref{eq:4-jan17})) with gradient descent (GD) starting from $\bm{\theta}_{\ast}$ using a constant learning rate $\overline{\eta}$. Our iterate $\overline{\bm{\theta}}_K$ at the $K^\text{th}$ iteration is given by (using the value of  $\widetilde{\bm{\Sigma}}$ from Eq. (\ref{eq:1-jan15}) and $\bm{\theta}_{\ast} - \widetilde{\bm{\theta}}_{*} = \mathbf{e}$):
\begin{equation}
    \label{eq:6-jan17}
    \overline{\bm{\theta}}_K = \widetilde{\bm{\theta}}_{*} + \Big(\mathbf{I}_d - 2 \overline{\eta} \Big(\mathbf{{I}}_d + \rho  \big(\overline{\mathbf{{e}}} \overline{\mathbf{{e}}}_{\perp}^\top + \overline{\mathbf{{e}}}_{\perp} \overline{\mathbf{{e}}}^\top\big)\Big)^K \mathbf{e}.
\end{equation}

\textbf{\methodbold:} For some temperature $\tau$, the weight of $(\widetilde{\mathbf{x}}, \widetilde{\text{y}}) \sim \widetilde{\mathcal{D}}$ is $w(\widetilde{\mathbf{x}}, \widetilde{\text{y}}) = \exp\Big(-\frac{(\widetilde{\text{y}} -  \langle \bm{\theta}_{\ast}, \widetilde{\mathbf{x}}\rangle)^2}{\tau}\Big)$. We minimize
    \begin{equation}
        \label{eq:3}
        \widehat{\text{err}}_2(\widehat{\bm{\theta}}) := \mathbb{E}_{\widetilde{\mathcal{D}}} \Big[w(\widetilde{\mathbf{x}}, \widetilde{\text{y}}) \big(\widetilde{\text{y}} - \langle \widehat{\bm{\theta}}, \widetilde{\mathbf{x}} \rangle\big)^2\Big], 
    \end{equation}
    with GD starting from $\bm{\theta}_{\ast}$ using a constant learning rate $\widehat{\eta}$. Suppose our iterate at the $K^\text{th}$ iteration is $\widehat{\bm{\theta}}_{K}$. 

\begin{theorem}[\textbf{\methodbold}]
    \label{thm-ploss}
    Let $\mu = \Big(\frac{\tau}{\tau + 2\|\mathbf{{e}}\|_2^2}\Big)^{1/2}$. Then:
    \begin{flalign}
    \label{eq:8-jan17}
    \widehat{\bm{\theta}}_K =  \widetilde{\bm{\theta}}_{*} + \Big(\mathbf{{I}}_d - 2 \widehat{\eta} \widetilde{\bm{\Sigma}}'\Big)^K \mathbf{{e}}, 
    \end{flalign}
    where $\widetilde{\bm{\Sigma}}' = \mu \big(\mathbf{I}_d - \mathbf{Q}\big)$ with   
    \begin{multline}
        \label{eq:11-jan19}
        \mathbf{Q} = 
        (1 - \mu^2)\overline{\mathbf{{e}}} \overline{\mathbf{{e}}}^\top + \rho^2 (1 - \mu^2) \overline{\mathbf{{e}}}_{\perp} \overline{\mathbf{{e}}}_{\perp}^\top 
        \\ 
        \quad \quad \quad \quad \quad \quad \quad \quad 
        - \rho \mu^2 \big(\overline{\mathbf{{e}}} \overline{\mathbf{{e}}}_{\perp}^\top + \overline{\mathbf{{e}}}_{\perp} \overline{\mathbf{{e}}}^\top\big).
    \end{multline}
\end{theorem}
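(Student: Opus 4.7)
The plan is to reduce the weighted population loss $\widehat{\text{err}}_2$ to a quadratic form in $\widehat{\bm{\theta}} - \widetilde{\bm{\theta}}_{*}$, after which Eq. (\ref{eq:8-jan17}) drops out of the standard gradient-descent recursion on a quadratic. The crucial algebraic work is therefore to compute the Hessian of this quadratic and verify it equals $2\widetilde{\bm{\Sigma}}'$ for $\widetilde{\bm{\Sigma}}' = \mu(\mathbf{I}_d - \mathbf{Q})$ with $\mathbf{Q}$ as in Eq. (\ref{eq:11-jan19}). To that end, first observe the substitution $\widetilde{\text{y}} - \langle \bm{\theta}_{*}, \widetilde{\mathbf{x}}\rangle = \langle \widetilde{\bm{\theta}}_{*} - \bm{\theta}_{*}, \widetilde{\mathbf{x}}\rangle = -\langle \mathbf{e}, \widetilde{\mathbf{x}}\rangle$, so the weight becomes $w(\widetilde{\mathbf{x}}, \widetilde{\text{y}}) = \exp(-\widetilde{\mathbf{x}}^\top \mathbf{e}\mathbf{e}^\top \widetilde{\mathbf{x}}/\tau)$ and, setting $\mathbf{v} := \widetilde{\bm{\theta}}_{*} - \widehat{\bm{\theta}}$, the residual satisfies $(\widetilde{\text{y}} - \langle \widehat{\bm{\theta}}, \widetilde{\mathbf{x}}\rangle)^2 = \mathbf{v}^\top \widetilde{\mathbf{x}} \widetilde{\mathbf{x}}^\top \mathbf{v}$.

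Next, I would evaluate $\widehat{\text{err}}_2$ as a Gaussian integral. Fusing $w$ into the $\mathcal{N}(\vec{\bm{0}}_d, \widetilde{\bm{\Sigma}})$ density produces an unnormalized Gaussian with precision matrix $\mathbf{A} := \widetilde{\bm{\Sigma}}^{-1} + \tfrac{2}{\tau}\mathbf{e}\mathbf{e}^\top$, so
\begin{equation*}
    \widehat{\text{err}}_2(\widehat{\bm{\theta}}) = \frac{1}{\det(\mathbf{A}\widetilde{\bm{\Sigma}})^{1/2}}\, \mathbf{v}^\top \mathbf{A}^{-1} \mathbf{v}.
\end{equation*}
By the matrix determinant lemma, $\det(\mathbf{A}\widetilde{\bm{\Sigma}}) = 1 + \tfrac{2}{\tau}\mathbf{e}^\top \widetilde{\bm{\Sigma}}\mathbf{e}$. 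Because $\overline{\mathbf{e}}_{\perp} \perp \overline{\mathbf{e}}$, the cross term in $\widetilde{\bm{\Sigma}}$ kills itself against $\mathbf{e}$ and one gets $\mathbf{e}^\top \widetilde{\bm{\Sigma}}\mathbf{e} = \|\mathbf{e}\|_2^2$, so the prefactor collapses exactly to $\mu$. This is the same orthogonality that makes Eq. (\ref{eq:4-jan18}) clean, and it is what makes $\widetilde{\bm{\Sigma}}$ analytically tractable in the sense of \Cref{rmk-just-jan30}.

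The last step is to invert $\mathbf{A}$ via Sherman--Morrison, giving $\mathbf{A}^{-1} = \widetilde{\bm{\Sigma}} - \tfrac{2\mu^2}{\tau}\widetilde{\bm{\Sigma}}\mathbf{e}\mathbf{e}^\top \widetilde{\bm{\Sigma}}$, and then to expand everything in the $\{\overline{\mathbf{e}}, \overline{\mathbf{e}}_{\perp}\}$ basis. The useful identity is $\widetilde{\bm{\Sigma}}\mathbf{e} = \|\mathbf{e}\|_2(\overline{\mathbf{e}} + \rho \overline{\mathbf{e}}_{\perp})$, which makes $\widetilde{\bm{\Sigma}}\mathbf{e}\mathbf{e}^\top \widetilde{\bm{\Sigma}}$ a sum of rank-one terms in $\overline{\mathbf{e}}\overline{\mathbf{e}}^\top$, $\overline{\mathbf{e}}_{\perp}\overline{\mathbf{e}}_{\perp}^\top$ and $\overline{\mathbf{e}}\overline{\mathbf{e}}_{\perp}^\top + \overline{\mathbf{e}}_{\perp}\overline{\mathbf{e}}^\top$. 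Using $\tfrac{2\mu^2 \|\mathbf{e}\|_2^2}{\tau} = 1-\mu^2$ (a direct rearrangement of the definition of $\mu$), the coefficients collapse to exactly those of $\mathbf{Q}$ in Eq. (\ref{eq:11-jan19}); hence $\mathbf{A}^{-1} = \mathbf{I}_d - \mathbf{Q}$ and $\widehat{\text{err}}_2(\widehat{\bm{\theta}}) = (\widehat{\bm{\theta}} - \widetilde{\bm{\theta}}_{*})^\top \widetilde{\bm{\Sigma}}' (\widehat{\bm{\theta}} - \widetilde{\bm{\theta}}_{*})$. Its gradient is $2\widetilde{\bm{\Sigma}}'(\widehat{\bm{\theta}} - \widetilde{\bm{\theta}}_{*})$, so the GD recursion gives $\widehat{\bm{\theta}}_{k+1} - \widetilde{\bm{\theta}}_{*} = (\mathbf{I}_d - 2\widehat{\eta}\widetilde{\bm{\Sigma}}')(\widehat{\bm{\theta}}_{k} - \widetilde{\bm{\theta}}_{*})$; initialising at $\widehat{\bm{\theta}}_0 = \bm{\theta}_{*}$ (so that $\widehat{\bm{\theta}}_0 - \widetilde{\bm{\theta}}_{*} = \mathbf{e}$) and unrolling $K$ steps yields Eq. (\ref{eq:8-jan17}).

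The main obstacle is really just the bookkeeping in the third step: one has to verify that the three coefficients produced by $\widetilde{\bm{\Sigma}}\mathbf{e}\mathbf{e}^\top \widetilde{\bm{\Sigma}}$ combine with the $(1{-}\mu^2)$ factor to cancel the $\rho$-terms in $\widetilde{\bm{\Sigma}}$ down to $\rho\mu^2$, matching the mixed-term coefficient in $\mathbf{Q}$. Everything else -- the Gaussian integral, the determinant computation via the orthogonality $\overline{\mathbf{e}} \perp \overline{\mathbf{e}}_{\perp}$, Sherman--Morrison, and the GD unroll on a quadratic -- is routine and is what makes this specific choice of $\widetilde{\bm{\Sigma}}$ (as flagged in \Cref{rmk-just-jan30}) the right tractable setting.
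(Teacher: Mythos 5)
Your proposal is correct, and it reaches \Cref{thm-ploss} by a genuinely different route than the paper's proof. The paper reduces $\widehat{\text{err}}_2$ to the same quadratic form and then computes $\widetilde{\bm{\Sigma}}' = \mathbb{E}[w\,\widetilde{\mathbf{x}}\widetilde{\mathbf{x}}^\top]$ via \Cref{lem1}: it expands $\widetilde{\mathbf{x}}$ in the orthonormal basis $\{\overline{\mathbf{{e}}}, \overline{\mathbf{{e}}}_{\perp}, \overline{\mathbf{{e}}}_{\perp,3},\ldots\}$ with correlated standard normal coordinates (\Cref{lem0}) and evaluates the four one-dimensional weighted Gaussian moments $\textup{T}_1,\ldots,\textup{T}_4$ explicitly. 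You instead absorb the exponential weight into the Gaussian density, obtaining an unnormalized Gaussian with precision $\mathbf{A} = \widetilde{\bm{\Sigma}}^{-1} + \tfrac{2}{\tau}\mathbf{e}\mathbf{e}^\top$, compute its mass $\det(\mathbf{A}\widetilde{\bm{\Sigma}})^{-1/2} = \mu$ via the matrix determinant lemma together with $\mathbf{e}^\top\widetilde{\bm{\Sigma}}\mathbf{e} = \|\mathbf{e}\|_2^2$, and invert $\mathbf{A}$ by Sherman--Morrison; I checked the coefficient bookkeeping ($\widetilde{\bm{\Sigma}}\mathbf{e} = \|\mathbf{e}\|_2(\overline{\mathbf{{e}}} + \rho\overline{\mathbf{{e}}}_{\perp})$, $\tfrac{2\mu^2\|\mathbf{e}\|_2^2}{\tau} = 1-\mu^2$, and $\rho - \rho(1-\mu^2) = \rho\mu^2$ for the mixed term) and it does collapse to $\mathbf{A}^{-1} = \mathbf{I}_d - \mathbf{Q}$ with $\mathbf{Q}$ exactly as in Eq. (\ref{eq:11-jan19}); the GD unroll from $\bm{\theta}_{\ast}$ is then identical to the paper's. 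Notably, your intermediate formula $\widetilde{\bm{\Sigma}}' = \mu\big(\widetilde{\bm{\Sigma}} - (1-\mu^2)\widetilde{\bm{\Sigma}}\mathbf{e}\mathbf{e}^\top\widetilde{\bm{\Sigma}}/(\mathbf{e}^\top\widetilde{\bm{\Sigma}}\mathbf{e})\big)$ is precisely the general-covariance expression the paper derives separately in \Cref{sec:gen_sigma} (Eq. (\ref{eq:11-jan11})), though the paper obtains it by whitening plus \Cref{lem1-jan30} rather than Sherman--Morrison. What each buys: your route is shorter and handles a general $\widetilde{\bm{\Sigma}}$ in one shot, at the cost of requiring $\widetilde{\bm{\Sigma}} \succ 0$ (harmless here since the eigenvalues are $1\pm\rho$ and $1$ with $\rho < 1$, but worth stating); the paper's route uses only elementary one-dimensional integrals and no matrix-inversion identities, and does not need invertibility of $\widetilde{\bm{\Sigma}}$.
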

We prove Thm. \ref{thm-ploss} in \Cref{pf-thm-ploss}. The \textbf{main technical 
challenge} 
is the evaluation of $\widetilde{\bm{\Sigma}}'$, viz., the covariance matrix of the \textit{weighted} fine-tuning data; see \Cref{lem1} for this.

Now, we are going to compare vanilla FT (\ref{eq:6-jan17}) with $\overline{\eta} = \frac{1}{2}$ and \method \space (\ref{eq:8-jan17}) with $\widehat{\eta} = \frac{1}{2 \mu}$. We believe these are comparable learning rates for vanilla FT and \method because 
the resultant matrices (Eqs. (\ref{eq:10-jan17}) and (\ref{eq:11-jan17})) dictating the convergence of both methods have exactly two non-zero eigenvalues and the corresponding eigenvectors lie in the span of $\overline{\mathbf{{e}}}$ and $\overline{\mathbf{{e}}}_{\perp}$. 
Plugging in $\overline{\eta} = \frac{1}{2}$ into Eq. (\ref{eq:6-jan17}), we get:
\begin{flalign}
    \label{eq:10-jan17}
    \overline{\bm{\theta}}_K = \widetilde{\bm{\theta}}_{*} + \mathbf{P}^K \mathbf{e}, \text{ with } \mathbf{P} = - \rho  \big(\overline{\mathbf{{e}}} \overline{\mathbf{{e}}}_{\perp}^\top + \overline{\mathbf{{e}}}_{\perp} \overline{\mathbf{{e}}}^\top\big)
\end{flalign}
for vanilla FT. Plugging in $\widehat{\eta} = \frac{1}{2 \mu}$ into Eq. (\ref{eq:8-jan17}), we get:
\begin{equation}
    \label{eq:11-jan17}
    \widehat{\bm{\theta}}_K = \widetilde{\bm{\theta}}_{*} + \mathbf{Q}^K \mathbf{e}, \text{ with } \mathbf{Q} \text{ given by Eq. (\ref{eq:11-jan19})}
\end{equation}
for \method. The non-zero eigenvalues of $\mathbf{P}$ are $\mp \rho$ and the corresponding eigenvectors are $\frac{1}{\sqrt{2}}\big(\overline{\mathbf{{e}}} \pm \overline{\mathbf{{e}}}_{\perp}\big)$. Using this in (\ref{eq:10-jan17}) and simplifying, we get for vanilla FT:
\small
\begin{equation}
    \overline{\bm{\theta}}_K = \widetilde{\bm{\theta}}_{*} + {\rho^K} \Big(\mathds{1}\big(K \text{ is even}\big) \mathbf{e} - \mathds{1}\big(K \text{ is odd}\big) \|\mathbf{e}\|_2 \overline{\mathbf{{e}}}_{\perp}\Big).
\end{equation}
\normalsize
\begin{remark}[\textbf{Vanilla FT}]
    \label{rmk-1}
    Since $\rho < 1$, $\overline{\bm{\theta}}_K$  converges to $\widetilde{\bm{\theta}}_{*}$ rapidly, 
    and we cannot impede this convergence. 
\end{remark}
Note that (we use $\bm{\Sigma} \succeq \mathbf{I}_d$ below):
\begin{equation}
    \label{eq:13-jan18}
    \text{err}_\text{tot}(\widetilde{\bm{\theta}}_{*}) = \text{err}_1(\widetilde{\bm{\theta}}_{*}) = \mathbf{e}^\top {\bm{\Sigma}} \mathbf{e} \geq  \|\mathbf{e}\|_2^2.
\end{equation}

On the other hand, the non-zero eigenvalues and corresponding eigenvectors of $\mathbf{Q}$ are not as straightforward to compute. 
We do this computation in \Cref{lem3} with the re-parameterization of $\mu = \sqrt{\frac{\beta (1-\rho^2)}{(1+\beta)(1-\beta \rho^2)}}$ for some $\beta \in (0,1]$.\footnote{The corresponding temperature is $\tau = \frac{2 \beta (1-\rho^2) \|\mathbf{e}\|_2^2}{(1 - \beta^2 \rho^2)}$.} 
Using this in Eq. (\ref{eq:11-jan17}) and simplifying, we get for \method:
\begin{equation}
\label{eq:14-jan17}
\widehat{\bm{\theta}}_K = \widetilde{\bm{\theta}}_{*} + \Bigg(\frac{\widehat{\lambda}_1^K + \widehat{\lambda}_2^K \beta^2 \rho^2}{1 + \beta^2 \rho^2} \Bigg) \mathbf{{e}} -\beta \rho \Bigg(\frac{\widehat{\lambda}_1^K -  \widehat{\lambda}_2^K}{1 + \beta^2 \rho^2} \Bigg) \|\mathbf{{e}}\| \overline{\mathbf{{e}}}_{\perp},
\end{equation}
where $\widehat{\lambda}_1 = \frac{1 + \beta \rho^2}{1 + \beta}$ { and } $\widehat{\lambda}_2 = \rho^2\Big(\frac{1 - \beta}{1 - \beta \rho^2}\Big)$.

\begin{remark}[\methodbold's \textbf{trajectory}]
\label{rmk-2}
{Note that we can control $\widehat{\lambda}_1$ by varying $\beta$. Specifically, we can make $\widehat{\lambda}_1$ arbitrarily close to $1$ by choosing a small enough  $\beta$}. On the other hand, $\frac{1 - \beta}{1 - \beta \rho^2} < \frac{1 + \beta \rho^2}{1 + \beta} = \widehat{\lambda}_1$ and so, $\widehat{\lambda}_2 < \rho^2 \widehat{\lambda}_1$. 
Hence, beyond a certain number of iterations $K$, Eq. (\ref{eq:14-jan17}) 
becomes:
\begin{equation}
    \label{eq:15-jan18}
    \widehat{\bm{\theta}}_K \approx {\bm{\theta}}_K := \widetilde{\bm{\theta}}_{*} + \gamma(K, \beta) \Big(\mathbf{{e}} - \beta \rho \|\mathbf{{e}}\|_2 \overline{\mathbf{{e}}}_{\perp}\Big),
\end{equation}
with $\gamma(K, \beta) := \Big(\frac{\widehat{\lambda}_1^K}{1 + \beta^2 \rho^2}\Big)$. Because we can control $\widehat{\lambda}_1$ by varying $\beta$, we can control $\gamma(K, \beta)$. Thus, \textbf{we can stall convergence along $\big(\mathbf{{e}} - \beta \rho \|\mathbf{{e}}\|_2 \overline{\mathbf{{e}}}_{\perp}\big)$,\footnote{This direction is the top eigenvector of $\mathbf{Q}$. Since $\widetilde{\bm{\Sigma}}' = \mu \big(\mathbf{I}_d - \mathbf{Q}\big)$, this is also the eigenvector of $\widetilde{\bm{\Sigma}}'$ with the smallest eigenvalue.} 
impeding the convergence of $\widehat{\bm{\theta}}_K$ to $\widetilde{\bm{\theta}}_{*}$}. 
\end{remark} 

\begin{remark}[\textbf{\methodbold \space goes beyond model averaging}]
    \label{rmk-3}
    If we perform model averaging between   
    $\bm{\theta}_{*}$ and $\widetilde{\bm{\theta}}_{*}$ with parameter $\omega \in [0,1]$, then our averaged model is:
    \begin{equation}
        \label{eq:16-jan18}
        \bm{\theta}_\textup{avg}(\omega) = \omega \bm{\theta}_{*} + (1-\omega) \widetilde{\bm{\theta}}_{*} = \widetilde{\bm{\theta}}_{*} + \omega \mathbf{{e}}. 
    \end{equation}
    Comparing the above with Eq. (\ref{eq:15-jan18}), we see that \method \space goes beyond model averaging because of the component along $\overline{\mathbf{{e}}}_{\perp}$. But we can make ${\bm{\theta}}_K$ (Eq. (\ref{eq:15-jan18})) $\to$ $\bm{\theta}_\textup{avg}(\omega)$ by choosing $\beta \to 0$ and $K$ such that $\gamma(K, \beta) \to \omega$. So, we expect \methodbold \space \textbf{to be at least as powerful as model averaging}.
\end{remark}
As per \Cref{lem-4}, the minimum total error on both tasks with optimally tuned model averaging is given by:
\begin{equation}
    \label{eq:17-jan18}
    \min_{\omega \in [0,1]} \textup{err}_\textup{tot}\big(\bm{\theta}_\textup{avg}(\omega)\big) =  \Bigg(\frac{\overline{\mathbf{{e}}}^\top \bm{\Sigma} \overline{\mathbf{{e}}}}{\overline{\mathbf{{e}}}^\top \bm{\Sigma} \overline{\mathbf{{e}}} + 1}\Bigg) \|{\mathbf{{e}}}\|_2^2 < \|{\mathbf{{e}}}\|_2^2,
\end{equation}
where recall that $\bm{\Sigma}$ is the covariance matrix of the pre-training  data. On the other hand, using Eqs. (\ref{eq:4-jan18}) and (\ref{eq:13-jan18})
\begin{equation}
    \label{eq:18-jan18}
    \textup{min}\Big(\textup{err}_\textup{tot}(\bm{\theta}_{\ast}),  \textup{err}_\textup{tot}(\widetilde{\bm{\theta}}_{*})\Big) = \|{\mathbf{{e}}}\|_2^2.
\end{equation}
\begin{remark}[\textbf{Error comparison}]
    \label{rmk-4}
    By comparing Eqs. (\ref{eq:17-jan18}) and (\ref{eq:18-jan18}), we see that optimally tuned model averaging attains a smaller total error than both $\bm{\theta}_{\ast}$ (i.e., the pre-trained model) and $\widetilde{\bm{\theta}}_{*}$ to which vanilla FT converges rapidly (\Cref{rmk-1}). More importantly, following our discussion in \Cref{rmk-3}, we conclude that \textbf{optimally tuned \methodbold's total error is 
    at least as good as the one in Eq. (\ref{eq:17-jan18}}).
\end{remark}




\section{Conclusion}
In this paper, we studied the problem of catastrophic forgetting in pre-trained models during fine-tuning when we do not have access to the pre-training data.
To mitigate this issue, we proposed \method, a method which upweights easy samples based on the pre-trained loss values. Empirically, we showed that \method, on average, outperforms relevant baselines and is also complementary to these baselines in both vision and language settings. 
We also theoretically analyzed \method for linear models.

\paragraph{Discussion and limitations.} We would like to conclude with an overview of our work's limitations and potential future directions. In lay terms, mitigating forgetting of pre-training capabilities comes at the cost of relatively lower fine-tuning performance. \method maintains this balance by sacrificing performance on \emph{hard samples from the fine-tuning data}. \cref{tab:hard_samples_flow} indicates that \method has lower accuracy on samples with high pre-training loss (\enquote{hard samples}) compared to standard FT. Our method selectively down-weighs samples with high pre-training losses for preserving pre-training performance. An interesting future direction is improving performance on such samples while maintaining or improving overall performance. On the theoretical side, we hope to extend our analysis to generalized linear models (GLMs) and even non-linear models.

\begin{table}[htb!]
\centering
\caption{{
\textbf{Comparison of \method{} and Standard-FT on hard samples across three vision datasets.}\
We evaluate performance on the top 10\% hardest samples (those with the highest pre-trained losses). Indeed, the samples with high pre-training losses have lower accuracy when using \method compared to standard fine-tuning (FT). This is an unsurprising outcome of our approach; we sacrifice performance on hard examples of the fine-tuning data to maintain performance on the pre-training data.
}}
\vspace{0.1 cm}
\small
\resizebox{0.96\columnwidth}{!}{
\begin{tabular}{lccc}
\toprule
\textbf{Dataset}           & \textbf{\# of Hard Samples} & \textbf{Standard FT} & \textbf{FLOW} \\
\midrule
CIFAR-10       & 1000 & 86.60 & 30.70 \\
CIFAR-100      & 1000 & 56.40 & 21.30 \\
Stanford Cars  & 805  & 71.30 & 13.18 \\
\bottomrule
\end{tabular}
}
\label{tab:hard_samples_flow}
\end{table}

\section*{Acknowledgments}
{This research was supported by NSF EnCORE Tripods (2217069) and NSF AI Institute for the Foundations of Machine Learning (2019844). Research of Ali Kavis is funded in part by the Swiss National Science Foundation (SNSF) under grant number P500PT\_217942. The authors are grateful to anonymous reviewers for their feedback on improving this paper.}

\section*{Impact Statement}

This paper presents work whose goal is to advance the field of machine learning. There are potential societal consequences 
of our work, none of which we feel must be specifically highlighted here.

\bibliography{refs}
\bibliographystyle{icml2025}

\newpage
\appendix
\onecolumn



\begin{center}
    {\LARGE \bf Appendix}
\end{center}

\section*{Table of Contents}
\vspace{1mm}
\begin{itemize}
    \item \Cref{app:related-work}: Extended Related Work 
    \item \Cref{alg:multi-head}: Our Algorithm in the Presence of Task-Specific Model Components
    \item \Cref{pf-prop-pi}: Proof of Proposition~\ref{prop-pi}
    \vspace{0.1 cm}
    \item \Cref{pf-thm-ploss}: Proof of Theorem~\ref{thm-ploss}
    \vspace{0.1 cm}
    \item \Cref{sec:gen_sigma}: Difficulty in the Analysis with a General Covariance Matrix $\widetilde{\bm{\Sigma}}$
    \vspace{0.1 cm}
    \item \Cref{lemma-sec}: Lemmas Used and Their Proofs
    \vspace{0.1 cm}
    \item \Cref{app:experimental-details}: Experimental Details
    \begin{itemize}
        \vspace{0.1 cm}
        \item \Cref{app:add-baseline-detials}: Baseline Details
        \vspace{0.1 cm}
        \item \Cref{app:language-hyperparameters}: Language Model Hyper-Parameters
        \vspace{0.1 cm}
        \item \Cref{app:further-language-evaluation-details}: Language Model Evaluation Details
        \vspace{0.1 cm}
        \item \Cref{app:vision-hyperparameters}: Vision Model Implementation Details
    \end{itemize}
    \vspace{0.1 cm}
    \item \Cref{add-vis-results}: Detailed Vision Results and Ablations
    \begin{itemize}
        \item \Cref{app:dist}: {Comparison with a Distillation-Based Method for Mitigating Forgetting}
    \end{itemize}
    \vspace{0.1 cm}
    \item \Cref{add-lang-results}: Additional Language Model Results and Ablations
    \begin{itemize}
        \vspace{0.1 cm}
        \item \Cref{app:extended-commonsense-results}: Extended Commonsense Reasoning Results
        \vspace{0.1 cm}
        \item  \Cref{app:token-wise-ablation}: Token-wise Sample Weighting Ablations
        \vspace{0.1 cm}
        \item \Cref{app:extended-wa-results}: Extended Weight Averaging Results
        \vspace{0.1 cm}
    \end{itemize}
\end{itemize}

\clearpage

\section{Extended Related Work}
\label{app:related-work}
The majority of the approaches for mitigating 
forgetting assume task-specific knowledge access to different extents; either (a subset of) the pre-training dataset itself or some information/statistic computed from pre-training data. Below, we describe the data-aware approaches based on how they make use of task-specific knowledge.

\textbf{Regularization-based methods. } This line of work aims to preserve performance on previously learned tasks by keeping the (fine-tuned) model parameters close to the pre-trained model. The key idea is to introduce task-specific regularization in the fine-tuning phase which will penalize updates along the ``important'' directions for the old tasks 
\citep{ahn2019uncertainty}. 
\citet{kirkpatrick2016overcoming} introduces the elastic weight consolidation (EWC) algorithm, which estimates the important direction per-task by calculating a diagonal approximation to the Fisher information matrix (FIM), which  
acts as the weight matrix for the regularization term.
Several variants of EWC have been subsequently proposed \citep{schwarz2018progress, ritter2018online, Lee2020ContinualLW, liu2018rotate}. \citet{zenke2017continual, aljundi2018memory} adopt online strategies to infer the importance of each parameter by their variational effect on the model outputs. In a spirit similar to EWC, \citet{lee2017overcoming} incrementally matches the posterior of the pre-trained model and the new task by assuming Gaussian posteriors. 

\textbf{Optimization-driven methods. } Another perspective to mitigating forgetting is guiding the optimization process by constraining the algorithms directly as opposed to manipulating the loss function. 
The core idea is to keep track of \enquote{important directions} for the old tasks, and train on the new task \enquote{orthogonally.} 
This could be done by storing prior data samples or gradients in a buffer \citep{lopezpaz2017gradient, farajtabar2020orthogonal, chaudhry2018efficient} or by incrementally expanding the subspace of important directions without storing task-specific information \citep{zeng2019continual, wang2021training, wang2023orthogonal}.

\textbf{Replay-based methods. } Drawing inspiration from the complementary learning systems theory \citep{mcclelland1995why}, a more direct approach is to introduce samples from old tasks into the training process for the new task. Samples are selected in a streaming fashion or by manually crafting a subset on demand, stored in dedicated buffers 
and replayed during the fine-tuning. 
The intuition is that the task-specific representations are refreshed periodically through historical data.

Replay-based methods consist of two fundamental components: data selection and data reiteration mechanisms. 
When the data is received in a streaming fashion, information has to be buffered online \citep{riemer2019learning, chaudhry2019continual, isle2018selective, delange2021continual}. 
In the case when datasets are available on demand, \citet{rebuffi2017icarl} selects samples which are \enquote{representative} of their respective class, while others focus on inducing diversity \citep{aljundi2019gradient, Bang2021RainbowMC} and balance \citep{borsos2020coresets, tiwari2021gcr} across buffered data. 
For the scenarios in which storage is limited, \citet{caccia2020online, wang2022memory} develop compression methods for buffered data.
As a complimentary component to the data selection process, how the buffered data is replayed plays a significant role in the success of such methods. A fundamental idea, which has several interpretations across the board, is knowledge distillation \citep{hinton2015distilling}. Prior work argues that augmenting fine-tuning with knowledge distillation shows great performance on the forgetting front \citep{lopezpaz2017gradient, chaudhry2018efficient, rebuffi2017icarl, Jung2017LessforgetfulLF, Triki2017EncoderBL, li2016learning, Lee2019OvercomingCF, dhar2019learning}. 

An orthogonal research direction focuses on maintaining a generative model that could reliably output pseudo-samples that are representative of the dataset of the old tasks \citep{kemker2018fearnet, wu2018incremental}. 
Note that generative approaches are prone to scalability issues and distribution shifts. 

\textbf{Architecture-driven methods. } Another technique to limit the interference between tasks is allocating a separate trainable set of parameters per task. 
This could be done by initializing a sub-networks per new task \citep{Rusu2016ProgressiveNN, aljundi2017expertgate, patrick2020routing, rajasegaran2019random, Ramesh2021ModelZA, wang2023incorporating, wang2022coscl}, gradually expanding the parameters of a base network \citep{yoon2018lifelong, Ostapenko2019LearningTR, hung2019compacting}, or segregating a fixed model into task-specific subset of parameters \citep{mallya2018piggyback, kang2022forgetfree, serra2018overcoming, worstman2020supermasks, Mallya2017PackNetAM, gurbuz2022nispa, jung2020continual}. While some parameters are task-specific, 
parts of the overall model could be shared to enable knowledge transfer. 
The main downside associated is that the task identity must be available during inference to (de)activate relevant sub-networks, hindering versatility. 
\citet{aljundi2017expertgate} develop dedicated strategies to overcome the need for task identification by automatizing task-specific parameter activation.

\textbf{\method and its connection to other 
ML applications. } {Our approach has connections to factuality in LLM training \citep{ghosal2024understanding, gekhman2024finetuning}. For a fine-tuning task on a factual knowledge-based dataset, samples that are not properly represented in the pre-training distribution could be considered hard. In the data-oblivious setting, one principled way would be to rank \emph{hardness} based on pre-training loss values; therefore, \method would identify such samples and guide the fine-tuning process such that those samples will not unexpectedly deteriorate the performance.

Although fundamentally different in its objective, machine unlearning is another avenue of application for our sample weighting approach which could be interpreted as a means of \emph{soft unlearning}. While unlearning and forgetting sound similar, they are not the same in spirit; in unlearning we deliberately induce “forgetting” on some samples, whereas in our context, forgetting is an undesirable side effect that we want to avoid. Hence, it might not be straightforward extend techniques of one into another, one could extend our sample-wise weighting scheme with appropriate modifications to help selectively unlearn specific samples.}

\section{Our Algorithm in the Presence of Task-Specific Model Components}
\label{alg:multi-head}
Suppose our model is parameterized by $\bm{\theta} = {\mathbf{U}} \cup {\mathbf{V}}$, where $\mathbf{U}$ is the common/shared part of the model for all tasks (i.e., this part remains the same for all tasks), and $\mathbf{V}$ is the task-specific part of the model. In particular, in our vision experiments, the models have task-specific prediction heads (i.e., softmax layers) and batch-norm (BN). The modified version of \Cref{alg:method} in the presence of task-specific components is stated in \Cref{alg:example-ii}. The main differences from \Cref{alg:method} are steps (i) and (iv) -- these steps optimize the task-specific part for the new task with uniform weighting. 
It is worth mentioning that if our model consists of task-specific prediction heads -- which is the case in our vision experiments -- then steps (i) and (iv) are just vanilla linear probing with the pre-trained body and the body learned after fine-tuning, respectively. 

 \begin{algorithm}[h] 
    \caption{\textbf{F}ine-tuning with Pre-trained \textbf{L}oss-\textbf{O}riented \textbf{W}eighting (\methodbold)}
    \label{alg:example-ii}
 \begin{algorithmic}
    \STATE {\bfseries Input:} Pre-trained model $\bm{\theta}_{*}^{(1)} = {\mathbf{U}}_{*}^{(1)} \cup {\mathbf{V}}_{*}^{(1)}$, dataset $\{(\mathbf{x}_i, \text{y}_i)\}_{i=1}^n$ for the new task, and temperature parameter $\tau$.
    \vspace{0.25 cm}
    \STATE $f_i({\mathbf{U}}, {\mathbf{V}}) \rightarrow$ $i^\text{th}$ sample’s loss at $\bm{\theta} = {\mathbf{U}} \cup {\mathbf{V}}$, with a non-negative loss function (e.g., cross-entropy loss).
    \vspace{0.25 cm}
    \STATE \textbf{Step (i)} {Fine-tune task-specific part for new task with vanilla unweighted loss:} ${\mathbf{V}}_{*}^{(2)} := \text{argmin }_{\mathbf{V}} \sum_{i=1}^n f_i({\mathbf{U}}_{*}^{(1)}, {\mathbf{V}}).$
    \vspace{0.25 cm}
    \STATE \textbf{Step (ii)} Compute sample weights: $w_i = \exp\left(- {f_i({\mathbf{U}}_{*}^{(1)}, {\mathbf{V}}_{*}^{(2)})}\big/{\tau}\right)$. 
    \vspace{0.25 cm}
    \STATE \textbf{Step (iii)} Fine-tune full model \textbf{with weighted loss}: $\overline{\mathbf{U}}_{*}^{(2)}, \overline{\mathbf{V}}_{*}^{(2)} := \text{argmin }_{\mathbf{U}, \mathbf{V}} \sum_{i=1}^n w_i f_i({\mathbf{U}}, {\mathbf{V}})$.
    \vspace{0.25 cm}
    {\STATE \textbf{Step (iv)} {Fine-tune task-specific part for new task  \textit{using the learned common part} with vanilla unweighted loss:} $\widehat{\mathbf{V}}_{*}^{(2)} := \text{argmin }_{\mathbf{V}} \sum_{i=1}^n f_i(\overline{\mathbf{U}}_{*}^{(2)}, {\mathbf{V}}).$}
    \vspace{0.25 cm}
    \STATE \textbf{Output:} New model for 
    \begin{itemize}
        \item Original/pre-training task is $\widehat{\bm{\theta}}_{*}^{(1)} = \overline{\mathbf{U}}_{*}^{(2)} \cup {\mathbf{V}}_{*}^{(1)}$.
        \item New/fine-tuning task is $\widehat{\bm{\theta}}_{*}^{(2)} = \overline{\mathbf{U}}_{*}^{(2)} \cup \widehat{\mathbf{V}}_{*}^{(2)}$.
    \end{itemize}
 \end{algorithmic}
 \label{alg:vision_algo}
 \end{algorithm}

\begin{remark}
    In all our vision experiments (with task-specific parts), we set $\tau = \textup{median}(f_i({\mathbf{U}}_{*}^{(1)}, {\mathbf{V}}_{*}^{(2)}))$ (similar to \Cref{rmk-tau}).
\end{remark}

\section{Proof of Proposition~\ref{prop-pi}}
\label{pf-prop-pi}
\begin{proof}
    We wish to minimize $g(\bm{\pi}) = \sum_{i=1}^n \pi_i f_i(\bm{\theta}^{*}) + \tau \sum_{i=1}^n \pi_i \log \pi_i$ subject to $\sum_{i=1}^n \pi_i = 1$ and $\pi_i \geq 0$ for all $i \in [n]$. The proof is a straightforward application of Lagrangian multipliers. It is enough to enforce $\sum_{i=1}^n \pi_i = 1$ only ($\pi_i \geq 0$ for all $i \in [n]$ will also follow). For that, the Lagrangian function is:
    \begin{equation}
        J(\bm{\pi}, \lambda) = \sum_{i=1}^n \pi_i f_i(\bm{\theta}^{*}) + \tau \sum_{i=1}^n \pi_i \log \pi_i + \lambda \Big(\sum_{i=1}^n \pi_i - 1\Big),
    \end{equation}
    where $\lambda$ is the Lagrangian multiplier. Now, at the optimal point $\bm{\pi}^{*} = [\pi_1^{*}, \ldots, \pi_n^{*}]^\top$, we must have:
    \begin{equation}
        \frac{\partial J}{\partial \pi_i}\Bigg|_{\pi_i^{*}} = f_i(\bm{\theta}^{*}) + \tau \left(1 + \log \pi_i^{*}\right) + \lambda = 0,
    \end{equation}
    for all $i \in [n]$. Simplifying, we get:
    \begin{equation}
        \pi_i^{*} = \frac{1}{Z} \exp\left(-\frac{f_i(\bm{\theta}^{*})}{\tau}\right),
    \end{equation}
    where $Z = \exp\left(\left(1 + \frac{\lambda}{\tau}\right)\right)$ is the normalizing constant. To have $\sum_{i=1}^n \pi_i = 1$, we get $Z = \sum_{j=1}^n \exp \left( - \frac{f_j({\mathbf{{W}}}^{*})}{\tau} \right)$. Also, note that we are good with the non-negativity constraints.
\end{proof}

\section{Proof of Theorem~\ref{thm-ploss}}
\label{pf-thm-ploss}
\begin{proof}
Note that:
\begin{equation}
    \widehat{\text{err}}_2(\widehat{\bm{\theta}}) = \big(\widehat{\bm{\theta}} - \widetilde{\bm{\theta}}_{*}\big)^\top \mathbb{E}_{\widetilde{\mathcal{D}}}\Big[w(\widetilde{\mathbf{x}}, \widetilde{\text{y}}) \widetilde{\mathbf{x}} \widetilde{\mathbf{x}}^\top\Big] \big(\widehat{\bm{\theta}} - \widetilde{\bm{\theta}}_{*}\big).
\end{equation}
Also, after plugging in $\widetilde{\text{y}} = \big\langle \widetilde{\bm{\theta}}_{*}, \widetilde{\mathbf{x}} \big\rangle$, we get:
$$w(\widetilde{\mathbf{x}}, \widetilde{\text{y}}) = \exp\Bigg(-\frac{\big(\langle \bm{\theta}_{\ast} - \widetilde{\bm{\theta}}_{*}, \widetilde{\mathbf{x}} \rangle\big)^2}{\tau}\Bigg).$$
Recall $\mathbf{e} := \bm{\theta}_{\ast} - \widetilde{\bm{\theta}}_{*}$ and  $\overline{\mathbf{e}} := \frac{\mathbf{e}}{\|\mathbf{e}\|_2}$. Suppose $\tau = \alpha \|\mathbf{e}\|_2^2$, for some $\alpha > 0$. Then $w(\widetilde{\mathbf{x}}, \widetilde{\text{y}}) = \exp\Big(-\frac{(\langle \overline{\mathbf{e}}, \widetilde{\mathbf{x}} \rangle)^2}{\alpha}\Big)$, and we can focus on
\begin{equation}
    \label{eq:23-jan30}
    \widetilde{\bm{\Sigma}}' := \mathbb{E}_{\widetilde{\mathbf{x}} \sim \widetilde{\mathcal{P}}}\Bigg[\exp\Bigg(-\frac{\big(\langle \overline{\mathbf{e}}, \widetilde{\mathbf{x}} \rangle\big)^2}{\alpha}\Bigg) \widetilde{\mathbf{x}} \widetilde{\mathbf{x}}^\top \Bigg].
\end{equation}
Let $\mu = \big(\frac{\alpha}{\alpha+2}\big)^{1/2} = \Big(\frac{\tau}{\tau + 2\|\mathbf{e}\|_2^2}\Big)^{1/2}$. As per \Cref{lem1}, we have:
\begin{equation}
    \label{eq:8-jan14}
    \widetilde{\bm{\Sigma}}' = \mu \big(\mathbf{I}_d - \mathbf{Q}\big),
\end{equation}
where 
\begin{equation}
    \mathbf{Q} = (1 - \mu^2)\overline{\mathbf{{e}}} \overline{\mathbf{{e}}}^\top + \rho^2 (1 - \mu^2) \overline{\mathbf{{e}}}_{\perp} \overline{\mathbf{{e}}}_{\perp}^\top - \rho \mu^2 \big(\overline{\mathbf{{e}}} \overline{\mathbf{{e}}}_{\perp}^\top + \overline{\mathbf{{e}}}_{\perp} \overline{\mathbf{{e}}}^\top\big).
\end{equation}
So if we minimize $\widehat{\text{err}}_2(\widehat{\bm{\theta}})$ with GD starting from {$\widehat{\bm{\theta}}_0 = \bm{\theta}_{\ast}$} and using a constant learning rate $\widehat{\eta}$, our iterate $\widehat{\bm{\theta}}_K$ at the $K^\text{th}$ iteration satisfies:
\begin{flalign}
    \widehat{\bm{\theta}}_K - \widetilde{\bm{\theta}}_{*} & = \Big(\mathbf{I}_d - 2 \widehat{\eta} \widetilde{\bm{\Sigma}}'\Big)^K \big(\bm{\theta}_{\ast} - \widetilde{\bm{\theta}}_{*}\big) = \Big(\mathbf{I}_d - 2 \widehat{\eta} \widetilde{\bm{\Sigma}}'\Big)^K \mathbf{e},
\end{flalign}
where the last step follows by recalling that $\bm{\theta}_{\ast} - \widetilde{\bm{\theta}}_{*} = \mathbf{e}$, and $\widetilde{\bm{\Sigma}}'$ is given by \cref{eq:8-jan14}.
\end{proof}

\section{Difficulty in the Analysis with a General Covariance Matrix $\widetilde{\bm{\Sigma}}$}
\label{sec:gen_sigma}
We will first derive the \textit{weighted} (fine-tuning) data covariance matrix $\widetilde{\bm{\Sigma}}'$ in the context of \Cref{thm-ploss} for a general (fine-tuning) data covariance matrix $\widetilde{\bm{\Sigma}}$. Specifically, following the proof of \Cref{thm-ploss}, we have:
\begin{equation}
    \widetilde{\bm{\Sigma}}' := \mathbb{E}_{\widetilde{\mathbf{x}} \sim \mathcal{N}(\vec{\bm{0}}_d, \widetilde{\bm{\Sigma}})}\Bigg[\exp\Bigg(-\frac{\big(\langle {\mathbf{e}}, \widetilde{\mathbf{x}} \rangle\big)^2}{\tau}\Bigg) \widetilde{\mathbf{x}} \widetilde{\mathbf{x}}^\top \Bigg].
\end{equation}
Note that $\widetilde{\mathbf{x}} = \widetilde{\bm{\Sigma}}^{1/2} \mathbf{z}$, where ${\mathbf{z}} \sim \mathcal{N}(\vec{0}_d, \mathbf{I}_d)$. Using this above, we get:
\begin{equation}
    \widetilde{\bm{\Sigma}}' = \widetilde{\bm{\Sigma}}^{1/2} \mathbb{E}\Bigg[\exp\Bigg(-\frac{\big(\langle {\mathbf{e}}, \widetilde{\bm{\Sigma}}^{1/2} \mathbf{z} \rangle\big)^2}{\tau}\Bigg) \mathbf{z} \mathbf{z}^\top \Bigg] \widetilde{\bm{\Sigma}}^{1/2} = \widetilde{\bm{\Sigma}}^{1/2} \mathbb{E}\Bigg[\exp\Bigg(-\frac{\big(\langle {\widetilde{\bm{\Sigma}}^{1/2} \mathbf{e}}, \mathbf{z} \rangle\big)^2}{\tau}\Bigg) \mathbf{z} \mathbf{z}^\top \Bigg] \widetilde{\bm{\Sigma}}^{1/2},
\end{equation}
where the last step follows by using the symmetry of $\widetilde{\bm{\Sigma}}$. Let $\tau = \alpha \big\|\widetilde{\bm{\Sigma}}^{1/2} \mathbf{e}\big\|_2^2$ for some $\alpha > 0$. Also, let $\mathbf{r} := {(\widetilde{\bm{\Sigma}}^{1/2} \mathbf{e})}/{\|\widetilde{\bm{\Sigma}}^{1/2} \mathbf{e}\|_2}$. In that case, we have:
\begin{equation}
    \label{eq:9-jan11}
    \widetilde{\bm{\Sigma}}' = \widetilde{\bm{\Sigma}}^{1/2} \mathbf{M} \widetilde{\bm{\Sigma}}^{1/2}, \text{ where } \mathbf{M} := \mathbb{E}\Bigg[\exp\Bigg(-\frac{\big(\langle \mathbf{r}, \mathbf{z} \rangle\big)^2}{\alpha}\Bigg) \mathbf{z} \mathbf{z}^\top \Bigg].
\end{equation}

Suppose $\{{\mathbf{{r}}}_{\perp, j}\}_{j=1}^{d-1}$ is an orthonormal basis for the subspace of $\mathbb{R}^d$ orthogonal to $\mathbf{r}$; so $\langle {\mathbf{{r}}}_{\perp, j}, \mathbf{r} \rangle = 0$ $\forall$ $j \in [d-1]$ and $\langle {\mathbf{{r}}}_{\perp, j}, {\mathbf{{r}}}_{\perp, k} \rangle = \mathds{1}(j=k)$ $\forall$ $j, k \in [d-1]$. {Note that $\{\mathbf{r}, \mathbf{r}_{\perp, 1}, \ldots, \mathbf{r}_{\perp, d-1}\}$ forms an orthonormal basis for $\mathbb{R}^d$.}  
Then, as per \Cref{lem1-jan30}, we have that $\mathbf{r}$ is an eigenvector of $\mathbf{M}$ with eigenvalue $\big(\frac{\alpha}{\alpha+2}\big)^{3/2}$, and each $\mathbf{r}_{\perp, j}$ is an eigenvector of $\mathbf{M}$ with eigenvalue $\big(\frac{\alpha}{\alpha+2}\big)^{1/2}$. For brevity, let $\mu = \big(\frac{\alpha}{\alpha+2}\big)^{1/2}$. Then, we can write:
\begin{equation}
    \label{eq:10-jan12}
    \mathbf{M} = \mu^3 \mathbf{r} \mathbf{r}^\top + \mu \sum_{j=1}^{d-1} \mathbf{r}_{\perp, j} \mathbf{r}_{\perp, j}^\top = \mu^3 \mathbf{r} \mathbf{r}^\top + \mu \Big(\mathbf{I}_d - \mathbf{r} \mathbf{r}^\top\Big),
\end{equation}
where the last step follows because $\{\mathbf{r}, \mathbf{r}_{\perp, 1}, \ldots, \mathbf{r}_{\perp, d-1}\}$ forms an orthonormal basis for $\mathbb{R}^d$, due to which $\mathbf{r} \mathbf{r}^\top + \sum_{j=1}^{d-1} \mathbf{r}_{\perp, j} \mathbf{r}_{\perp, j}^\top = \mathbf{I}_d$. Simplifying \cref{eq:10-jan12} a bit, we get:
\begin{equation}
    \mathbf{M} = \mu \Big(\mathbf{I}_d - (1 - \mu^2)\mathbf{r} \mathbf{r}^\top\Big).
\end{equation}
Plugging this into \cref{eq:9-jan11} and recalling that $\mathbf{r} := {(\widetilde{\bm{\Sigma}}^{1/2} \mathbf{e})}/{\|\widetilde{\bm{\Sigma}}^{1/2} \mathbf{e}\|_2}$, we get:
\begin{equation}
    \label{eq:11-jan11}
    \widetilde{\bm{\Sigma}}' = \mu \mathbf{B}, \text{ where } \mathbf{B} := \Bigg(\widetilde{\bm{\Sigma}} - (1 - \mu^2)\frac{{\widetilde{\bm{\Sigma}} \mathbf{e} \mathbf{e}^\top \widetilde{\bm{\Sigma}}}}{\mathbf{e}^\top \widetilde{\bm{\Sigma}} \mathbf{e}}\Bigg).
\end{equation}
\Cref{eq:11-jan11} is the weighted covariance matrix for a general $\widetilde{\bm{\Sigma}}$. 

\begin{remark}[\textbf{Difficulty with general $\widetilde{\bm{\Sigma}}$}]
It is hard to proceed with the analysis after this point because it 
is difficult to characterize the eigen-spectrum of $\mathbf{B}$ in general, without assuming any relation between $\widetilde{\bm{\Sigma}}$ and $\mathbf{e}$. This is what we meant in \Cref{rmk-just-jan30}.
\end{remark}

\section{Lemmas Used and Their Proofs}
\label{lemma-sec}
\begin{lemma}
    \label{lem1}
    In the proof of \Cref{thm-ploss}, recall that $\tau = \alpha \|\mathbf{{e}}\|_2^2$. Then, we have:
    \begin{flalign*}
        \widetilde{\bm{\Sigma}}' & := \mathbb{E}_{\widetilde{\mathbf{{x}}} \sim \widetilde{\mathcal{P}}}\Bigg[\exp\Bigg(-\frac{\big(\langle \overline{\mathbf{{e}}}, \widetilde{\mathbf{{x}}} \rangle\big)^2}{\alpha}\Bigg) \widetilde{\mathbf{{x}}} \widetilde{\mathbf{{x}}}^\top \Bigg] 
        = \mu \Big(\mathbf{{I}}_d - (1 - \mu^2)\overline{\mathbf{{e}}} \overline{\mathbf{{e}}}^\top - \rho^2 (1 - \mu^2) \overline{\mathbf{{e}}}_{\perp} \overline{\mathbf{{e}}}_{\perp}^\top + \rho \mu^2 \big(\overline{\mathbf{{e}}} \overline{\mathbf{{e}}}_{\perp}^\top + \overline{\mathbf{{e}}}_{\perp} \overline{\mathbf{{e}}}^\top\big)\Big),
    \end{flalign*}
    where $\mu = \big(\frac{\alpha}{\alpha+2}\big)^{1/2} = \Big(\frac{\tau}{\tau + 2\|\mathbf{{e}}\|_2^2}\Big)^{1/2}$.
\end{lemma}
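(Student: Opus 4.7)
\textbf{Proof proposal for \Cref{lem1}.} The plan is to reduce the computation of $\widetilde{\bm{\Sigma}}'$ to a handful of one- and two-dimensional Gaussian integrals by choosing a basis adapted to the low-rank structure of $\widetilde{\bm{\Sigma}}$. Extend $\{\overline{\mathbf{e}}, \overline{\mathbf{e}}_\perp\}$ to an orthonormal basis $\{\overline{\mathbf{e}}, \overline{\mathbf{e}}_\perp, \mathbf{u}_1, \ldots, \mathbf{u}_{d-2}\}$ of $\mathbb{R}^d$ and write
\[
\widetilde{\mathbf{x}} \;=\; a\,\overline{\mathbf{e}} \;+\; b\,\overline{\mathbf{e}}_\perp \;+\; \sum_{j=1}^{d-2} c_j\,\mathbf{u}_j,
\]
where $a = \langle \overline{\mathbf{e}}, \widetilde{\mathbf{x}}\rangle$, $b = \langle \overline{\mathbf{e}}_\perp, \widetilde{\mathbf{x}}\rangle$, $c_j = \langle \mathbf{u}_j, \widetilde{\mathbf{x}}\rangle$. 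A direct computation using the form of $\widetilde{\bm{\Sigma}}$ in Eq.~(\ref{eq:1-jan15}) shows that $(a,b,c_1,\ldots,c_{d-2})$ is jointly Gaussian with $\mathrm{Var}(a)=\mathrm{Var}(b)=\mathrm{Var}(c_j)=1$, $\mathrm{Cov}(a,b)=\rho$, and all other covariances equal to $0$. In particular, $(a,b) \sim \mathcal{N}\!\left(\vec{\mathbf{0}}, \bigl[\begin{smallmatrix} 1 & \rho \\ \rho & 1\end{smallmatrix}\bigr]\right)$ is independent of $(c_1,\ldots,c_{d-2}) \sim \mathcal{N}(\vec{\mathbf{0}}_{d-2}, \mathbf{I}_{d-2})$, and the weight $\exp(-\langle \overline{\mathbf{e}}, \widetilde{\mathbf{x}}\rangle^2/\alpha) = \exp(-a^2/\alpha)$ depends only on $a$.

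Expand $\widetilde{\mathbf{x}} \widetilde{\mathbf{x}}^\top$ in this basis. All cross-terms of the form $a c_j$ and $b c_j$ vanish in expectation (since $\mathbb{E}[c_j]=0$ and $c_j$ is independent of $(a,b)$), and $\mathbb{E}[c_j c_k] = \mathds{1}(j=k)$, so
\[
\sum_{j,k}\mathbb{E}[c_j c_k]\,\mathbf{u}_j\mathbf{u}_k^\top \;=\; \sum_{j}\mathbf{u}_j\mathbf{u}_j^\top \;=\; \mathbf{I}_d - \overline{\mathbf{e}}\overline{\mathbf{e}}^\top - \overline{\mathbf{e}}_\perp\overline{\mathbf{e}}_\perp^\top.
\]
Consequently,
\[
\widetilde{\bm{\Sigma}}' \;=\; A\,\overline{\mathbf{e}}\overline{\mathbf{e}}^\top + B\,\bigl(\overline{\mathbf{e}}\overline{\mathbf{e}}_\perp^\top + \overline{\mathbf{e}}_\perp\overline{\mathbf{e}}^\top\bigr) + C\,\overline{\mathbf{e}}_\perp\overline{\mathbf{e}}_\perp^\top + D\,\bigl(\mathbf{I}_d - \overline{\mathbf{e}}\overline{\mathbf{e}}^\top - \overline{\mathbf{e}}_\perp\overline{\mathbf{e}}_\perp^\top\bigr),
\]
with $A = \mathbb{E}[a^2 e^{-a^2/\alpha}]$, $B = \mathbb{E}[ab\,e^{-a^2/\alpha}]$, $C = \mathbb{E}[b^2 e^{-a^2/\alpha}]$, and $D = \mathbb{E}[e^{-a^2/\alpha}]$.

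The four scalars are then standard Gaussian integrals. Completing the square in the $a$-integral gives $D = \mu$ and $A = \mu^3$ (writing the integrand as a rescaled density of $\mathcal{N}(0,\mu^2)$ with $\mu^2 = \alpha/(\alpha+2)$). For the mixed moments I condition on $a$, using $b\mid a \sim \mathcal{N}(\rho a, 1-\rho^2)$: this yields $B = \rho\,\mathbb{E}[a^2 e^{-a^2/\alpha}] = \rho\mu^3$ and $C = \rho^2 A + (1-\rho^2) D = \rho^2\mu^3 + (1-\rho^2)\mu$. Substituting these into the display above and collecting, the coefficient of $\overline{\mathbf{e}}\overline{\mathbf{e}}^\top$ becomes $A - D = -\mu(1-\mu^2)$, the coefficient of $\overline{\mathbf{e}}_\perp\overline{\mathbf{e}}_\perp^\top$ becomes $C - D = -\rho^2\mu(1-\mu^2)$, and the cross-term coefficient is $\rho\mu^3$, which matches the claimed expression $\widetilde{\bm{\Sigma}}' = \mu\bigl(\mathbf{I}_d - (1-\mu^2)\overline{\mathbf{e}}\overline{\mathbf{e}}^\top - \rho^2(1-\mu^2)\overline{\mathbf{e}}_\perp\overline{\mathbf{e}}_\perp^\top + \rho\mu^2(\overline{\mathbf{e}}\overline{\mathbf{e}}_\perp^\top + \overline{\mathbf{e}}_\perp\overline{\mathbf{e}}^\top)\bigr)$.

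The only substantive step is identifying the correct basis: once one recognizes that the weight depends solely on the $\overline{\mathbf{e}}$-component and that $\{\overline{\mathbf{e}},\overline{\mathbf{e}}_\perp\}$ diagonalizes the off-identity part of $\widetilde{\bm{\Sigma}}$, the remaining bookkeeping is routine Gaussian calculus with the conditional law $b\mid a$. The main obstacle, in my view, is simply being careful that the $d-2$ leftover directions $\{\mathbf{u}_j\}$ contribute the $\mathbf{I}_d - \overline{\mathbf{e}}\overline{\mathbf{e}}^\top - \overline{\mathbf{e}}_\perp\overline{\mathbf{e}}_\perp^\top$ block with the coefficient $D=\mu$ rather than being absorbed into the special directions; all signs and factors of $(1-\mu^2)$ follow from the bookkeeping.
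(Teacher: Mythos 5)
Your proposal is correct and follows essentially the same route as the paper: expand $\widetilde{\mathbf{x}}\widetilde{\mathbf{x}}^\top$ in the orthonormal basis $\{\overline{\mathbf{e}},\overline{\mathbf{e}}_\perp,\mathbf{u}_1,\ldots,\mathbf{u}_{d-2}\}$, note the weight depends only on the $\overline{\mathbf{e}}$-coordinate, and reduce everything to the four scalar Gaussian moments ($A,B,C,D$ are exactly the paper's $\mathrm{T}_1,\mathrm{T}_2,\mathrm{T}_3,\mathrm{T}_4$), with the leftover directions contributing $D\,(\mathbf{I}_d-\overline{\mathbf{e}}\overline{\mathbf{e}}^\top-\overline{\mathbf{e}}_\perp\overline{\mathbf{e}}_\perp^\top)$. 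The only cosmetic difference is that the paper realizes the correlated pair explicitly as $b=\rho\mathrm{z}_1+\sqrt{1-\rho^2}\,\mathrm{z}_2$ via its Lemma~\ref{lem0}, whereas you invoke the equivalent conditional law $b\mid a\sim\mathcal{N}(\rho a,1-\rho^2)$.
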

\begin{proof}
    Recall that $\overline{\mathbf{{e}}}$ and $\overline{\mathbf{{e}}}_{\perp}$ are orthogonal to each other and both are unit-norm. Suppose $\{\overline{\mathbf{{e}}}_{\perp, 3}, \overline{\mathbf{{e}}}_{\perp, 4}, \ldots, \overline{\mathbf{{e}}}_{\perp, d}\}$ is an orthonormal basis for the $(d-2)$-dimensional subspace of $\mathbb{R}^d$ orthogonal to $\overline{\mathbf{{e}}}$ and $\overline{\mathbf{{e}}}_{\perp}$. Thus, $\{\overline{\mathbf{{e}}}, \overline{\mathbf{{e}}}_{\perp}, \overline{\mathbf{{e}}}_{\perp, 3}, \overline{\mathbf{{e}}}_{\perp, 4},  \ldots, \overline{\mathbf{{e}}}_{\perp, d}\}$ is an orthonormal basis for $\mathbb{R}^d$. Then using \Cref{lem0}, we can write:
    \begin{equation}
        \widetilde{\mathbf{x}} = \text{z}_1 \overline{\mathbf{{e}}} + \Big(\rho \text{z}_1 + \sqrt{1-\rho^2} \text{z}_2\Big) \overline{\mathbf{{e}}}_{\perp} + \sum_{j=3}^d \text{z}_j \overline{\mathbf{{e}}}_{\perp, j},
    \end{equation}
    where $\{\text{z}_j\}_{j=1}^d \underset{\text{iid}}{\sim} \mathcal{N}(0,1)$.
    
    Using independence and zero-mean nature of $\{\text{z}_j\}_{j=1}^d$, we get:
    \begin{multline}
        \label{eq:33-jan14}
        \widetilde{\bm{\Sigma}}' = \underbrace{\mathbb{E}\Big[\exp\Big(-\frac{\text{z}_1^2}{\alpha}\Big)\text{z}_1^2\Big]}_{:=\textup{T}_1} \overline{\mathbf{{e}}} \overline{\mathbf{{e}}}^\top + \underbrace{\mathbb{E}\Big[\exp\Big(-\frac{\text{z}_1^2}{\alpha}\Big)\text{z}_1 \Big(\rho \text{z}_1 + \sqrt{1-\rho^2} \text{z}_2\Big) \Big]}_{:=\textup{T}_2} \big(\overline{\mathbf{{e}}} \overline{\mathbf{{e}}}_{\perp}^\top + \overline{\mathbf{{e}}}_{\perp} \overline{\mathbf{{e}}}^\top\big) \\ 
        + \underbrace{\mathbb{E}\Big[\exp\Big(-\frac{\text{z}_1^2}{\alpha}\Big)\Big(\rho \text{z}_1 + \sqrt{1-\rho^2} \text{z}_2\Big)^2 \Big]}_{:=\textup{T}_3}\overline{\mathbf{{e}}}_{\perp} \overline{\mathbf{{e}}}_{\perp}^\top + \sum_{j=3}^d \underbrace{\mathbb{E}\Big[\exp\Big(-\frac{\text{z}_1^2}{\alpha}\Big) \Big]}_{:=\textup{T}_4} \underbrace{\mathbb{E}\big[\text{z}_j^2\big]}_{=1}   \overline{\mathbf{{e}}}_{\perp, j} \overline{\mathbf{{e}}}_{\perp, j}^\top.
    \end{multline} 
    Note that (we use the independence of $\text{z}_1$ and $\text{z}_2$):
    \begin{equation}
        \label{eq:34-jan14}
        \textup{T}_2 = \rho \textup{T}_1 + \sqrt{1-\rho^2} \mathbb{E}\Big[\exp\Big(-\frac{\text{z}_1^2}{\alpha}\Big)\text{z}_1\Big] \underbrace{\mathbb{E}\big[\text{z}_2\big]}_{=0} = \rho \textup{T}_1,
    \end{equation}
    and 
    \begin{equation}
        \label{eq:35-jan14}
        \textup{T}_3 = \rho^2 \textup{T}_1 + 2 \rho \sqrt{1-\rho^2} \Big[\exp\Big(-\frac{\text{z}_1^2}{\alpha}\Big)\text{z}_1\Big] \underbrace{\mathbb{E}\big[\text{z}_2\big]}_{=0} + (1-\rho^2) \textup{T}_4  \underbrace{\mathbb{E}\big[\text{z}_2^2\big]}_{=1} = \rho^2 \textup{T}_1 + (1-\rho^2) \textup{T}_4.
    \end{equation}
    In the above two equations, we have again used the independence of $\text{z}_1$ and $\text{z}_2$. Now we will compute $\textup{T}_1$ and $\textup{T}_4$. We have:
    \begin{equation}
        \label{eq:36-jan14}
        \textup{T}_1 = \Bigg(\frac{1}{\sqrt{2 \pi}} \int_{-\infty}^{\infty} \text{z}_1^2 \exp\Big(-\text{z}_1^2 \Big(\frac{1}{\alpha} + \frac{1}{2}\Big)\Big) \text{d} \text{z}_1\Bigg) = \Big(\frac{\alpha}{\alpha+2}\Big)^{3/2},
    \end{equation}
    and
    \begin{equation}
        \label{eq:37-jan14}
        \textup{T}_4 = \Bigg(\frac{1}{\sqrt{2 \pi}} \int_{-\infty}^{\infty} \exp\Big(-\text{z}_1^2 \Big(\frac{1}{\alpha} + \frac{1}{2}\Big)\Big) \text{d} \text{z}_1\Bigg) = \Big(\frac{\alpha}{\alpha+2}\Big)^{1/2}.
    \end{equation}
    Recall that $\mu = \big(\frac{\alpha}{\alpha+2}\big)^{1/2}$. Plugging this into Equations (\ref{eq:34-jan14}) to (\ref{eq:37-jan14}) gives us:
    \begin{equation}
        \textup{T}_1 = \mu^3, \textup{T}_2 = \rho \mu^3, \textup{T}_3 = \rho^2 \mu^3 + (1 - \rho^2) \mu, \text{ and } \textup{T}_4 = \mu.
    \end{equation}
    Plugging this into \cref{eq:33-jan14} gives us:
    \begin{equation}
        \widetilde{\bm{\Sigma}}' = \mu^3 \overline{\mathbf{{e}}} \overline{\mathbf{{e}}}^\top + \rho \mu^3 \big(\overline{\mathbf{{e}}} \overline{\mathbf{{e}}}_{\perp}^\top + \overline{\mathbf{{e}}}_{\perp} \overline{\mathbf{{e}}}^\top\big) + \Big(\rho^2 \mu^3 + (1 - \rho^2) \mu \Big) \overline{\mathbf{{e}}}_{\perp} \overline{\mathbf{{e}}}_{\perp}^\top + \mu \sum_{j=3}^d \overline{\mathbf{{e}}}_{\perp, j} \overline{\mathbf{{e}}}_{\perp, j}^\top.
    \end{equation}
    Recall that $\{\overline{\mathbf{{e}}}, \overline{\mathbf{{e}}}_{\perp}, \overline{\mathbf{{e}}}_{\perp, 3}, \overline{\mathbf{{e}}}_{\perp, 4}, \ldots, \overline{\mathbf{{e}}}_{\perp, d}\}$ is an orthonormal basis for $\mathbb{R}^d$. Thus, $\sum_{j=3}^d \overline{\mathbf{{e}}}_{\perp, j} \overline{\mathbf{{e}}}_{\perp, j}^\top = \mathbf{I}_d - \overline{\mathbf{{e}}} \overline{\mathbf{{e}}}^\top - \overline{\mathbf{{e}}}_{\perp} \overline{\mathbf{{e}}}_{\perp}^\top$. Using this above, we get:
    \begin{equation}
        \widetilde{\bm{\Sigma}}' = \mu \Big(\mathbf{I}_d - (1 - \mu^2)\overline{\mathbf{{e}}} \overline{\mathbf{{e}}}^\top - \rho^2 (1 - \mu^2) \overline{\mathbf{{e}}}_{\perp} \overline{\mathbf{{e}}}_{\perp}^\top + \rho \mu^2 \big(\overline{\mathbf{{e}}} \overline{\mathbf{{e}}}_{\perp}^\top + \overline{\mathbf{{e}}}_{\perp} \overline{\mathbf{{e}}}^\top\big)\Big).
    \end{equation}
    This finishes the proof.
\end{proof}

\begin{lemma}
    \label{lem0}
    Suppose $\{\overline{\mathbf{{e}}}, \overline{\mathbf{{e}}}_{\perp}, \overline{\mathbf{{e}}}_{\perp, 3}, \overline{\mathbf{{e}}}_{\perp, 4}, \ldots, \overline{\mathbf{{e}}}_{\perp, d}\}$ is an orthonormal basis for $\mathbb{R}^d$. If $\widetilde{\mathbf{x}} \sim \mathcal{N}(\vec{\bm{0}}_d, \widetilde{\bm{\Sigma}})$, then we can write:
    \begin{equation}
        \label{eq:37-jan29}
        \widetilde{\mathbf{x}} = \textup{z}_1 \overline{\mathbf{{e}}} + \Big(\rho \textup{z}_1 + \sqrt{1-\rho^2} \textup{z}_2\Big) \overline{\mathbf{{e}}}_{\perp} + \sum_{j=3}^d \textup{z}_j \overline{\mathbf{{e}}}_{\perp, j},
    \end{equation}
    where $\{\textup{z}_j\}_{j=1}^d \underset{\textup{iid}}{\sim} \mathcal{N}(0,1)$.
\end{lemma}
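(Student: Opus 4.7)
The plan is to prove the lemma by a distributional identification: the right-hand side of \Cref{eq:37-jan29} is an affine combination of i.i.d.\ standard Gaussians and hence a centered Gaussian vector in $\mathbb{R}^d$, so it will suffice to check that its covariance matrix equals $\widetilde{\bm{\Sigma}}$ as given by \Cref{eq:1-jan15}. Since $\widetilde{\mathbf{x}} \sim \mathcal{N}(\vec{\bm{0}}_d, \widetilde{\bm{\Sigma}})$ is also a centered Gaussian, matching covariances implies equality in distribution.

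Concretely, I would denote $\mathbf{y} := \textup{z}_1 \overline{\mathbf{e}} + (\rho \textup{z}_1 + \sqrt{1-\rho^2}\, \textup{z}_2)\overline{\mathbf{e}}_{\perp} + \sum_{j=3}^{d} \textup{z}_j \overline{\mathbf{e}}_{\perp, j}$ and compute $\mathbb{E}[\mathbf{y} \mathbf{y}^\top]$ by expanding the outer product. Using $\mathbb{E}[\textup{z}_i \textup{z}_j] = \mathds{1}(i=j)$, the nonzero contributions are: (i) the $\overline{\mathbf{e}} \overline{\mathbf{e}}^\top$ term with coefficient $\mathbb{E}[\textup{z}_1^2] = 1$, (ii) the symmetric cross term $\overline{\mathbf{e}} \overline{\mathbf{e}}_{\perp}^\top + \overline{\mathbf{e}}_{\perp}\overline{\mathbf{e}}^\top$ with coefficient $\mathbb{E}[\textup{z}_1(\rho \textup{z}_1 + \sqrt{1-\rho^2}\,\textup{z}_2)] = \rho$, (iii) the $\overline{\mathbf{e}}_{\perp}\overline{\mathbf{e}}_{\perp}^\top$ term with coefficient $\mathbb{E}[(\rho \textup{z}_1 + \sqrt{1-\rho^2}\,\textup{z}_2)^2] = \rho^2 + (1-\rho^2) = 1$, and (iv) each $\overline{\mathbf{e}}_{\perp,j}\overline{\mathbf{e}}_{\perp,j}^\top$ for $j \ge 3$ with coefficient $1$. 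All other cross terms vanish by independence and zero-mean of the $\textup{z}_j$'s.

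Summing these, one obtains $\mathbb{E}[\mathbf{y}\mathbf{y}^\top] = \overline{\mathbf{e}} \overline{\mathbf{e}}^\top + \overline{\mathbf{e}}_{\perp}\overline{\mathbf{e}}_{\perp}^\top + \sum_{j=3}^{d}\overline{\mathbf{e}}_{\perp,j}\overline{\mathbf{e}}_{\perp,j}^\top + \rho\bigl(\overline{\mathbf{e}}\overline{\mathbf{e}}_{\perp}^\top + \overline{\mathbf{e}}_{\perp}\overline{\mathbf{e}}^\top\bigr)$. Because $\{\overline{\mathbf{e}}, \overline{\mathbf{e}}_{\perp}, \overline{\mathbf{e}}_{\perp,3}, \ldots, \overline{\mathbf{e}}_{\perp,d}\}$ is an orthonormal basis of $\mathbb{R}^d$, the first three groups of outer products collapse to $\mathbf{I}_d$, and the expression simplifies to $\mathbf{I}_d + \rho\bigl(\overline{\mathbf{e}}\overline{\mathbf{e}}_{\perp}^\top + \overline{\mathbf{e}}_{\perp}\overline{\mathbf{e}}^\top\bigr) = \widetilde{\bm{\Sigma}}$, which finishes the proof.

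There is no real obstacle here; the only mildly delicate point is being careful that the lemma is an equality in distribution rather than an almost-sure equality, so one should either state the conclusion as such or explicitly construct $\widetilde{\mathbf{x}}$ via $\widetilde{\bm{\Sigma}}^{1/2}\mathbf{z}$ with a suitable $\mathbf{z}$ and then rewrite that expression in the stated orthonormal basis. Either framing suffices for the uses of this lemma in \Cref{lem1}.
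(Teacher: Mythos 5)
Your proof is correct and follows essentially the same route as the paper's: both arguments observe that the right-hand side is a centered Gaussian, compute its covariance term by term using independence and $\mathbb{E}[\textup{z}_i\textup{z}_j]=\mathds{1}(i=j)$, and then invoke completeness of the orthonormal basis to collapse the diagonal outer products to $\mathbf{I}_d$, recovering $\widetilde{\bm{\Sigma}}$. Your closing remark that the conclusion is really an equality in distribution (rather than an almost-sure identity) is a fair point of rigor that the paper glosses over, but it does not affect the substance or the downstream use in \Cref{lem1}.
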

\begin{proof}
    If $\widetilde{\mathbf{x}}$ is as per \cref{eq:37-jan29}, then clearly $\widetilde{\mathbf{x}}$ is a zero-mean Gaussian. All that remains to show is that $$\mathbb{E}\Big[\widetilde{\mathbf{x}} \widetilde{\mathbf{x}}^\top\Big] = \widetilde{\bm{\Sigma}} = \mathbf{{I}}_d + \rho  \big(\overline{\mathbf{{e}}} \overline{\mathbf{{e}}}_{\perp}^\top + \overline{\mathbf{{e}}}_{\perp} \overline{\mathbf{{e}}}^\top\big).$$
    
    Using independence and zero-mean nature of $\{\text{z}_j\}_{j=1}^d$, we get:
    \begin{multline}
        \label{eq:34-jan15}
        \mathbb{E}\Big[\widetilde{\mathbf{x}} \widetilde{\mathbf{x}}^\top\Big] = \underbrace{\mathbb{E}\big[\text{z}_1^2\big]}_{=1} \overline{\mathbf{{e}}} \overline{\mathbf{{e}}}^\top + \underbrace{\mathbb{E}\Big[\text{z}_1 \Big(\rho \text{z}_1 + \sqrt{1-\rho^2} \text{z}_2\Big) \Big]}_{:=\textup{(A)}} \big(\overline{\mathbf{{e}}} \overline{\mathbf{{e}}}_{\perp}^\top + \overline{\mathbf{{e}}}_{\perp} \overline{\mathbf{{e}}}^\top\big)  
        + \underbrace{\mathbb{E}\Big[\Big(\rho \text{z}_1 + \sqrt{1-\rho^2} \text{z}_2\Big)^2 \Big]}_{:=\textup{(B)}}\overline{\mathbf{{e}}}_{\perp} \overline{\mathbf{{e}}}_{\perp}^\top \\ 
        + \sum_{j=3}^d \underbrace{\mathbb{E}\big[\text{z}_j^2\big]}_{=1}   \overline{\mathbf{{e}}}_{\perp, j} \overline{\mathbf{{e}}}_{\perp, j}^\top.
    \end{multline} 
    Note that (we use the independence of $\text{z}_1$ and $\text{z}_2$):
    \begin{equation}
        \textup{(A)} = \rho \underbrace{\mathbb{E}[\text{z}_1^2]}_{=1} + \sqrt{1-\rho^2} \underbrace{\mathbb{E}[\text{z}_1]}_{=0} \underbrace{\mathbb{E}\big[\text{z}_2\big]}_{=0} = \rho,
    \end{equation}
    and 
    \begin{equation}
        \textup{(B)} = 
        \rho^2 \underbrace{\mathbb{E}\big[\text{z}_1^2\big]}_{=1} + 2 \rho \sqrt{1-\rho^2} \underbrace{\mathbb{E}\big[\text{z}_1\big]}_{=0} \underbrace{\mathbb{E}\big[\text{z}_2\big]}_{=0} + (1-\rho^2)  \underbrace{\mathbb{E}\big[\text{z}_2^2\big]}_{=1} = 1.
    \end{equation}
    Plugging this into \cref{eq:34-jan15}, we get:
    \begin{equation}
        \mathbb{E}\Big[\widetilde{\mathbf{x}} \widetilde{\mathbf{x}}^\top\Big] = \overline{\mathbf{{e}}} \overline{\mathbf{{e}}}^\top + \rho  \big(\overline{\mathbf{{e}}} \overline{\mathbf{{e}}}_{\perp}^\top + \overline{\mathbf{{e}}}_{\perp} \overline{\mathbf{{e}}}^\top\big)  
        + \overline{\mathbf{{e}}}_{\perp} \overline{\mathbf{{e}}}_{\perp}^\top + \sum_{j=3}^d   \overline{\mathbf{{e}}}_{\perp, j} \overline{\mathbf{{e}}}_{\perp, j}^\top.
    \end{equation}
    Recall that $\{\overline{\mathbf{{e}}}, \overline{\mathbf{{e}}}_{\perp}, \overline{\mathbf{{e}}}_{\perp, 3}, \overline{\mathbf{{e}}}_{\perp, 4},  \ldots, \overline{\mathbf{{e}}}_{\perp, d}\}$ is an orthonormal basis for $\mathbb{R}^d$. Thus, $\sum_{j=3}^d \overline{\mathbf{{e}}}_{\perp, j} \overline{\mathbf{{e}}}_{\perp, j}^\top = \mathbf{I}_d - \overline{\mathbf{{e}}} \overline{\mathbf{{e}}}^\top - \overline{\mathbf{{e}}}_{\perp} \overline{\mathbf{{e}}}_{\perp}^\top$. Using this above, we get:
    \begin{equation}
        \mathbb{E}\Big[\widetilde{\mathbf{x}} \widetilde{\mathbf{x}}^\top\Big] = \mathbf{I}_d + \rho  \big(\overline{\mathbf{{e}}} \overline{\mathbf{{e}}}_{\perp}^\top + \overline{\mathbf{{e}}}_{\perp} \overline{\mathbf{{e}}}^\top\big) = \widetilde{\bm{\Sigma}}.
    \end{equation}
    This finishes the proof.
\end{proof}

\begin{lemma}
    \label{lem3}
    Recall that 
    \begin{equation*}
        \mathbf{{Q}} = (1 - \mu^2)\overline{\mathbf{{e}}} \overline{\mathbf{{e}}}^\top  + \rho^2 (1 - \mu^2) \overline{\mathbf{{e}}}_{\perp} \overline{\mathbf{{e}}}_{\perp}^\top - \rho \mu^2 \big(\overline{\mathbf{{e}}} \overline{\mathbf{{e}}}_{\perp}^\top + \overline{\mathbf{{e}}}_{\perp} \overline{\mathbf{{e}}}^\top\big).
    \end{equation*}
    Let $$\mu = \sqrt{\frac{\beta (1-\rho^2)}{(1+\beta)(1-\beta \rho^2)}}$$
    for some $\beta \in (0,1]$. In that case, the eigenvalues of $\mathbf{{Q}}$ are:
    \begin{equation*}
    \widehat{\lambda}_1 = \frac{1 + \beta \rho^2}{1 + \beta} \text{ and } \widehat{\lambda}_2 = \rho^2\Bigg(\frac{1 - \beta}{1 - \beta \rho^2}\Bigg),
    \end{equation*}
    and the corresponding eigenvectors are:
    $$\widehat{\mathbf{{v}}}_1 = \frac{1}{\sqrt{1+\beta^2 \rho^2}} \overline{\mathbf{{e}}} - \frac{\beta \rho}{\sqrt{1+\beta^2 \rho^2}} \overline{\mathbf{{e}}}_{\perp} \text{ and } \widehat{\mathbf{{v}}}_2 = -\frac{\beta \rho}{\sqrt{1+\beta^2 \rho^2}} \overline{\mathbf{{e}}} - \frac{1}{\sqrt{1+\beta^2 \rho^2}} \overline{\mathbf{{e}}}_{\perp}.$$
\end{lemma}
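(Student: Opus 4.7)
The key structural observation is that $\mathbf{Q}$ has rank at most $2$ and its range lies entirely in the two-dimensional subspace $V := \textup{span}(\overline{\mathbf{e}}, \overline{\mathbf{e}}_\perp)$. Consequently every vector orthogonal to both $\overline{\mathbf{e}}$ and $\overline{\mathbf{e}}_\perp$ is an eigenvector with eigenvalue $0$, and the only nontrivial eigenstructure lives in $V$. So the problem reduces to diagonalizing the $2\times 2$ matrix representing $\mathbf{Q}|_V$ in the orthonormal basis $\{\overline{\mathbf{e}}, \overline{\mathbf{e}}_\perp\}$, which by reading off the coefficients in the definition of $\mathbf{Q}$ is
\begin{equation*}
    \mathbf{M} := \begin{pmatrix} 1-\mu^2 & -\rho\mu^2 \\ -\rho\mu^2 & \rho^2(1-\mu^2) \end{pmatrix}.
\end{equation*}

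\textbf{Reduction to two scalar identities.} I would verify the claim by checking that $\widehat{\mathbf{v}}_1$ and $\widehat{\mathbf{v}}_2$ (which are manifestly unit vectors and mutually orthogonal) satisfy $\mathbf{M}\mathbf{u}_i = \widehat{\lambda}_i \mathbf{u}_i$, where $\mathbf{u}_1 = (1,-\beta\rho)^\top$ and $\mathbf{u}_2 = (-\beta\rho, -1)^\top$ (up to the common normalization $1/\sqrt{1+\beta^2\rho^2}$). Multiplying out, the eigenvalue equation $\mathbf{M}\mathbf{u}_1 = \widehat{\lambda}_1 \mathbf{u}_1$ reduces to the single scalar identity
\begin{equation*}
    (1-\mu^2) + \beta\rho^2\mu^2 \;=\; \widehat{\lambda}_1,
\end{equation*}
(the second coordinate equation is then automatic, since it equals $-\rho$ times the first after factoring). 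Similarly, $\mathbf{M}\mathbf{u}_2 = \widehat{\lambda}_2 \mathbf{u}_2$ reduces to
\begin{equation*}
    \rho^2(1-\mu^2) + \mu^2 \;=\; \widehat{\lambda}_2 + \beta\rho^2\widehat{\lambda}_2 / \textup{(appropriate factor)},
\end{equation*}
so it is cleaner to instead verify the trace and determinant of $\mathbf{M}$ match $\widehat{\lambda}_1+\widehat{\lambda}_2$ and $\widehat{\lambda}_1\widehat{\lambda}_2$ respectively, which together with the first identity forces $\widehat{\mathbf{v}}_2$ to be the second eigenvector with eigenvalue $\widehat{\lambda}_2$.

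\textbf{The key algebraic lemma.} The whole proof then hinges on expressing $1-\mu^2$ and $1-2\mu^2$ in terms of $\beta$ and $\rho$. A direct computation using $\mu^2 = \frac{\beta(1-\rho^2)}{(1+\beta)(1-\beta\rho^2)}$ gives
\begin{equation*}
    1-\mu^2 \;=\; \frac{(1+\beta)(1-\beta\rho^2) - \beta(1-\rho^2)}{(1+\beta)(1-\beta\rho^2)} \;=\; \frac{1-\beta^2\rho^2}{(1+\beta)(1-\beta\rho^2)} \;=\; \frac{1+\beta\rho^2}{1+\beta},
\end{equation*}
and an analogous simplification gives $1-2\mu^2 = \frac{(1-\beta)(1+\beta\rho^2)}{(1+\beta)(1-\beta\rho^2)}$. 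From the first identity, $1-\mu^2(1-\beta\rho^2) = \frac{1+\beta\rho^2}{1+\beta} = \widehat{\lambda}_1$, which is exactly the scalar identity needed for $\widehat{\mathbf{v}}_1$. Trace of $\mathbf{M}$ equals $(1-\mu^2)(1+\rho^2) = \widehat{\lambda}_1(1+\rho^2)$, and one checks $\widehat{\lambda}_1+\widehat{\lambda}_2 = \widehat{\lambda}_1(1+\rho^2)$ by clearing denominators; the determinant $\rho^2[(1-\mu^2)^2 - \mu^4] = \rho^2(1-2\mu^2)$ matches $\widehat{\lambda}_1\widehat{\lambda}_2$ after a similar cross-multiplication.

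\textbf{Main obstacle.} There is no conceptual difficulty; the entire proof is a reduction to bookkeeping on a $2\times 2$ matrix. The only non-automatic step is the algebraic simplification of $1-\mu^2$ and $1-2\mu^2$ under the given reparameterization of $\mu$, where the factor $(1-\beta^2\rho^2) = (1-\beta\rho^2)(1+\beta\rho^2)$ enables a cancellation with the denominator. Once that identity is in hand, everything else follows by routine substitution.
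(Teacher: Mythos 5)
Your reduction to the $2\times 2$ matrix $\mathbf{M}$ in the orthonormal basis $\{\overline{\mathbf{e}},\overline{\mathbf{e}}_{\perp}\}$ is the same first step as the paper's proof; where the paper then solves the characteristic polynomial of this block and introduces $\beta$ as a reparameterization of the square-root term, you instead verify the claimed eigenpairs directly, which is a perfectly viable (and potentially shorter) route. However, the step you yourself flag as the \enquote{key algebraic lemma} is false as written. From $\mu^2=\frac{\beta(1-\rho^2)}{(1+\beta)(1-\beta\rho^2)}$ one gets $1-\mu^2=\frac{1-\beta^2\rho^2}{(1+\beta)(1-\beta\rho^2)}$, and this does \emph{not} equal $\frac{1+\beta\rho^2}{1+\beta}$: your cancellation uses the factorization $1-\beta^2\rho^2=(1-\beta\rho^2)(1+\beta\rho^2)$, but that product is $1-\beta^2\rho^4$. (Numerically, $\beta=\rho=\tfrac12$ gives $1-\mu^2=\tfrac57\neq\tfrac34=\widehat{\lambda}_1$.) The identity you actually need, $(1-\mu^2)+\beta\rho^2\mu^2=1-(1-\beta\rho^2)\mu^2=\frac{1+\beta\rho^2}{1+\beta}=\widehat{\lambda}_1$, is true, but it does not follow from the incorrect value you assigned to $1-\mu^2$, so the derivation as written is internally inconsistent; the cancellation you had in mind does occur, but in a different place (see below), where the quantity $1-\beta^2\rho^4$ genuinely appears.

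The error propagates into your verification scheme. The trace of $\mathbf{M}$ is $(1+\rho^2)(1-\mu^2)$, which is \emph{not} $(1+\rho^2)\widehat{\lambda}_1$, and the identity you propose to check, $\widehat{\lambda}_1+\widehat{\lambda}_2=(1+\rho^2)\widehat{\lambda}_1$, is false: it would force $\widehat{\lambda}_2=\rho^2\widehat{\lambda}_1$, whereas in fact $\widehat{\lambda}_2<\rho^2\widehat{\lambda}_1$ (this strict inequality is exactly what \Cref{rmk-2} exploits). Likewise, the second coordinate of $\mathbf{M}\mathbf{u}_1=\widehat{\lambda}_1\mathbf{u}_1$ is not \enquote{$-\rho$ times the first}; it is the separate identity $\mu^2+\beta\rho^2(1-\mu^2)=\beta\widehat{\lambda}_1$, which does hold but requires its own check --- and it is precisely here that the numerator $\beta\bigl(1-\beta^2\rho^4\bigr)=\beta(1-\beta\rho^2)(1+\beta\rho^2)$ arises and cancels against the denominator. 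The fix is routine: keep $1-\mu^2=\frac{1-\beta^2\rho^2}{(1+\beta)(1-\beta\rho^2)}$, verify both coordinates of each eigenvector equation (or verify one eigenpair together with the correct trace identity $\widehat{\lambda}_1+\widehat{\lambda}_2=(1+\rho^2)(1-\mu^2)$ and your determinant identity $\widehat{\lambda}_1\widehat{\lambda}_2=\rho^2(1-2\mu^2)$, both of which are true, your formula for $1-2\mu^2$ being correct). With that bookkeeping repaired, your direct-verification argument becomes a valid alternative to the paper's derivation.
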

\begin{proof}
$\mathbf{Q}$ is a rank-2 matrix and its two eigenvectors will be in the span of $\overline{\mathbf{{e}}}$ and $\overline{\mathbf{{e}}}_{\perp}$. In particular, an eigenvector of $\mathbf{Q}$ is of the form $[\overline{\mathbf{{e}}}, \overline{\mathbf{{e}}}_{\perp}] \mathbf{b}$, where $\mathbf{b} \in \mathbb{R}^{2 \times 1}$ is an eigenvector of the $2 \times 2$ matrix:
\begin{equation}
    \mathbf{A} := \begin{bmatrix}
                    (1 - \mu^2) & - \rho \mu^2 \\
                    - \rho \mu^2 & \rho^2 (1 - \mu^2)
                \end{bmatrix}.
\end{equation}
Also, the corresponding eigenvalues of $\mathbf{Q}$ are the corresponding eigenvalues of $\mathbf{A}$. It can be verified that the eigenvalues of $\mathbf{A}$ are:
\begin{equation}
    \widehat{\lambda}_1 = \frac{(1+\rho^2)(1-\mu^2)}{2} + \sqrt{\frac{(1-\rho^2)^2(1-\mu^2)^2}{4} + \rho^2 \mu^4}.
\end{equation}
and
\begin{equation}
    \widehat{\lambda}_2 = \frac{(1+\rho^2)(1-\mu^2)}{2} - \sqrt{\frac{(1-\rho^2)^2(1-\mu^2)^2}{4} + \rho^2 \mu^4}.
\end{equation}
The corresponding eigenvectors of $\mathbf{A}$ are:
\begin{equation}
    \widehat{\mathbf{b}}_1 = \frac{1}{\sqrt{b_{1,1}^2 + b_{1,2}^2}}
                    \begin{bmatrix}
                    b_{1,1}
                    \\
                    b_{1,2}
                \end{bmatrix}
\end{equation}
where 
$b_{1,1} = \frac{(1-\rho^2)(1-\mu^2)}{2} + \sqrt{\frac{(1-\rho^2)^2(1-\mu^2)^2}{4} + \rho^2 \mu^4}$ and $b_{1,2} = -\rho \mu^2$, and 
\begin{equation}
    \widehat{\mathbf{b}}_2 = \frac{1}{\sqrt{b_{2,1}^2 + b_{2,2}^2}}
                    \begin{bmatrix}
                    b_{2,1} \\
                    b_{2,2}
                \end{bmatrix},
\end{equation}
where 
$b_{2,1} = \frac{(1-\rho^2)(1-\mu^2)}{2} - \sqrt{\frac{(1-\rho^2)^2(1-\mu^2)^2}{4} + \rho^2 \mu^4}$ and $b_{2,2} = -\rho \mu^2$. Thus, the eigenvalues of $\mathbf{Q}$ are $\widehat{\lambda}_1$ and $\widehat{\lambda}_2$; the corresponding eigenvectors are  $\widehat{\mathbf{v}}_1 = [\overline{\mathbf{{e}}}, \overline{\mathbf{{e}}}_{\perp}] \widehat{\mathbf{b}}_1$ and $\widehat{\mathbf{v}}_2 = [\overline{\mathbf{{e}}}, \overline{\mathbf{{e}}}_{\perp}] \widehat{\mathbf{b}}_2$. Note that:
$$\frac{(1-\rho^2)(1-\mu^2)}{2} \leq \sqrt{\frac{(1-\rho^2)^2(1-\mu^2)^2}{4} + \rho^2 \mu^4} \leq \frac{(1-\rho^2)(1-\mu^2)}{2} + \rho \mu^2.$$
Let us set $\sqrt{\frac{(1-\rho^2)^2(1-\mu^2)^2}{4} + \rho^2 \mu^4} = \frac{(1-\rho^2)(1-\mu^2)}{2} + \beta \rho^2 \mu^2$, for some $\beta \in (0,1]$. That gives us:
\begin{equation}
    \mu = \sqrt{\frac{\beta (1-\rho^2)}{(1+\beta)(1-\beta \rho^2)}}.
\end{equation}
In that case, we have:
\begin{equation}
    \label{eq:18-jan15}
    \widehat{\lambda}_1 = \frac{1 + \beta \rho^2}{1 + \beta} \text{ and } \widehat{\lambda}_2 = \rho^2\Bigg(\frac{1 - \beta}{1 - \beta \rho^2}\Bigg).
\end{equation}
Also,
\begin{equation}
    b_{1,1} = \frac{1-\rho^2}{(1+\beta)(1-\beta \rho^2)}, b_{1,2} = b_{2,2} = -\frac{\beta \rho (1-\rho^2)}{(1+\beta)(1-\beta \rho^2)}, \text{ and } b_{2,1} = -\frac{\beta^2 \rho^2 (1-\rho^2)}{(1+\beta)(1-\beta \rho^2)}.
\end{equation}
Therefore,
\begin{equation}
    \label{eq:20-jan15}
    \widehat{\mathbf{b}}_1 = \frac{1}{\sqrt{1+\beta^2 \rho^2}} \begin{bmatrix}
                    1
                    \\
                    -\beta \rho
                \end{bmatrix}
    \text{ and }
    \widehat{\mathbf{b}}_2 = \frac{1}{\sqrt{1+\beta^2 \rho^2}} \begin{bmatrix}
                    -\beta \rho
                    \\
                    -1
                \end{bmatrix}.
\end{equation}
Recall that the eigenvalues of $\mathbf{Q}$ are $\widehat{\lambda}_1$ and $\widehat{\lambda}_2$, and the corresponding eigenvectors are  $$\widehat{\mathbf{v}}_1 = [\overline{\mathbf{{e}}}, \overline{\mathbf{{e}}}_{\perp}] \widehat{\mathbf{b}}_1 = \frac{1}{\sqrt{1+\beta^2 \rho^2}} \overline{\mathbf{{e}}} - 
\frac{\beta \rho}{\sqrt{1+\beta^2 \rho^2}} \overline{\mathbf{{e}}}_{\perp} \text{ and } \widehat{\mathbf{v}}_2 = [\overline{\mathbf{{e}}}, \overline{\mathbf{{e}}}_{\perp}] \widehat{\mathbf{b}}_2 = -\frac{\beta \rho}{\sqrt{1+\beta^2 \rho^2}} \overline{\mathbf{{e}}} - 
\frac{1}{\sqrt{1+\beta^2 \rho^2}} \overline{\mathbf{{e}}}_{\perp}.$$
Finally, recall that $\mu = \sqrt{\frac{\beta (1-\rho^2)}{(1+\beta)(1-\beta \rho^2)}}$.
\end{proof}

{




}

\begin{lemma}
    \label{lem-4}
    Recall that the averaged model with parameter $\omega$ as defined in \cref{eq:16-jan18} was $$\bm{\theta}_\textup{avg}(\omega) = \omega \bm{\theta}_{*} + (1-\omega) \widetilde{\bm{\theta}}_{*} = \widetilde{\bm{\theta}}_{*} + \omega \mathbf{{e}}.$$
    We have:
    \begin{equation}
        \min_{\omega \in [0,1]} \textup{err}_\textup{tot}\big(\bm{\theta}_\textup{avg}(\omega)\big) =  \Bigg(\frac{\overline{\mathbf{{e}}}^\top \bm{\Sigma} \overline{\mathbf{{e}}}}{\overline{\mathbf{{e}}}^\top \bm{\Sigma} \overline{\mathbf{{e}}} + 1}\Bigg) \|{\mathbf{{e}}}\|_2^2,
    \end{equation}
    where recall that $\bm{\Sigma}$ is the covariance matrix of the pre-training data. 
\end{lemma}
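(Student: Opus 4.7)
The plan is a direct computation of the total error as a function of $\omega$, followed by one-variable minimization of a convex quadratic.

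First I would write the two deviations from the respective optima in terms of $\mathbf{e}$: from the definition $\bm{\theta}_\text{avg}(\omega) = \widetilde{\bm{\theta}}_* + \omega\mathbf{e}$ and $\mathbf{e} = \bm{\theta}_* - \widetilde{\bm{\theta}}_*$, we have $\bm{\theta}_\text{avg}(\omega) - \widetilde{\bm{\theta}}_* = \omega\mathbf{e}$ and $\bm{\theta}_\text{avg}(\omega) - \bm{\theta}_* = -(1-\omega)\mathbf{e}$. Plugging into the definitions of $\text{err}_1$ and $\text{err}_2$ gives
\begin{equation*}
\text{err}_\text{tot}\bigl(\bm{\theta}_\text{avg}(\omega)\bigr) = (1-\omega)^2 \,\mathbf{e}^\top \bm{\Sigma}\, \mathbf{e} + \omega^2 \,\mathbf{e}^\top \widetilde{\bm{\Sigma}}\, \mathbf{e}.
\end{equation*}

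The key simplification is evaluating $\mathbf{e}^\top \widetilde{\bm{\Sigma}} \mathbf{e}$. Using the explicit form $\widetilde{\bm{\Sigma}} = \mathbf{I}_d + \rho(\overline{\mathbf{e}}\,\overline{\mathbf{e}}_\perp^\top + \overline{\mathbf{e}}_\perp \overline{\mathbf{e}}^\top)$ and the fact that $\overline{\mathbf{e}}_\perp^\top \mathbf{e} = 0$ (since $\overline{\mathbf{e}}_\perp$ is orthogonal to $\overline{\mathbf{e}}$, hence to $\mathbf{e}$), the cross terms vanish and $\mathbf{e}^\top \widetilde{\bm{\Sigma}} \mathbf{e} = \|\mathbf{e}\|_2^2$ (this is already noted in Eq.~(\ref{eq:4-jan18})). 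Writing $a := \overline{\mathbf{e}}^\top \bm{\Sigma}\, \overline{\mathbf{e}}$, so that $\mathbf{e}^\top \bm{\Sigma}\, \mathbf{e} = a\|\mathbf{e}\|_2^2$, the objective becomes
\begin{equation*}
\text{err}_\text{tot}\bigl(\bm{\theta}_\text{avg}(\omega)\bigr) = \|\mathbf{e}\|_2^2 \bigl[(1-\omega)^2 a + \omega^2\bigr].
\end{equation*}

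Next I would minimize this convex quadratic in $\omega$. The first-order condition $-2a(1-\omega) + 2\omega = 0$ yields $\omega^* = a/(a+1)$. Because $\bm{\Sigma} \succeq \mathbf{I}_d$ implies $a \geq 1 > 0$, we have $\omega^* \in (0,1)$ and the unconstrained minimizer lies inside the feasible interval $[0,1]$, so the constraint is inactive. Substituting $\omega^*$ into the bracketed expression and simplifying,
\begin{equation*}
(1-\omega^*)^2 a + (\omega^*)^2 = \frac{a}{(a+1)^2} + \frac{a^2}{(a+1)^2} = \frac{a(1+a)}{(a+1)^2} = \frac{a}{a+1}.
\end{equation*}
Multiplying by $\|\mathbf{e}\|_2^2$ and substituting back $a = \overline{\mathbf{e}}^\top \bm{\Sigma}\, \overline{\mathbf{e}}$ gives the claimed expression.

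There is no real obstacle: the only thing to keep in mind is the orthogonality $\overline{\mathbf{e}}_\perp \perp \mathbf{e}$ which kills the off-diagonal rank-two perturbation in $\widetilde{\bm{\Sigma}}$, and the check that $\omega^*\in[0,1]$ using $\bm{\Sigma}\succeq \mathbf{I}_d$. The entire argument is a few lines of linear algebra.
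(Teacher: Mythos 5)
Your proposal is correct and follows essentially the same route as the paper: expand $\textup{err}_\textup{tot}\big(\bm{\theta}_\textup{avg}(\omega)\big) = (1-\omega)^2\,\mathbf{e}^\top\bm{\Sigma}\,\mathbf{e} + \omega^2\|\mathbf{e}\|_2^2$ using the structure of $\widetilde{\bm{\Sigma}}$, then minimize the quadratic in $\omega$ and substitute $\omega^{*}$ back. Your explicit check that $\omega^{*}\in[0,1]$ via $\bm{\Sigma}\succeq\mathbf{I}_d$ is a small, welcome addition that the paper leaves implicit.
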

\begin{proof}
    We have:
    \begin{equation}
    \textup{err}_\textup{tot}\big(\bm{\theta}_\textup{avg}(\omega)\big) = \textup{err}_1\big(\bm{\theta}_\textup{avg}(\omega)\big) + \textup{err}_2\big(\bm{\theta}_\textup{avg}(\omega)\big) = \big(\bm{\theta}_\textup{avg}(\omega) - \bm{\theta}_{\ast}\big)^\top \bm{\Sigma} \big(\bm{\theta}_\textup{avg}(\omega) - \bm{\theta}_{\ast}\big) + \big(\bm{\theta}_\textup{avg}(\omega) - \widetilde{\bm{\theta}}_{*}\big)^\top \widetilde{\bm{\Sigma}} \big(\bm{\theta}_\textup{avg}(\omega) - \widetilde{\bm{\theta}}_{*}\big).
    \end{equation}
    Plugging in the value of $\bm{\theta}_\textup{avg}(\omega)$ and using the value of $\widetilde{\bm{\Sigma}}$ from \cref{eq:1-jan15} above, we get:
    \begin{flalign}
    \label{eq:14-dec28}
    \text{err}_\text{tot}(\bm{\theta}_\textup{avg}(\omega)) &= (1-\omega)^2 {\mathbf{e}}^\top \bm{\Sigma} {\mathbf{e}} + \omega^2 \|{\mathbf{e}}\|_2^2.
    \end{flalign}
    It can be verified (with elementary calculus) that the optimal value of $\omega$ that minimizes the RHS in \cref{eq:14-dec28} is $\omega^{\ast} = \frac{{\mathbf{e}}^\top \bm{\Sigma} {\mathbf{e}}}{{\mathbf{e}}^\top \bm{\Sigma} {\mathbf{e}} + \|{\mathbf{e}}\|_2^2}$. Plugging this into \cref{eq:14-dec28} and simplifying a bit yields the desired result.
\end{proof}

\begin{lemma}
    \label{lem1-jan30}
    Suppose $\alpha > 0$ and ${\mathbf{r}} \in \mathbb{R}^d$ is a unit-norm vector, i.e., $\|{\mathbf{r}}\|_2 = 1$.
    Let $${\mathbf{M}} := \mathbb{E}\Bigg[\exp\Bigg(-\frac{\big(\langle {\mathbf{r}}, {\mathbf{z}} \rangle\big)^2}{\alpha}\Bigg) {\mathbf{z}} {\mathbf{z}}^\top \Bigg],$$
    where ${\mathbf{z}} \sim \mathcal{N}(\vec{0}_d, {\mathbf{I}}_d)$.
    ${\mathbf{r}}$ is an eigenvector of ${\mathbf{M}}$ with eigenvalue $\big(\frac{\alpha}{\alpha+2}\big)^{3/2}$. 
    Further, the eigenvectors of ${\mathbf{M}}$ in the subspace of $\mathbb{R}^d$ orthogonal to ${\mathbf{r}}$ all have eigenvalues $\big(\frac{\alpha}{\alpha+2}\big)^{1/2}$.
\end{lemma}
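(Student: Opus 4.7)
The plan is to exploit the rotational invariance of the standard Gaussian $\mathbf{z} \sim \mathcal{N}(\vec{0}_d, \mathbf{I}_d)$. Since $\|\mathbf{r}\|_2 = 1$, I can pick any orthonormal basis $\{\mathbf{r}, \mathbf{u}_2, \ldots, \mathbf{u}_d\}$ of $\mathbb{R}^d$ whose first vector is $\mathbf{r}$, let $\mathbf{U}$ be the orthogonal matrix with these vectors as columns, and change variables to $\mathbf{z}' := \mathbf{U}^\top \mathbf{z}$. Because the distribution of $\mathbf{z}$ is invariant under orthogonal transformations, $\mathbf{z}'$ has the same $\mathcal{N}(\vec{0}_d, \mathbf{I}_d)$ distribution, and in the primed coordinates the inner product $\langle \mathbf{r}, \mathbf{z}\rangle$ reduces to $\text{z}'_1$. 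This reduction turns $\mathbf{M}$ into $\mathbf{U} \mathbf{M}' \mathbf{U}^\top$ with
\begin{equation*}
    \mathbf{M}' = \mathbb{E}\Bigg[\exp\Bigg(-\frac{(\text{z}'_1)^2}{\alpha}\Bigg) \mathbf{z}' (\mathbf{z}')^\top\Bigg],
\end{equation*}
so it suffices to show that $\mathbf{M}'$ is diagonal with the claimed entries; reversing the change of basis then identifies $\mathbf{r}$ as the distinguished eigenvector and every $\mathbf{u}_j$ ($j \geq 2$) as an eigenvector in the orthogonal complement of $\mathbf{r}$.

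Next I would unpack the $(i,j)$ entries of $\mathbf{M}'$. Using independence of the coordinates of $\mathbf{z}'$ and the fact that all coordinates are zero-mean, every off-diagonal entry vanishes: for $i \neq j$ with $\{i,j\} \neq \{1,1\}$, at least one factor $\text{z}'_k$ with $k \geq 2$ appears exactly once and is independent of the weighting, making the expectation zero. For the diagonal, I separate the index $i = 1$ from $i \geq 2$. In the second case, independence splits the expectation as
\begin{equation*}
    \mathbb{E}\big[\exp(-(\text{z}'_1)^2/\alpha)\big] \cdot \mathbb{E}\big[(\text{z}'_i)^2\big] = \mathbb{E}\big[\exp(-(\text{z}'_1)^2/\alpha)\big],
\end{equation*}
whereas the $(1,1)$ entry is $\mathbb{E}\big[\exp(-(\text{z}'_1)^2/\alpha) (\text{z}'_1)^2\big]$.

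The remaining task is the two one-dimensional Gaussian integrals, which are exactly those computed inside the proof of Lemma~\ref{lem1} (the quantities $\textup{T}_4$ and $\textup{T}_1$ in \Cref{eq:36-jan14,eq:37-jan14}). Completing the square in the exponent with weight $1/\alpha + 1/2 = (\alpha + 2)/(2\alpha)$ gives
\begin{equation*}
    \mathbb{E}\big[\exp(-(\text{z}'_1)^2/\alpha)\big] = \Big(\tfrac{\alpha}{\alpha+2}\Big)^{1/2}, \qquad \mathbb{E}\big[\exp(-(\text{z}'_1)^2/\alpha) (\text{z}'_1)^2\big] = \Big(\tfrac{\alpha}{\alpha+2}\Big)^{3/2},
\end{equation*}
where the second follows either by differentiating the moment-generating-type identity in $\alpha$ or by recognising $(\text{z}'_1)^2$ as the variance of a centred Gaussian with variance $\alpha/(\alpha+2)$. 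This makes $\mathbf{M}'$ diagonal with $(\alpha/(\alpha+2))^{3/2}$ in the first slot and $(\alpha/(\alpha+2))^{1/2}$ elsewhere, proving the claim after undoing $\mathbf{U}$.

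I do not expect any genuine obstacle here; the only thing to be careful about is the rotational-invariance step, i.e., justifying that replacing $\mathbf{r}$ by $\mathbf{e}_1$ is lossless. The cleanest way is to note that for any fixed orthogonal $\mathbf{U}$ with $\mathbf{U}^\top \mathbf{r} = \mathbf{e}_1$, the substitution $\mathbf{z} = \mathbf{U}\mathbf{z}'$ preserves Lebesgue measure, leaves the density of $\mathcal{N}(\vec{0}_d, \mathbf{I}_d)$ invariant, and satisfies $\mathbf{z}\mathbf{z}^\top = \mathbf{U}\mathbf{z}'(\mathbf{z}')^\top \mathbf{U}^\top$, so $\mathbf{M} = \mathbf{U}\mathbf{M}'\mathbf{U}^\top$ as required.
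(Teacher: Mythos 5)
Your proposal is correct and is essentially the paper's own argument: the paper likewise expands $\mathbf{z}$ in an orthonormal basis containing $\mathbf{r}$, uses independence and zero-mean of the Gaussian coordinates to kill the cross terms, and evaluates the same two one-dimensional integrals $\mathbb{E}[e^{-Z^2/\alpha}Z^2]=(\tfrac{\alpha}{\alpha+2})^{3/2}$ and $\mathbb{E}[e^{-Z^2/\alpha}]=(\tfrac{\alpha}{\alpha+2})^{1/2}$. Your phrasing via an explicit orthogonal change of variables and diagonalization of $\mathbf{M}'=\mathbf{U}^\top\mathbf{M}\mathbf{U}$ is just a repackaging of the paper's computation of $\mathbf{M}\mathbf{r}$ and $\mathbf{M}\mathbf{r}_{\perp,j}$ on the basis vectors, so no substantive difference.
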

\begin{proof}
    We have:
    \begin{flalign}
        \label{eq:10}
        \mathbb{E}\Big[{\mathbf{M}} {\mathbf{r}}\Big] &= \mathbb{E}\Bigg[\exp\Bigg(-\frac{\big(\langle {\mathbf{r}}, {\mathbf{z}} \rangle\big)^2}{\alpha}\Bigg) \langle {\mathbf{r}}, {\mathbf{z}} \rangle {\mathbf{z}} \Bigg].
    \end{flalign}
    Suppose $\{{\mathbf{{r}}}_{\perp, j}\}_{j=1}^{d-1}$ is an orthonormal basis for the subspace orthogonal to $\mathbf{r}$; so $\langle {\mathbf{{r}}}_{\perp, j}, \mathbf{r} \rangle = 0$ $\forall$ $j \in [d-1]$ and $\langle {\mathbf{{r}}}_{\perp, j}, {\mathbf{{r}}}_{\perp, k} \rangle = \mathds{1}(j=k)$ $\forall$ $j, k \in [d-1]$. Then, note that:
    \begin{equation}
        {\mathbf{z}} = \langle \mathbf{{r}}, {\mathbf{z}} \rangle {\mathbf{r}} + \sum_{j = 1}^{d-1} \langle \mathbf{{r}}_{\perp, j}, {\mathbf{z}} \rangle \mathbf{{r}}_{\perp, j}.
    \end{equation}
    Since ${\mathbf{z}} \sim \mathcal{N}(\vec{0}_d, \mathbf{I}_d)$, $\langle \mathbf{{r}}, {\mathbf{z}} \rangle$ and $\{\langle \mathbf{{r}}_{\perp, j}, {\mathbf{z}} \rangle\}_{j=1}^{d-1}$ are i.i.d. $\mathcal{N}(0, 1)$. Using all of this in \cref{eq:10}, we get:
    \begin{flalign}
        \mathbb{E}\Big[{\mathbf{M}} {\mathbf{r}}\Big] &= \mathbb{E}\Bigg[\exp\Bigg(-\frac{\big(\langle \mathbf{{r}}, {\mathbf{z}} \rangle\big)^2}{\alpha}\Bigg) \big(\langle \mathbf{{r}}, {\mathbf{z}} \rangle\big)^2 \Bigg] \mathbf{{r}} + \sum_{j=1}^{d-1}\underbrace{\mathbb{E}\Bigg[\exp\Bigg(-\frac{\big(\langle \mathbf{{r}}, {\mathbf{z}} \rangle\big)^2}{\alpha}\Bigg) \langle \mathbf{{r}}, {\mathbf{z}} \rangle \langle \mathbf{{r}}_{\perp, j}, {\mathbf{z}} \rangle \Bigg]}_{=0 \text{ ($\langle \mathbf{{r}}, {\mathbf{z}} \rangle$ and $\langle \mathbf{{r}}_{\perp, j}, {\mathbf{z}} \rangle$ are independent)}} \mathbf{{r}}_{\perp, j}
        \\
        & = \mathbb{E}_{Z \sim \mathcal{N}(0,1)}\Bigg[\exp\Big(-\frac{Z^2}{\alpha}\Big) Z^2\Bigg] \mathbf{{r}} \quad \quad \quad  \quad \quad \quad \quad \quad \quad \text{(because $\langle \mathbf{{r}}, {\mathbf{z}} \rangle \sim \mathcal{N}(0,1)$)}
        \\
        & = \Bigg(\frac{1}{\sqrt{2 \pi}} \int_{-\infty}^{\infty} {z}^2 \exp\Big(-{z}^2 \Big(\frac{1}{\alpha} + \frac{1}{2}\Big)\Big) {dz}\Bigg) \mathbf{{r}}
        \\
        & = \Big(\frac{\alpha}{\alpha+2}\Big)^{3/2} \mathbf{{r}}.
    \end{flalign}
    So $\mathbf{{r}}$ is an eigenvector of $\mathbf{M}$ with eigenvalue $\big(\frac{\alpha}{\alpha+2}\big)^{3/2}$.
    \\
    \\
    Next, note that:
    \begin{multline}
        \mathbb{E}\Big[\mathbf{M} {\mathbf{r}}_{\perp, 1}\Big] = \underbrace{\mathbb{E}\Bigg[\exp\Bigg(-\frac{\big(\langle \mathbf{{r}}, {\mathbf{z}} \rangle\big)^2}{\alpha}\Bigg) \langle {\mathbf{r}}_{\perp, 1}, {\mathbf{z}} \rangle \langle {\mathbf{r}}, {\mathbf{z}} \rangle \Bigg]}_{=0} {\mathbf{r}} 
        + \mathbb{E}\Bigg[\exp\Bigg(-\frac{\big(\langle \mathbf{{r}}, {\mathbf{z}} \rangle\big)^2}{\alpha}\Bigg) \big(\langle {\mathbf{r}}_{\perp, 1}, {\mathbf{z}} \rangle\big)^2 \Bigg] {\mathbf{r}}_{\perp, 1}
        \\
        \sum_{j=2}^{d-1}\underbrace{\mathbb{E}\Bigg[\exp\Bigg(-\frac{\big(\langle \mathbf{{r}}, {\mathbf{z}} \rangle\big)^2}{\alpha}\Bigg) \langle {\mathbf{r}}_{\perp, 1}, {\mathbf{z}} \rangle \langle {\mathbf{r}}_{\perp, j}, {\mathbf{z}} \rangle \Bigg]}_{=0} {\mathbf{r}}_{\perp, j}.
    \end{multline}
    In the above equation, the first and last terms are $0$ because $\langle \mathbf{{r}}, {\mathbf{z}} \rangle$ and $\{\langle \mathbf{{r}}_{\perp, j}, {\mathbf{z}} \rangle\}_{j=1}^{d-1}$ are i.i.d. $\mathcal{N}(0, 1)$; using this fact again, we get:
    \begin{flalign}
        \mathbb{E}\Big[\mathbf{M} {\mathbf{r}}_{\perp, 1}\Big] & = \mathbb{E}_{Z \sim \mathcal{N}(0,1)}\Bigg[\exp\Big(-\frac{Z^2}{\alpha}\Big) \Bigg] \underbrace{\mathbb{E}_{\bar{Z} \sim \mathcal{N}(0,1)}\big[\bar{Z}^2\big]}_{=1}{\mathbf{r}}_{\perp, 1} 
        \\
        & = \Bigg(\frac{1}{\sqrt{2 \pi}} \int_{-\infty}^{\infty} \exp\Big(-{z}^2 \Big(\frac{1}{\alpha} + \frac{1}{2}\Big)\Big) {dz}\Bigg)  {\mathbf{r}}_{\perp, 1} 
        \\
        & = \Big(\frac{\alpha}{\alpha+2}\Big)^{1/2} {\mathbf{r}}_{\perp, 1}.
    \end{flalign}
    Similarly, we can show that for $j = \{2,\ldots,d-1\}$, we have:
    \begin{equation}
        \mathbb{E}\Big[\mathbf{M} {\mathbf{r}}_{\perp, j}\Big] = \Big(\frac{\alpha}{\alpha+2}\Big)^{1/2} {\mathbf{r}}_{\perp, j}.
    \end{equation}
    So for all $j \in [d-1]$, ${\mathbf{r}}_{\perp, j}$ is an eigenvector of $\mathbf{M}$ with eigenvalue $\big(\frac{\alpha}{\alpha+2}\big)^{1/2}$. Thus, the eigenvectors of $\mathbf{M}$ in the subspace orthogonal to $\mathbf{r}$ all have eigenvalues $\big(\frac{\alpha}{\alpha+2}\big)^{1/2}$.
\end{proof}


\section{Experimental Details}
\label{app:experimental-details}

In this section, we further discuss the experimental setup of \method's usage in both language and vision settings, specifically covering the following:

\begin{itemize}
    \item \Cref{app:add-baseline-detials}: Baseline Details.
    \item \Cref{app:language-hyperparameters}: Language Model Hyper-Parameters.
    \item \Cref{app:further-language-evaluation-details}: Language Model Evaluation Details.
    \item \Cref{app:vision-hyperparameters}: Vision Model Implementation Details.
\end{itemize}

\subsection{Baseline Details}
\label{app:add-baseline-detials}

In this section, we further discuss the baselines mentioned in \cref{sec: experiments}. 

\paragraph{Linear Probing:} In our vision experiments, we define linear probing as freezing the body of the pre-trained model, initializing a new (task-specific) head and batch normalization layers, and training only the new head and batch normalization layers. 


\paragraph{${\ell_2}$ regularization:} 
Based on \citet{kirkpatrick2016overcoming}, we perform $\ell_2$ regularization as a baseline in the data-oblivious setting. Specifically, the $\ell_2$-regularized loss is:
\begin{equation}
    \mathcal L(\bm{\theta}) = \sum_{i=1}^n f_i(\bm{\theta}) + \lambda \|\bm{\theta} - \bm{\theta}^*\|_2^2
\end{equation}
where $f_i$ is the $i^\text{th}$ sample's loss, $\bm{\theta}^*$ is the pre-trained model, and $\lambda$ is the regularization parameter. Intuitively, as $\lambda$ increases, our model stays closer to the pre-trained model, mitigating forgetting at the expense of target domain 
performance.

\paragraph{LoRA \cite{hu2022lora}:} Recently, \citet{biderman2024lora} showed  that fine-tuning language models with LoRA \cite{hu2022lora} effectively mitigates forgetting. 
Following a similar setup as us, \citet{biderman2024lora} fine-tuned language models on MetaMathQA \cite{yu2023metamath} and then evaluated the fine-tuned model on several general capability tasks, viz., HellaSwag \cite{zellers2019hellaswag}, ARC-c \cite{Clark2018ThinkYH}, and WinoGrande \cite{sakaguchi2019winogrande}, and one target domain task, viz., GSM8K \cite{cobbe2021training}. 
Further details about experimental hyper-parameters can be found in \cref{app:language-hyperparameters}.

\paragraph{WiSE-FT \cite{wortsman2021robust}:} We also consider model averaging as a baseline, specifically focusing on WiSE-FT \cite{wortsman2021robust}. WiSE-FT is simply the convex combination of the model parameters shared between the two tasks, while the task-specific parts are not averaged. Specifically, we perform model averaging between the pre-trained model and the fine-tuned model. The convex combination parameter $\alpha$ of WiSE-FT is set to $0.5$ in our experiments, as we cannot optimize $\alpha$ in the data-oblivious setting.

\paragraph{Learning without Forgetting \cite{li2016learning}:} {This work considers using a distillation-based loss to mitigate forgetting when the data from the training of previous tasks is not available. Initially, they record the responses $y_o$ 
on the new task images with the old tasks' model parameters, then train the model using a combination of fine-tuning loss, distillation loss, and model regularization. For new tasks, they use standard cross-entropy loss, 
while for old tasks they employ distillation loss 
that encourages the updated model's responses to match the recorded responses $y_o$ of the original model.  Their proposed method has a distillation loss scaling factor $\lambda_0$ and a 
temperature parameter $T$ in the distillation loss, introducing extra tunable parameters. 
Their loss induces joint optimization of shared parameters $\theta_s$, old task parameters $\theta_o$, and new task parameters $\theta_n$ using only new task data. Note that unlike \method, they also update the old task parameters $\theta_o$.}


\subsection{Language Model Hyper-Parameters}
\label{app:language-hyperparameters} 

For both Gemma 2 2B \cite{gemmateam2024gemma2improvingopen} and Llama 3.2 3B \cite{grattafiori2024llama3herdmodels}, we run hyper-parameter sweeps on learning rates for each baseline. For standard fine-tuning, $\ell_2$ regularization, and \method, we do a learning rate sweep in [1e-4, 2e-5, 1e-5, 5e-6], and for LoRA ($r=64$) we do a sweep in [2e-4, 2e-1], following the learning rates used in \cite{biderman2024lora}. We then select the learning rate that results in the best GSM8K \cite{cobbe2021training} accuracy, oblivious to general capability metrics. We report the hyper-parameters used for our Gemma 2 2B \cite{gemmateam2024gemma2improvingopen} experiments in \cref{tab:gemma-hyperparams} and for Llama 3.2 3B \cite{grattafiori2024llama3herdmodels} in \cref{tab:llama-hyperparams}.

\begin{table}[h!]
    \centering
    \caption{The hyper-parameters used to train Gemma 2 2B in our experiments. Note that the learning rate selected is based on the best results on GSM8K after fine-tuning the method on MetaMathQA.}
    \vspace{0.2cm}
    \begin{tabular}{lcccc}
        \toprule
        \textbf{Hyper-parameter} & \textbf{Standard Fine-tuning} & \textbf{LoRA ($r=64$)} & \textbf{\(\ell_2\)-Reg.} & \textbf{\methodbold (Ours)} \\
        \midrule
        Learning Rate & 1e-5 & 2e-4 & 5e-6 & 5e-6 \\
        \midrule
        \multirow{1}{*}{Learning Rate Scheduler} & \multicolumn{4}{c}{Cosine} \\
        \multirow{1}{*}{Batch Size} & \multicolumn{4}{c}{128} \\
        \multirow{1}{*}{Optimizer} & \multicolumn{4}{c}{AdamW} \\
        \multirow{1}{*}{Weight Decay} & \multicolumn{4}{c}{0.00} \\
        \multirow{1}{*}{Warmup Ratio} & \multicolumn{4}{c}{0.03} \\
        \multirow{1}{*}{Epochs} & \multicolumn{4}{c}{2} \\
        \multirow{1}{*}{Max Sequence Length} & \multicolumn{4}{c}{1024} \\
        \multirow{1}{*}{Seed} & \multicolumn{4}{c}{42} \\
        \bottomrule
    \end{tabular}
    \label{tab:gemma-hyperparams}
\end{table}


\begin{table}[h!]
    \centering
    \caption{The hyper-parameters used to train Llama 3.2 3B in our experiments. Note that the learning rate selected is based on the best results on GSM8K after fine-tuning the method on MetaMathQA.}
    \vspace{0.2cm}
    \begin{tabular}{lccccc}
        \toprule
        \textbf{Hyper-parameter} & \textbf{Standard Fine-tuning} & \textbf{LoRA ($r=64$)} & \textbf{\(\ell_2\)-Reg.} & \textbf{\methodbold (Ours)} \\
        \midrule
        Learning Rate & 2e-5 & 2e-4 & 1e-5 & 1e-5 \\
        \midrule
        \multirow{1}{*}{Learning Rate Scheduler} & \multicolumn{4}{c}{Cosine} \\
        \multirow{1}{*}{Batch Size} & \multicolumn{4}{c}{128} \\
        \multirow{1}{*}{Optimizer} & \multicolumn{4}{c}{AdamW} \\
        \multirow{1}{*}{Weight Decay} & \multicolumn{4}{c}{0.00} \\
        \multirow{1}{*}{Warmup Ratio} & \multicolumn{4}{c}{0.03} \\
        \multirow{1}{*}{Epochs} & \multicolumn{4}{c}{2} \\
        \multirow{1}{*}{Max Sequence Length} & \multicolumn{4}{c}{1024} \\
        \multirow{1}{*}{Seed} & \multicolumn{4}{c}{42} \\
        \bottomrule
    \end{tabular}
    \label{tab:llama-hyperparams}
\end{table}


For our WiSE-FT \cite{wortsman2021robust} model averaging experiments, we use $\alpha = 0.5$. For our LoRA \cite{hu2022lora} experiments, we use $\alpha = r = 64$. For $\ell_2$ regularization we use $\lambda = 1e-3$ which is taken from \cite{chen2024mofo}. Most training hyper-parameters for our language experiments are taken from \citet{chen2024mofo}, with the introduction of learning rate sweeps.

\subsection{Language Model Evaluation Details}
\label{app:further-language-evaluation-details}

As described in \cref{sec:llm-experiments-setup}, we create a commonsense reasoning metric composed of the following six metrics: ARC-e \cite{Clark2018ThinkYH}, ARC-c \cite{Clark2018ThinkYH}, HellaSwag \cite{zellers2019hellaswag}, PIQA \cite{Bisk2020}, SIQA \cite{sap2019social}, and OBQA \cite{OpenBookQA2018}. On top of the commonsense metric, we evaluate MMLU \cite{hendryckstest2021} and MBPP \cite{austin2021program} to estimate the general capabilities of a language model and to measure the effects of catastrophic forgetting when fine-tuning a model on MetaMathQA \cite{yu2023metamath}. We additionally use GSM8K \cite{cobbe2021training} to evaluate the target fine-tuning performance of a given fine-tuning method. We provide a brief describe each of these evaluation metrics:

\begin{enumerate}
    \item \textbf{HellaSwag} \cite{zellers2019hellaswag}: A benchmark designed to test commonsense reasoning. HellaSwag presents a context followed by several plausible endings, and the model must choose the most appropriate continuation.
    \item \textbf{ARC Easy} \cite{Clark2018ThinkYH}: A benchmark part of the AI2 reasoning challenge designed to test basic scientific reasoning and knowledge. ARC Easy presents 5,197 multiple-choice science questions drawn from grade 3-9 standardized tests, where each question typically includes a brief scientific scenario or statement followed by four possible answer choices. 
    \item \textbf{ARC Challenge} \cite{Clark2018ThinkYH}: A benchmark part of the AI2 reasoning challenge designed to test advanced scientific reasoning and knowledge application. ARC Challenge presents 2,590 multiple-choice science questions drawn from grade 3-9 standardized tests, where each question typically includes a scientific scenario or phenomenon followed by four possible answer choices. The questions in ARC Challenge are significantly more challenging than ARC Easy.
    \item \textbf{PIQA} \cite{Bisk2020}: A benchmark designed to evaluate physical commonsense understanding in natural language. PIQA presents a goal and two possible solutions, requiring models to choose the most appropriate solution that demonstrates an understanding of everyday physical interactions.
    \item \textbf{SIQA} \cite{sap2019social}: A benchmark designed to evaluate social commonsense intelligence and emotional reasoning. SIQA presents a social situation context followed by a question and three possible answers, requiring models to demonstrate an understanding of social interactions, emotional responses, and behavioral implications.
    \item \textbf{Open Book QA} \cite{OpenBookQA2018}: A benchmark designed to assess understanding of elementary science concepts in an open-book exam format. OBQA presents 5,957 multiple-choice questions paired with a small "book" of 1,326 core science facts, requiring models to combine these facts with common knowledge to arrive at correct answers.
    \item \textbf{MMLU} \cite{hendryckstest2021}: A benchmark designed to evaluate massive multitask language understanding. MMLU presents approximately 16,000 multiple-choice questions spanning 57 subjects including mathematics, philosophy, law, and medicine, requiring models to demonstrate broad knowledge and reasoning capabilities.
    \item \textbf{MBPP} \cite{austin2021program}: A benchmark designed to evaluate basic Python programming capabilities. The entire MBPP dataset presents 974 Python programming problems, where each problem includes a natural language task description and three test cases written as assert statements, requiring models to generate functionally correct Python code solutions. 
    \item \textbf{GSM8K} \cite{cobbe2021training}: A benchmark designed to evaluate multi-step mathematical reasoning capabilities. The GSM8K test set contains 1,000 grade school math word problems, where each problem requires 2-8 steps to solve using basic arithmetic operations (addition, subtraction, multiplication, division).
\end{enumerate}

We follow the standard evaluation process for each of these datasets and specifically use \texttt{lm-evaluation-harness} \cite{eval-harness} to evaluate our experiments.

\subsection{Vision Model Implementation Details}
\label{app:vision-hyperparameters}

We performed an extensive hyper-parameter search over six learning rates (\(\text{lrs} = [0.05, 0.01, 0.005, 0.001, 0.0005, 0.0001]\)), two models, and six datasets (i.e., 72 total runs per method) for standard fine-tuning, linear probing, and \method. We chose the best learning rates associated with the highest average score over all the target (fine-tuning) datasets. Since our method is data oblivious, we do not use the validation set of ImageNet-1K other than for evaluation. For training models with $\ell_2$-regularization, we adapted the same learning rates and other related hyperparameters used for standard fine-tuning. We searched for $\lambda$ using one dataset and ResNet50 model (\(\text{$\lambda$} = [0.002, 0.00001, 0.00002]\)) and chose 0.002 based on average accuracy over target data. We chose (\(\text{$\alpha$} = 0.05\)) for WiSE-FT following \citet{wortsman2021robust}. We present all the important training details in \Cref{table:app_vis_hyparams} for the ResNet models.


\begin{table}[!ht]
    \centering
    \caption{Hyperparameter configurations for finetuning ResNet18 and ResNet50 on the image classification datasets.}
    \label{tab:hyperparams}
    \renewcommand{\arraystretch}{1.2}
    \begin{tabular}{l l | c c c c c c}
        \toprule
        Model & Hyperparameters & CIFAR10 & CIFAR100 & Flowers102 & Caltech101 & Dogs & Cars \\
        \midrule
        & \# GPUs & \multicolumn{6}{c}{1 A6000} \\
        & Optimizer & \multicolumn{6}{c}{SGD} \\
        & LR Schedule & \multicolumn{6}{c}{Cosine (except for Linear probing)} \\
        & Weight Decay & \multicolumn{6}{c}{0.0005} \\
        & Seed & \multicolumn{6}{c}{42} \\
        & $\lambda$ for $\ell_2$-Reg. & \multicolumn{6}{c}{0.002} \\
        & $\alpha$ for WiSE-FT. & \multicolumn{6}{c}{0.05} \\
         & $\tau$ (temperature) for \method & \multicolumn{6}{c}{median loss} \\
         \midrule
          & Epochs & 20 & 25 & 25 & 30 & 30 & 30 \\
        \midrule
        \multirow{3}{*}{\rotatebox[origin=c]{90}{ResNet18}} 
        & LR-Standard fine-tuning & 5E-3 & 1E-2 & 5E-2 & 5E-3 & 1E-3 & 5E-2 \\
        & LR-Linear probing & 5E-3 & 5E-3 & 5E-2 & 1E-2 & 5E-3 & 5E-2 \\
        & LR-\method & 1E-3 & 5E-3 & 5E-2 & 1E-2 & 5E-3 & 1E-2 \\
        
        \midrule
        \multirow{3}{*}{\rotatebox[origin=c]{90}{ResNet50}} 
        & LR-Standard fine-tuning & 5E-3 & 1E-3 & 1E-2 & 5E-3 & 5E-4 & 5E-2 \\
        & LR-Linear probing & 5E-2 & 5E-2 & 5E-2 & 5E-2 & 1E-2 & 5E-2 \\
        & LR-\method & 5E-4 & 1E-3 & 1E-2 & 1E-2 & 5E-3 & 1E-2 \\
        \bottomrule
    \end{tabular}
    \label{table:app_vis_hyparams}
\end{table}


\paragraph{Hyperparameter Details for ViT-B/16.} 
{
We fine-tuned for 8 epochs on Food-101. 
We used a learning rate of 5e–5 for standard fine-tuning, $\ell_2$-regularization (with $\lambda = 0.002$), and our method \method. For linear probing, we used a higher learning rate of 1e–3. For LwF, we used the same learning rate as standard fine-tuning (5e–5), $T=2$, and $\lambda_0$ = 5e–4. Here we used a higher value of $\tau$ for \method and set it equal to $80^\text{th}$ percentile of the pre-trained loss values.}

\paragraph{Datasets}

\begin{enumerate}
    \item \textbf{ImageNet-1K} \cite{russakovsky2015imagenet} serves as the pre-training dataset for all our vision base models. It is a widely used large-scale image classification dataset, consisting of over a million images spanning 1000 classes.
    
    \item \textbf{CIFAR-10} \cite{Krizhevsky09learningmultiple} is a widely used dataset for image classification tasks. It consists of 60,000 32x32 color images divided into ten classes, with 6,000 images per class.
    
    \item \textbf{CIFAR-100} \cite{Krizhevsky09learningmultiple} extends CIFAR-10 by providing 100 classes containing 600 images each. This dataset is used for fine-grained image classification tasks.
    
   \item \textbf{Caltech101} \cite{Li2022} comprises images of a diverse range of objects across 101 categories with diverse set of image classes.

    \item \textbf{Flowers102} \cite{nilsback2008automated} comprises 102 categories of flowers, with each category containing between 40 to 258 images. This dataset is commonly used for fine-grained image classification and flower recognition tasks.
    
    \item \textbf{Cars} \cite{krause20133d} refers to the Stanford Cars dataset, which includes 16,185 images of 196 classes of cars. It provides a rich resource for fine-grained car classification task.
    
    \item \textbf{Dogs} \cite{parkhi2012dog} pertains to the Stanford Dogs dataset, containing 20,580 images of 120 breeds of dogs. This dataset is widely used for fine-grained dog breed classification and recognition tasks.


    \item {\textbf{Food101} \cite{bossard14} is a large-scale dataset for food classification containing 101 categories with 1,000 images per class, commonly used to evaluate models on fine-grained object recognition tasks.}
    
\end{enumerate}

\newpage

\section{Detailed Vision Results and Ablations}
\label{add-vis-results}
{
First, we present the detailed version of the results of Table \ref{table:main_vision_table} in Tables \ref{table:app_vision_results1} and  \ref{table:app_vision_results3}.
}

\begin{table}[ht!]
\centering
 \caption{\textbf{ResNets: Target accuracies on each of the six datasets for the results in \Cref{table:main_vision_table}.}}
 \vspace{0.7em}
\footnotesize
\begin{tabular}{llccccccc}
\toprule
 & \textbf{Method} & \textbf{CIFAR-10} & \textbf{CIFAR-100} & \textbf{Flowers-102} & \textbf{Caltech-101} & \textbf{Dogs} & \textbf{Cars} & \textbf{Average} \\
\midrule
\multirow{5}{*}{\rotatebox{90}{\textbf{ResNet18}}} 
 & Linear probing                  & 81.32 & 60.06 & 87.20 & 91.15 & 78.50 & 43.23 & 73.57 \\
 & Standard FT             & 96.15 & 83.42 & 92.45 & 94.02 & 80.47 & 87.91 & 89.07 \\
 & $\ell_2$-Regularization & 95.53 & 81.82 & 92.11 & 94.23 & 80.27 & 84.78 & 88.12 \\
 & WiSE-FT & 91.47 & 65.90 & 87.28 & 91.40 & 82.48 & 62.88 & 80.23 \\
 & \methodbold (Ours) & 88.25 & 78.95 & 90.01 & 93.05 & 86.20 & 67.17 & 83.93 \\
 \midrule
\multirow{5}{*}{\rotatebox{90}{\textbf{ResNet50}}} 
& Linear probing & 86.62 & 67.80 & 83.64 & 93.45 & 85.76 & 41.97 & 76.45 \\
& Standard FT & 97.61 & 86.11 & 91.74 & 96.02 & 89.26 & 89.94 & 91.78 \\
& $\ell_2$-Regularization & 97.50 & 85.77 & 91.67 & 95.85 & 89.29 & 89.42 & 91.58 \\
& WiSE-FT & 94.65 & 72.55 & 71.95 & 93.73 & 92.52 & 62.89 & 81.38 \\
& \methodbold (Ours) & 91.11 & 79.42 & 86.78 & 94.45 & 91.16 & 74.59 & 86.25 \\
\bottomrule
\end{tabular}
\label{table:app_vision_results1}
\end{table}

\begin{table}[ht!]
\centering
\caption{\textbf{ResNets: Top-1 ImageNet-1K accuracy after fine-tuning on each target dataset for the results in \Cref{table:main_vision_table}. }}
\vspace{0.7em}
\footnotesize
\begin{tabular}{llccccccc}
\toprule
& \textbf{Method} & \textbf{CIFAR-10} & \textbf{CIFAR-100} & \textbf{Flowers-102} & \textbf{Caltech-101} & \textbf{Dogs} & \textbf{Cars} & \textbf{Average} \\
\midrule
\multirow{5}{*}{\rotatebox{90}{\textbf{ResNet18}}} 
 & Linear probing & 69.76 & 69.76 & 69.76 & 69.76 & 69.76 & 69.76 & 69.76 \\
 & Standard FT & 19.93 & 0.39 & 6.48 & 34.17 & 56.38 & 0.17 & 19.58 \\
 & $\ell_2$-Regularization & 37.86 & 29.86 & 19.34 & 46.67 & 58.34 & 16.64 & 34.78 \\
 & WiSE-FT & 62.24 & 47.65 & 49.98 & 64.70 & 67.34 & 33.03 & 54.15 \\
 & \methodbold (Ours) & 69.02 & 52.64 & 67.80 & 68.32 & 67.78 & 65.74 & 65.21 \\
\midrule
\multirow{5}{*}{\rotatebox{90}{\textbf{ResNet50}}} 
& Linear probing & 79.02 & 79.02 & 79.02 & 79.02 & 79.02 & 79.02 & 79.02 \\
& Standard FT & 16.89 & 35.95 & 61.01 & 40.51 & 66.93 & 0.21 & 36.91 \\
& $\ell_2$-Regularization & 33.98 & 47.16 & 62.85 & 43.42 & 67.03 & 14.27 & 44.78 \\
& WiseFT ($\alpha=0.5$) & 61.40 & 73.04 & 76.33 & 73.25 & 77.36 & 8.55 & 61.65 \\
& \methodbold (Ours) & 78.26 & 75.13 & 78.60 & 73.38 & 78.55 & 72.64 & 76.09 \\
\bottomrule
\end{tabular}
\label{table:app_vision_results3}
\end{table}


{In \Cref{tab:tau_ablation}, we present a small ablation to compare the pre-training and fine-tuning performances with different values of $\tau$ in \method; recall that we prescribed selecting $\tau$ to be the median pre-training loss value. As we see, if we could tune $\tau$, then \method's results would be even better.}
 
\begin{table}[ht!]
\centering
\caption{{
\textbf{\method ablation with different values of $\tau$:} Pre-training and fine-tuning accuracies as a function of $\tau$ set to different percentiles of the pre-training losses. The model is ResNet-50, pre-training dataset is ImageNet-1K (IN-1K), and the fine-tuning dataset is Caltech101 (Target). So if we could tune $\tau$, then \method's results would further improve.}
}
\vspace{0.2cm}
\small
\begin{tabular}{cccc}
\toprule
\textbf{IN-1K Accuracy} & \textbf{Target Accuracy} & \textbf{Average} &\textbf{$\tau$- Percentile (\%)} \\
\midrule
68.51 & 91.15 & 79.83 & 10 \\
64.13 & 91.01 & 77.57 & 30 \\
54.72 & 91.80 & 73.26 & 50 \\
45.59 & 92.91 & 69.25 & 70 \\
20.51 & 94.02 & 57.27 & 90 \\
\bottomrule
\end{tabular}
\label{tab:tau_ablation}
\end{table}

{Further, in \Cref{fig:baseline_figure}, we present another ablation study where we compare \method with different values of $\tau$, WiSE-FT with different values of the convex combination parameter, and random selection where we train on a random subset of the fine-tuning data to limit the drift from the pre-trained model.}

\begin{figure}[!ht]
    \centering
    \includegraphics[width=0.5\columnwidth]{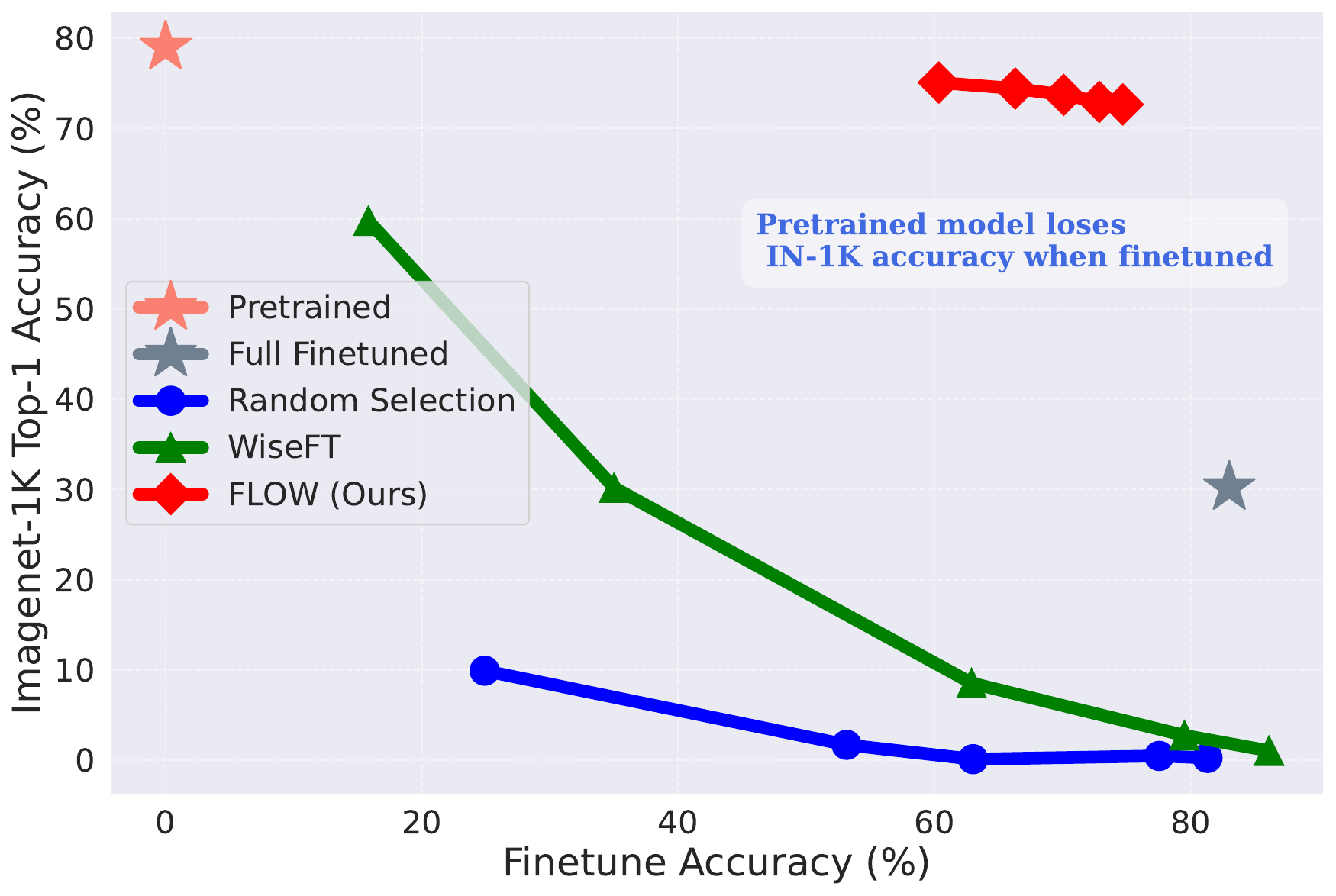}
     \caption{\textbf{Comparison of \methodbold with different values of $\tau$ and some other baselines with different hyper-parameter values.} This plot is for ResNet-50 on the Stanford cars dataset. \method's plot (in red) is with $\tau = \{10,20,30,40,50\}$ percentile of the per-sample losses. As the name \enquote{random selection} implies, we just pick a random subset of the fine-tuning data and train on this subset to limit the drift from the pre-trained model. To have some correspondence with our choice of $\tau$ for \method, we pick random $\{10,20,30,40,50\}$ \% of the data in \enquote{random selection}. As we see, \method significantly outperforms other methods.} 
    \label{fig:baseline_figure}
\end{figure}

{
\subsection{Comparison with a Distillation-Based Method for Mitigating Forgetting}
\label{app:dist}
Here, we compare our method \method against a distillation-based method for mitigating forgetting called \enquote{learning without forgetting} (LwF) \cite{li2016learning} in vision. More details about this method can be found in \Cref{app:add-baseline-detials}, but the important thing to note is that it updates the head corresponding to the pre-training data (which \method does not do) and also comes with more tunable parameters, additional memory, and evaluation overhead
compared to \method. We consider the ViT-B/16 \cite{dosovitskiy2020vit} model pre-trained on Imagenet-1K (IN-1K) and a single large fine-tuning dataset, Food101 \cite{bossard14}. The evaluation metric is the same as in \Cref{exp-vis}. Hyper-parameter details are in \Cref{app:vision-hyperparameters}. 

In Table \ref{table:lwf_table}, we list the accuracies of the pre-trained model, standard FT, linear probing, $\ell_2$-regularization, LwF, and \method. Note that \method outperforms LwF despite its simplicity. Specifically, \method achieves better performance on the forgetting front, while LwF does better on the fine-tuning task. 

\begin{table}[!htb]
\centering
 \caption{{{
 \textbf{Comparison with the distillation-based method (LwF) of \citet{li2016learning}.}
 \textbf{Bolded} and \underline{underlined} values indicate the \textbf{best} and \underline{second-best} 
 accuracies within each column (and for each model). 
 Deltas (in color) for IN-1K and target performance are computed w.r.t. the pre-trained and standard fine-tuned models. Note that \methodbold \textbf{outperforms LwF} despite not updating the head for the pre-training data, unlike LwF, and being more efficient than LwF, which comes with more tunable parameters, additional memory, and evaluation overhead.}
 }}
 \vspace{0.1 cm}
\small
\begin{tabular}{ll|l|l|c}
\toprule
 & \multicolumn{1}{c|}{{\textbf{Method}}} & \multicolumn{1}{c|}{{\textbf{IN-1K Accuracy}}} & \multicolumn{1}{c|}{{\textbf{Target Accuracy}}} & \textbf{Average} \\
\midrule
\multirow{6}{*}{\rotatebox{90}{\textbf{ViT-B/16}}} 
 & Pre-trained         & \textbf{81.10} \hfill {\tiny \textcolor{blue}{(+0.00)}} & --                 & --    \\
 & Standard FT         & 56.11 \hfill {\tiny \textcolor{red}{(-24.99)}}        & \underline{91.60} \hfill {\tiny \textcolor{blue}{(+0.00)}} & 73.86 \\
 & Linear Probe        & \textbf{81.10} \hfill {\tiny \textcolor{blue}{(+0.00)}} & 83.86 \hfill {\tiny \textcolor{red}{(-7.74)}}           & 82.48 \\
 & $\ell_2$-Reg.       & 59.18 \hfill {\tiny \textcolor{red}{(-21.92)}}        & \textbf{91.66} \hfill {\tiny \textcolor{blue}{(+0.06)}} & 75.42 \\
 & LwF                 & 76.39 \hfill {\tiny \textcolor{red}{(-4.71)}}         & 91.23 \hfill {\tiny \textcolor{red}{(-0.37)}}           & \underline{83.81} \\
 & \methodbold (Ours)  & \underline{77.94} \hfill {\tiny \textcolor{red}{(-3.16)}} & 90.57 \hfill {\tiny \textcolor{red}{(-1.03)}}           & \textbf{84.26} \\
\bottomrule
\end{tabular}
\label{table:lwf_table}
\end{table}

}

\section{Additional Language Model Results and Ablations}
\label{add-lang-results}

In this section, we discuss expanded results and further ablations of \method within our language experiments, specifically covering the following:

\begin{itemize}
    \item \Cref{app:extended-commonsense-results}: An expanded table of results on commonsense reasoning tasks along with other baselines.
    \item \Cref{app:token-wise-ablation}: An additional ablation on \textit{token}-wise weighting scheme for fine-tuning with language data.
    \item \Cref{app:extended-wa-results}: An expanded set of plots and results for the combination of \method with weight averaging techniques such as Wise-FT \cite{wortsman2021robust}.
\end{itemize}

\subsection{Extended Commonsense Reasoning Results}
\label{app:extended-commonsense-results}

As discussed in \cref{sec:llm-experiments-setup} and \cref{app:further-language-evaluation-details}, we evaluate \method and the other baselines on various commonsense reasoning tasks within fine-tuning with the procedure described in \cref{sec:llm-experiments-setup}. We include the exact results of these evaluation metrics for various baselines and \method in \cref{tab:commonsense-expanded}. We also include the results of commonsense reasoning metrics for the ablation combining \method with LoRA and $\ell_2$-regularization in \cref{tab:our-and-baseline-commonsense-expanded}.

\begin{table*}[h!]
    \centering
    \caption{\textbf{Extended commonsense reasoning metrics for \methodbold and other baselines within language modeling}. The performance on commonsense reasoning evaluations when fine-tuning Gemma 2 2B \cite{gemmateam2024gemma2improvingopen} and Llama 3.2 3B \cite{grattafiori2024llama3herdmodels} on MetaMathQA \cite{yu2023metamath}. We include the target domain evaluation GSM8K \cite{cobbe2021training} for convenience. The results show that \method can effectively mitigate catastrophic forgetting while still getting strong performance on our target fine-tuning task. }
    \begin{tabular}{lccccccccc}
         \toprule
         & \textbf{Method} &  \textbf{ARC-e} & \textbf{ARC-c} & \textbf{HellaSwag} & \textbf{PIQA} & \textbf{SIQA} &  \textbf{OBQA} & \textbf{Average} & \textbf{GSM8K} \\
        \midrule
        \multirow{6}{*}{\rotatebox{90}{Gemma 2 2B}}
         & Pre-trained & 80.18 & 46.84 & 54.95 & 78.67 & 51.33 & 31.40 & 57.23 & 24.49 \\
         & Standard Fine-tuning & 76.09 & 42.07 & 54.41 & 76.99 & 48.06 & 32.00 & 55.07 & 63.38 \\
         & WiSE-FT & 79.55 & 46.42 & 56.43 & 78.24 & 51.08 & 32.00 & 57.28 & 53.30 \\
         & LoRA ($r=64$) & 77.78 & 44.37 & 54.59 & 76.99 & 50.51 & 29.80 & 55.67 & 60.43 \\
         & $\ell_2$-Regularization  & 79.08 & 45.99 & 56.21 & 77.20 & 50.97 & 32.60 & 57.01 & 62.85 \\
         & \methodbold (Ours) & 79.76 & 47.18 & 56.23 & 77.69 & 51.48 & 33.20 & 57.59 & 62.55 \\
        \midrule
        \multirow{6}{*}{\rotatebox{90}{Llama 3.2 3B}}
         & Pre-trained & 74.54 & 42.15 & 55.31 & 76.66 & 47.03 & 31.20 & 54.48 & 26.01\\
         & Standard Fine-tuning & 70.03 & 34.22 & 52.02 & 74.16 & 45.24 & 28.40 & 50.68 & 66.95 \\
         & WiSE-FT & 75.63 & 40.79 & 55.18 & 76.93 & 47.34 & 31.40 & 54.54 & 57.01 \\
         & LoRA ($r=64$) & 71.38 & 37.88 & 55.01 & 76.55 & 47.39 & 30.40 & 53.10 & 63.84 \\
         & $\ell_2$-Regularization & 73.57 & 38.91 & 54.939 & 76.12 & 47.24 & 30.80 & 53.60 & 66.87 \\
         & \methodbold (Ours) &  74.96 & 39.68 & 55.39 & 76.01  & 47.80 & 32.00 & 54.30 & 65.58 \\
         \bottomrule
    \end{tabular}
    \label{tab:commonsense-expanded}
\end{table*}

\begin{table*}[h!]
    \centering
    \caption{\textbf{Extended commonsense reasoning metrics for combining \methodbold with other baselines}. The performance on commonsense reasoning evaluations when fine-tuning Gemma 2 2B \cite{gemmateam2024gemma2improvingopen} baselines in conjunction with \method on MetaMathQA \cite{yu2023metamath}. We include the target domain evaluation GSM8K \cite{cobbe2021training} for convenience. The results show that \method can effectively be used in conjunction with other methods that mitigate catastrophic forgetting.}
    \begin{tabular}{ccccccccc}
         \toprule
         \textbf{Method} &  \textbf{ARC-e} & \textbf{ARC-c} & \textbf{HellaSwag} & \textbf{PIQA} & \textbf{SIQA} &  \textbf{OBQA} & \textbf{Average} & \textbf{GSM8K} \\
        \midrule
        
         LoRA ($r=64$) & 77.78 & 44.37 & 54.59 & 76.99 & 50.51 & 29.80 & 55.67 & 60.43 \\
         LoRA ($r=64$) + \methodbold & 79.50 & 45.39 & 55.27 & 77.31 & 51.18 & 31.80 & 56.74 & 61.49 \\
         $\ell_2$-Regularization  & 79.08 & 45.99 & 56.21 & 77.20 & 50.97 & 32.60 & 57.01 & 62.85 \\
         $\ell_2$-Regularization + \methodbold & 79.67 & 47.10 & 56.38 & 77.48 & 51.13 & 33.40 & 57.53 & 62.02 \\
         \bottomrule
    \end{tabular}
    \label{tab:our-and-baseline-commonsense-expanded}
\end{table*}

\cref{tab:commonsense-expanded} shows a clear trend that \method, can strongly mitigate catastrophic forgetting in comparison to standard fine-tuning. For Gemma 2 2B \cite{gemmateam2024gemma2improvingopen}, we can see that \method only has $\sim$ 0.8\% reduction in the performance of the target fine-tuning while on average maintaining the commonsense reasoning abilities of the pre-trained model, a $\sim 2.52\%$ increase over standard fine-tuning. For Llama 3.2 3B \cite{grattafiori2024llama3herdmodels}, we can see that \method can again maintain the commonsense reasoning abilities of the base pre-trained model while only having a $\sim$1.4\% drop on target fine-tuning performance. Overall, \method strikes a strong balance between general capabilities and target fine-tuning performance compared to other baselines.

For experiments with Gemma 2 2B \cite{gemmateam2024gemma2improvingopen}, \method can on average maintain the best scores on commonsense reasoning tasks. Performing only $\sim 0.8 \%$ and $\sim 0.3 \%$ worse on GSM8K \cite{cobbe2021training} in comparison to standard fine-tuning and $\ell_2$ regularization, \method can improve on commonsense reasoning metrics by $\sim 2.42 \%$ and $\sim 0.58 \%$ respectively. Interestingly, in our Llama 3.2 3B \cite{grattafiori2024llama3herdmodels} experiments, we found that WiSE-FT \cite{wortsman2021robust} performed the strongest in preventing catastrophic forgetting of commonsense capabilities ($+0.04$ over the pre-trained model); however, this came at the cost of a significant decrease in GSM8K \cite{cobbe2021training} accuracy ($-9.94$ under standard fine-tuning). In comparison, \method effectively mitigated forgetting in commonsense reasoning metrics ($-0.18$ under the pre-trained model), while achieving significantly higher accuracy in GSM8K \cite{cobbe2021training} ($-1.37$ under standard fine-tuning).

\subsection{Token-wise Sample Weighting Ablations}
\label{app:token-wise-ablation}
{In the language experiments, \enquote{sample} for \method can be defined as an entire sequence or an individual token. The experiments in the main paper treat a sequence as a sample; in that case, the per-sample loss is the average loss over the tokens in the sequence. We call this \textit{sequence}-wise re-weighting. Instead, one could treat a token as a sample in which case the per-sample loss is just the token's loss. We call this \textit{token}-wise re-weighting. 
We run a small ablation on both \textit{sequence}-wise and \textit{token}-wise re-weighting by following a similar experimental setup as  \cref{sec:llm-experiments-setup}. We train a Gemma 2 2B \cite{gemmateam2024gemma2improvingopen} on MetaMathQA \cite{yu2023metamath} and evaluate it on several general capability and target domain evaluations. The results of this experiment are in \cref{tab:token-comparison}.}

\begin{table*}[h!]
    \centering
    \caption{The performance of Gemma 2B 2B on general capabilities metrics compared to target domain performance (GSM8K) when training on MetaMathQA. \textit{Pre-trained} is the base model performance of Gemma 2 2B, \textit{Standard} is the performance after full end-to-end fine-tuning, \textit{Sequence} is our sequence sample weighting schema with \method, and \textit{Token} is our token sample weighting schema with \method.  \textbf{Bold} and \underline{underlined} values indicate the \textbf{best} and \underline{second-best} results respectively within each evaluation metric.}
    \begin{tabular}{cccccccccc}
         \toprule
         \textbf{Method} &  \textbf{ARC-e} & \textbf{ARC-c} & \textbf{HellaSwag} & \textbf{PIQA} & \textbf{SIQA} &  \textbf{OBQA} & \textbf{MMLU} & \textbf{MBPP} & \textbf{GSM8K} \\
        \midrule
         Base & \textbf{80.18} & \underline{46.84} & \underline{54.95} & \textbf{78.67} & \underline{51.33} & 31.40 & \textbf{49.59} & \textbf{28.40} & 24.49 \\
         Standard & 76.09 & 42.07 & 54.41 & 76.99 & 48.06 & \textbf{32.80} & 45.59 & 16.80 & \textbf{63.38} \\
         Sequence & \underline{79.76} & \textbf{47.18} & \textbf{56.23} & 77.69 & \textbf{51.48} & \underline{33.20} & \underline{49.31} & \underline{26.80} & \underline{62.55} \\
         Token & 79.38 & 45.90 & 53.95 & \underline{78.29} & 51.28 & 31.80 & 48.75 & 22.00 & 23.73 \\
         \bottomrule
    \end{tabular}
    \label{tab:token-comparison}
\end{table*}

\begin{figure}[h!]
\centering
\begin{tabular}{cc}
\includegraphics[width=0.48\textwidth]{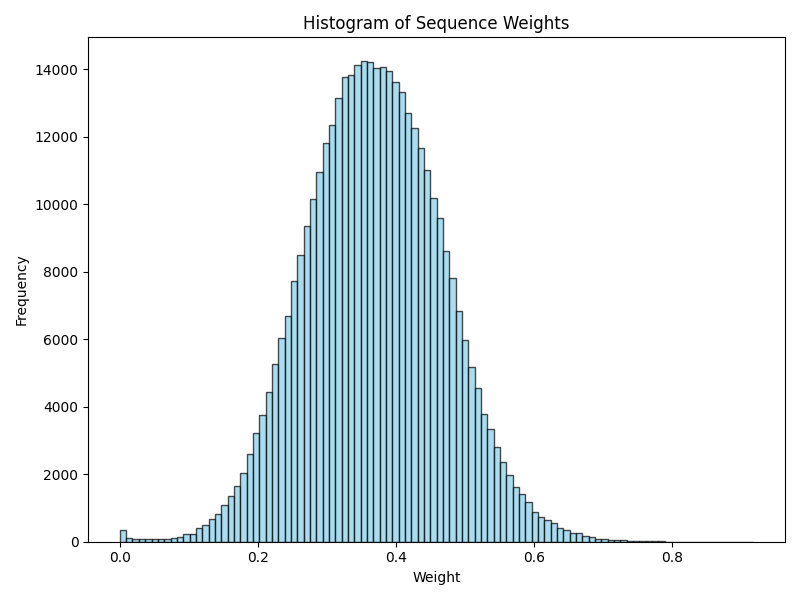} &
\includegraphics[width=0.48\textwidth]{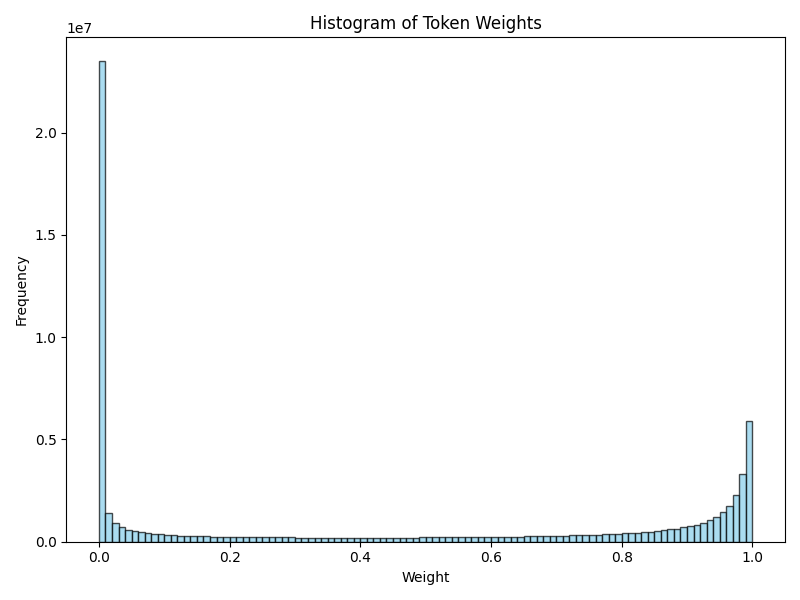} \\
\end{tabular}
\caption{Histograms comparing the sample-wise distribution of weights in \textit{sequence}-wise re-weighting schema for \method and token-wise distribution of weights \textit{token}-wise re-weighting schema for \method. The \textit{sequence}-wise weight distribution is given on the left, while the \textit{token}-wise weight distribution is given on the right.}
\label{fig:token-sequence-weight-comparison}
\end{figure}

While \textit{token}-wise sample re-weighting performs comparably or slightly worse than \textit{sequence}-wise sample re-weighting in terms of the catastrophic forgetting of general capabilities of Gemma 2 2B, it struggles to effectively learn the fine-tuning target domain of GSM8K. To further understand this problem, we compare the weight distributions between \textit{sequence}-wise and \textit{token}-wise re-weighting schema in \cref{fig:token-sequence-weight-comparison}. We can see that the \textit{sequence} weights appear Gaussian, while most of the \textit{token} weights are either 0 or 1. We speculate that \textit{token}-wise re-weighting will force any token not commonly appearing in the pre-training data to have a high loss or perplexity, which combined with our algorithm, will heavily down-weight them to almost zero. We further speculate that these tokens are essential to improving the performance of our target fine-tuning task and that using \method with a \textit{token}-wise scheme over-regularizes, preventing any meaningful learning of the target task. As \textit{sequence}-wise re-weighting significantly outperforms \textit{token}-wise re-weighting, we recommend using \textit{sequence}-wise re-weighting in \method for language models.

\subsection{Extended Weight Averaging Results}
\label{app:extended-wa-results}

As discussed in \cref{sec:results}, we further combine \method with WiSE-FT \cite{wortsman2021robust} to mitigate the effects of catastrophic forgetting when fine-tuning. In this section, we report the full results of combining \method and WiSE-FT to prevent catastrophic forgetting with Gemma 2 2B.

\begin{figure*}[h!]
    \centering
    \begin{tabular}{c@{\hspace{-2mm}}c@{\hspace{-2mm}}c}
        \includegraphics[width=0.3\textwidth]{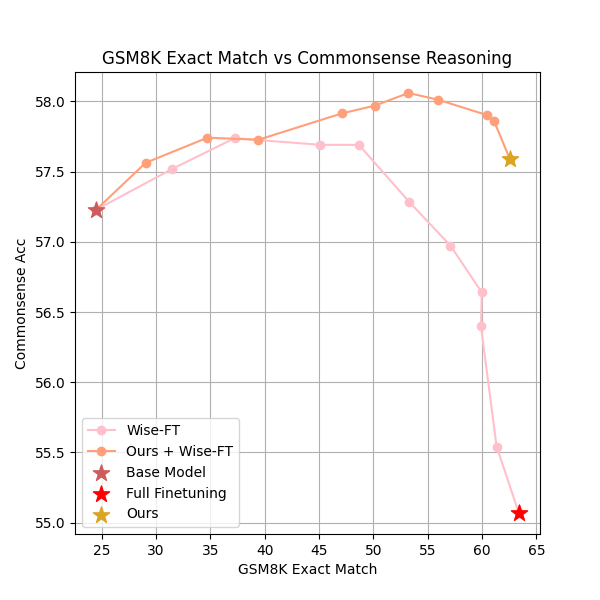} &
        \includegraphics[width=0.3\textwidth]{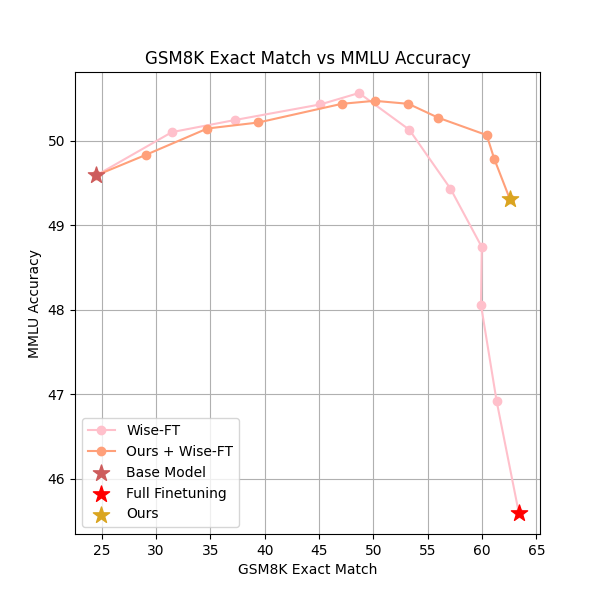} &
        \includegraphics[width=0.3\textwidth]{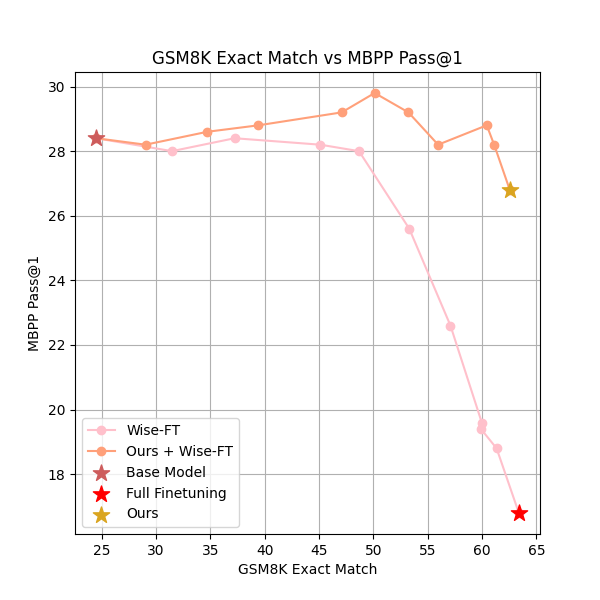} \\
        \noalign{\vspace{-3mm}} 
        \includegraphics[width=0.3\textwidth]{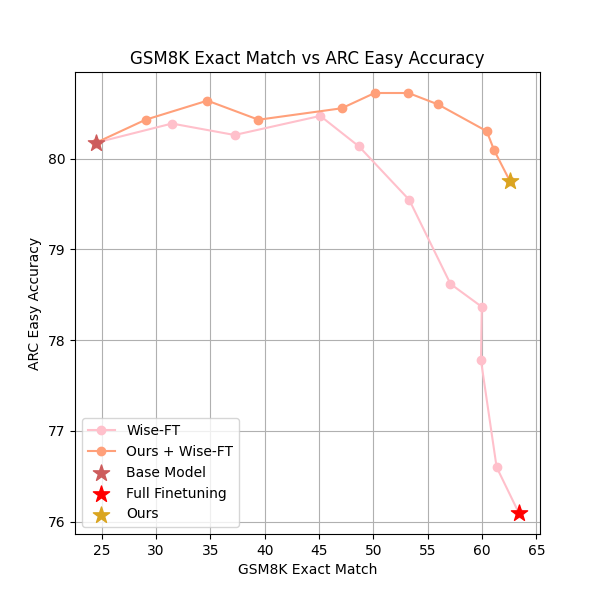} &
        \includegraphics[width=0.3\textwidth]{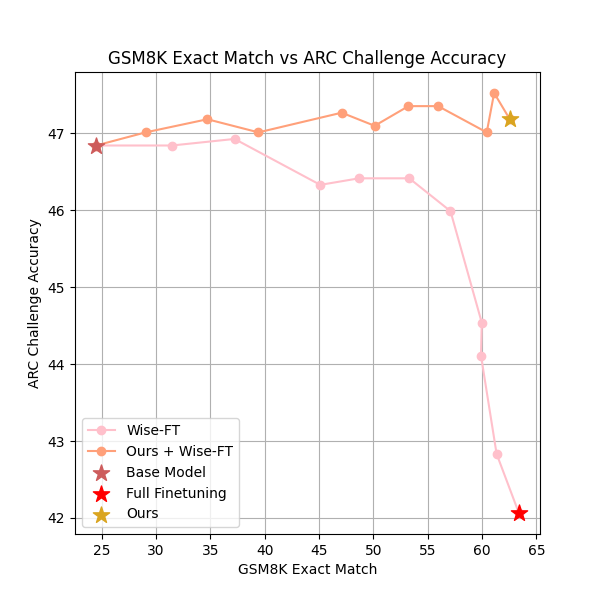} &
        \includegraphics[width=0.3\textwidth]{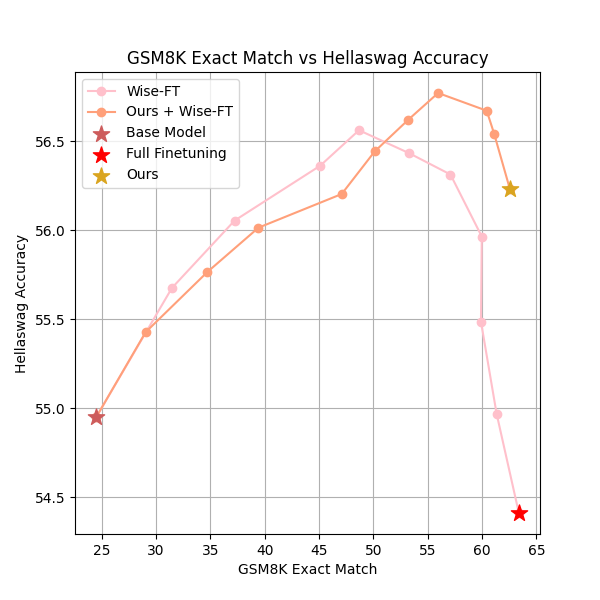} \\
        \noalign{\vspace{-3mm}} 
        \includegraphics[width=0.3\textwidth]{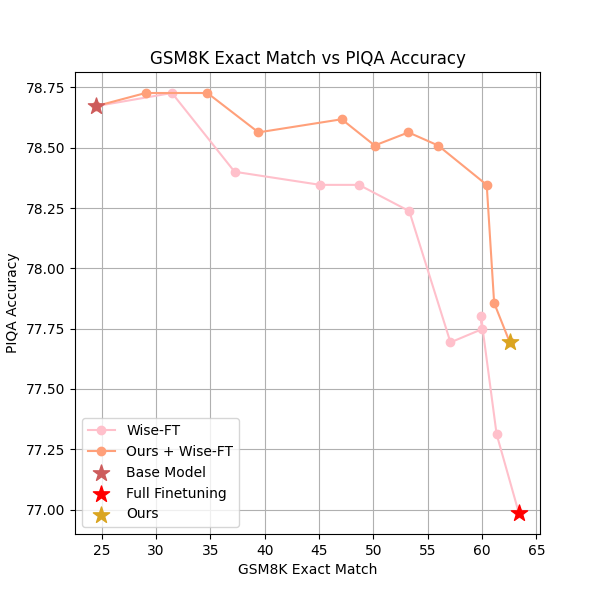} &
        \includegraphics[width=0.3\textwidth]{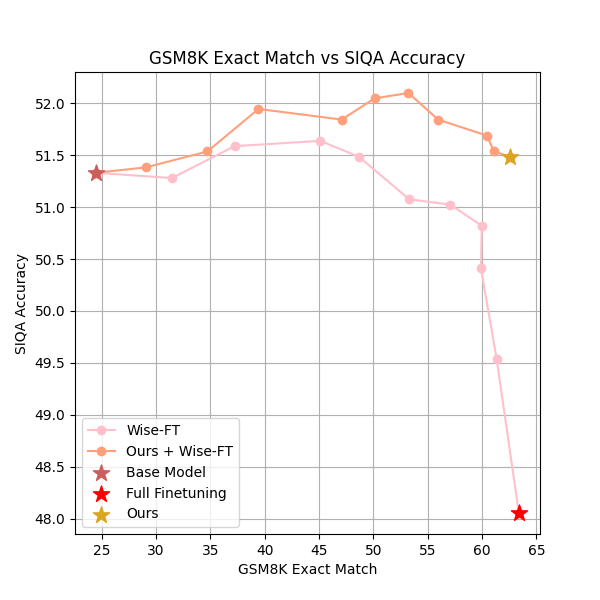} &
        \includegraphics[width=0.3\textwidth]{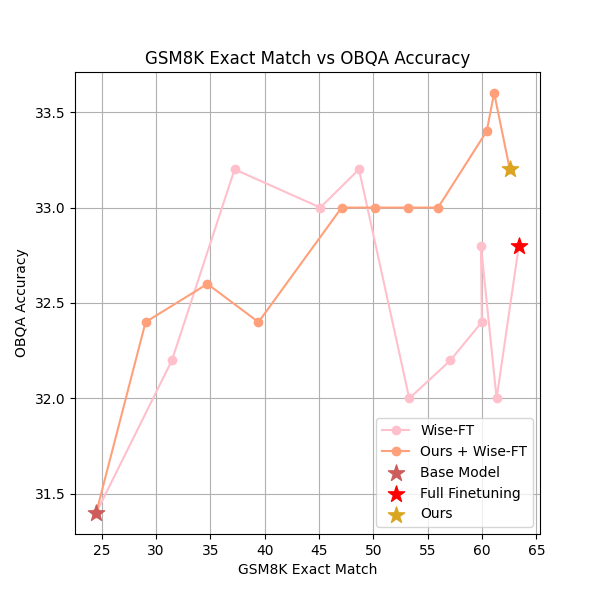} \\
        \noalign{\vspace{-3mm}} 
    \end{tabular}
    \caption{\textbf{\methodbold is complementary with model averaging (WiSE-FT) in language modeling.} We compare WiSE-FT \cite{wortsman2021robust} with a standard model fine-tuning and with \method after fine-tuning Gemma 2 2B on MetaMathQA. We use varying $\alpha \in [0,1]$ for WiSE-FT. The results indicate that combining Wise-FT with \method outperforms vanilla WiSE-FT with standard fine-tuning. }
    \label{fig:wiseft-and-ours-lang}
\end{figure*}


\end{document}